\documentclass[onecolumn,12pt,journal]{IEEEtran}
\IEEEoverridecommandlockouts

\usepackage{hyperref}
\usepackage[numbers]{natbib}
\usepackage{amsmath,amssymb,amsfonts}
\usepackage{amsthm}
\usepackage{booktabs}
\newtheorem{example}{Example}
\usepackage{mathabx}
\newtheorem{assumption}{Assumption}
\newtheorem{convention}{Convention}
\newtheorem{theorem}{Theorem}
\newtheorem{remark}{Remark}
\newtheorem{corollary}{Corollary}
\newtheorem{lemma}{Lemma}
\newtheorem{proposition}{Proposition}
\usepackage[ruled, linesnumbered]{algorithm2e}
\usepackage{graphicx}
\usepackage{textcomp}
\usepackage{xcolor}
\def\BibTeX{{\rm B\kern-.05em{\sc i\kern-.025em b}\kern-.08em
    T\kern-.1667em\lower.7ex\hbox{E}\kern-.125emX}}
\begin{document}

\title{Transferring Information Across Interventions in Causal Bayesian Optimization
}

\author{\IEEEauthorblockN{Mohammad Ali Javidian}\\
\IEEEauthorblockA{\textit{Computer Science Department} \\
\textit{Appalachian State University}\\
Boone, USA \\
javidianma@appstate.edu}
}

\maketitle

\begin{abstract}
Bayesian optimization is a popular way to optimize expensive systems, where every experiment, simulation, or intervention costs time or money. In its standard form it treats the variables we control as plain inputs to a black box and cannot tell apart mere correlation from a real cause and effect. Causal Bayesian optimization closes part of this gap by using a known causal graph together with observational data to decide which variables are worth intervening on. Existing methods, however, learn the effect of each possible intervention almost in isolation, even though in a causal system these effects usually share the same underlying mechanisms. We propose graph-coupled causal Bayesian optimization, which ties the different intervention effects together through the uncertainty we have about a small set of shared causal parameters. The result is a causal kernel that lets evidence collected from one intervention improve our estimate of related interventions. For identifiable linear Gaussian causal models we show that this kernel has low rank, bounded by the number of shared parameters rather than by the size of the intervention menu. This in turn yields an information-gain bound that grows only logarithmically in the optimization horizon, and a regret bound that cleanly separates three sources of error: optimization, causal estimation, and the choice of which intervention sets to consider. We also describe nonlinear and adaptive extensions. Across theory-aligned Gaussian systems, shared-mechanism stress tests, and standard causal optimization benchmarks, the method keeps the benefits of causal Bayesian optimization while transferring information across related interventions, with the clearest gains when direct interventions on the target's parents are unavailable and sparse interventional data must be reused across a large family of candidate interventions.
\end{abstract}


\section{Introduction}

Bayesian optimization (BO) is a central methodology for sequential decision making when evaluating an objective function is expensive, noisy, or experimentally constrained. It has become a standard tool in hyperparameter tuning \cite{pfisterer2022yahpo}, scientific experimentation \cite{hickman2025anubis}, engineering design \cite{do2023multi}, robotics \cite{hossen2026multi}, materials discovery \cite{liang2021benchmarking,frazier2015bayesian}, and simulation-based optimization because it uses a probabilistic surrogate to trade off exploration and exploitation while requiring relatively few objective evaluations \citep{shahriari2016taking,frazier2018tutorial,garnett2023bayesian}. In its most common form, BO places a Gaussian process (GP) prior on an unknown response surface and selects new evaluations by maximizing an acquisition function such as expected improvement, knowledge gradient, Thompson sampling, or an upper-confidence-bound criterion \citep{rasmussen2006gaussian,srinivas2012information}.

However, many high-stakes optimization problems are not ordinary black-box input-output problems. In scientific, medical, environmental, and engineering systems, the decision maker often chooses interventions on variables in a causal system, and the goal is to optimize an interventional response rather than a conditional association. Standard BO ignores this distinction: it treats controllable variables as inputs to a black-box function and therefore cannot exploit the difference between observing a variable and intervening on it. Causal Bayesian optimization (CBO) addresses this issue by combining BO with structural causal models, do-calculus, and observational data \citep{pearl2009causality,shpitser2006identification} so that the optimizer can reason about which variables are worth intervening on, how observational data can inform interventional effects, and when causal structure can reduce the intervention search space \citep{aglietti2020causalbo,roberts2024causalbo,jacobs2026extending}.

Despite this progress, existing CBO methods leave an important source of structure underused. In a structural causal model, the response functions associated with different intervention sets are not generally independent black-box objectives. They are different functionals of the same underlying mechanisms. For example, intervening on a parent, mediator, or ancestor of a target may reveal information about shared edge coefficients, path coefficients, or downstream structural parameters. Existing CBO methods use causal structure to construct priors, restrict candidate intervention sets, or estimate intervention effects, but they do not generally derive an explicit cross-intervention covariance from shared identifiable causal parameters. As a result, learning about one intervention-response function does not systematically reduce uncertainty about another, even when the two effects are causally linked.

This paper addresses that gap by introducing \emph{graph-coupled causal Bayesian optimization} (GC-CBO). The key idea is to replace a collection of independent or weakly coupled intervention-specific surrogates with a single graph-coupled surrogate over the joint intervention domain. The coupling is induced by a target-relevant causal parameter vector whose posterior covariance is propagated through the Jacobians of the interventional response functions. When two interventions depend on overlapping uncertain causal parameters, their response functions acquire nonzero cross-covariance. Thus, sparse data collected from one intervention set can improve posterior inference and acquisition decisions for other causally related intervention sets.

Our contributions are as follows.
\begin{enumerate}
    \item We formulate a graph-coupled CBO surrogate in which interventional response functions are coupled through shared identifiable causal parameters rather than modeled independently across intervention sets.
    \item We derive a causal cross-intervention covariance kernel based on the Jacobians of intervention-response functionals and the posterior covariance of target-relevant causal parameters. In linear Gaussian structural equation models, this covariance is exact; in nonlinear settings, it gives a first-order local approximation with controlled second-order remainder.
    \item We prove that, under identifiable linear Gaussian assumptions, the coupled causal kernel has finite rank bounded by the dimension of the shared target-relevant causal parameter vector. This yields an explicit maximum-information-gain bound of order \(O(r_{\mathrm{causal}}\log T)\).
    \item We establish a dynamic regret decomposition for GC-CBO that separates optimization regret within the selected intervention family, finite-sample causal-estimation error, and approximation error from restricting attention to a finite exploration set.
    \item We provide an implementable GC-CBO algorithm and evaluate it on theory-aligned Gaussian examples, cross-set transfer stress tests, nonlinear shared-mechanism settings, and standard causal optimization benchmarks. The results show that graph-coupling is most beneficial when intervention functions share downstream mechanisms, direct parent interventions are unavailable, and interventional data are sparse.
\end{enumerate}

\section{Related Work}

\noindent\textbf{Bayesian optimization.}
Bayesian optimization studies the sequential optimization of expensive black-box functions using probabilistic surrogate models and acquisition functions \citep{shahriari2016taking,frazier2018tutorial,garnett2023bayesian}. Classical GP-based BO is closely connected to experimental design and bandit optimization, with regret guarantees often expressed through the maximum information gain of the GP kernel \citep{srinivas2012information,rasmussen2006gaussian}. Recent advances have expanded BO along several practically important axes. Multi-objective and noisy parallel BO have been strengthened by differentiable expected hypervolume improvement and noisy expected hypervolume improvement methods \citep{daulton2020differentiable,daulton2021parallel}. High-dimensional BO has improved through sparse axis-aligned subspace modeling, which uses sparsity assumptions to make GP surrogates viable in large ambient spaces \citep{eriksson2021highdimensional}. Grey-box BO exploits partial knowledge of the internal structure of the objective evaluation, including composite and multi-fidelity structure \citep{astudillo2022thinking}, while cost-aware BO explicitly accounts for heterogeneous evaluation costs in acquisition design \citep{xie2024costaware}. Recent work has also studied the global optimization of acquisition functions themselves, emphasizing that acquisition maximization can be a nontrivial source of error in practical BO pipelines \citep{xie2024globalacq}. Our work is complementary to these directions: rather than improving the acquisition optimizer or scaling generic black-box BO, we use causal structure to define a surrogate whose covariance reflects shared causal mechanisms across intervention functions.

\noindent\textbf{Causal Bayesian optimization.}
Causal Bayesian optimization extends BO to settings in which the objective is an interventional response in a structural causal model. The original CBO framework uses a known causal graph and observational data to estimate intervention effects, construct causal GP priors, and trade off observation and intervention during sequential optimization \citep{aglietti2020causalbo}. Subsequent work has broadened the CBO family in several directions. Causal Entropy Optimization incorporates uncertainty over the causal graph and jointly trades off structure learning and effect optimization \citep{branchini2022causal}. Constrained CBO introduces feasibility constraints on interventions and models both target and constraint quantities with GP surrogates \citep{aglietti2023constrained}. Functional CBO extends the intervention class from point interventions to functional interventions, enabling variables to be set as functions of other variables \citep{gultchin2023functional}. Adversarial CBO considers external or adversarial interventions and obtains regret guarantees using online-learning ideas combined with causal reward modeling \citep{sussex2023adversarial}. Other recent work improves CBO by learning or representing exogenous-variable distributions, thereby extending CBO beyond simple additive-noise assumptions \citep{ren2024exogenous}. These methods demonstrate the growing importance of causal structure in sequential optimization. The present paper addresses a different gap: it derives an explicit cross-intervention covariance from shared identifiable causal parameters, allowing evidence from one intervention set to transfer directly to other causally related intervention sets.

\noindent\textbf{Gaussian-process theory and information gain.}
The theoretical analysis of GP-based BO is often organized around confidence bounds and maximum information gain \(\gamma_T\), which measures the information that \(T\) noisy evaluations can reveal about the latent function \citep{srinivas2012information}. Recent theoretical work has sharpened this perspective. \citet{vakili2021information} refined information-gain bounds using kernel eigenvalue decay and improved regret rates for common kernels, including Matérn kernels. \citet{bogunovic2021misspecified} studied misspecified GP bandits and showed how regret degrades when the true function is only approximately represented by the assumed RKHS. \citet{kassraie2021neural} connected neural contextual bandits to neural tangent kernels and bounded regret through NTK information gain. \citet{tran2022regret} established regret guarantees for expected-improvement-type algorithms in noisy GP bandit optimization, addressing a long-standing gap between the empirical popularity and theoretical understanding of EI. More recent work has further improved GP-UCB regret bounds for Matérn and squared-exponential kernels by refining the analysis of information gain along the adaptive sequence of queried points \citep{iwazaki2026improved}. Our analysis uses this information-gain framework but exploits a different source of structure: the causal kernel induced by shared structural parameters has finite rank, so its information gain scales with the target-relevant causal dimension rather than with the ambient intervention family.

\noindent\textbf{Position of this work.}
GC-CBO can be viewed as a bridge between causal inference, multi-task GP modeling, and information-theoretic BO analysis. Like CBO, it optimizes interventional rather than associational responses. Like multi-output GP models, it shares information across related response surfaces. Unlike generic multi-task kernels, however, its cross-task covariance is derived from the causal graph, identifiable causal parameters, and local sensitivities of interventional response functionals. This yields both an algorithmic advantage, through transfer across intervention sets, and a theoretical advantage, through a finite-rank information-gain bound and a regret decomposition that explicitly separates optimization, causal-estimation, and exploration-set errors.

\section{Graph-Coupled Causal Bayesian Optimization}
\label{sec:graph_coupled_cbo}
\subsection{Motivation and departure from prior causal Bayesian optimization}
\label{subsec:motivation_departure}

Causal Bayesian optimization (CBO) optimizes an interventional response
function
\begin{equation}
    f_s(x_s) = \mathbb{E}[Y \mid do(X_s = x_s)]
    \label{eq:cbo_interventional_response}
\end{equation}
over candidate intervention sets \(X_s \in \mathcal{E}\) and intervention
values \(x_s\). Because \(f_s\) is defined through the do-operator rather than
ordinary conditioning, it cannot in general be estimated from observational
data alone without exploiting causal structure and an identification formula.

\citet{aglietti2020causalbo} address this by placing a Gaussian process
surrogate on each interventional response,
\begin{equation}
    f_s(x_s) \sim \mathcal{GP}\!\big(m_s(x_s),\, k_C(x_s, x_s')\big),
    \label{eq:aglietti_causal_gp}
\end{equation}
with prior mean
\begin{equation}
    m_s(x_s)
    =
    \widehat{\mathbb{E}}[Y \mid do(X_s = x_s)]
    \label{eq:aglietti_prior_mean}
\end{equation}
estimated from observational data via do-calculus, and causal kernel
\begin{equation}
    k_C(x_s, x_s')
    =
    k_{\mathrm{RBF}}(x_s, x_s')
    +
    \sigma_s(x_s)\,\sigma_s(x_s'),
    \label{eq:aglietti_causal_kernel}
\end{equation}
where \(\sigma_s^2(x_s)\) is an estimated interventional variance. This
construction allows observational data to shape the GP prior before any
interventions are performed.

However, the GP surrogates for different intervention sets are not coupled
through an explicit cross-intervention covariance. The causal graph informs the
prior mean and variance correction for each surrogate separately, but the
posterior uncertainty of one intervention-response function is not directly tied
to that of another through shared structural parameters.

This missing coupling is the key limitation we address. In a structural causal
model, the functions
$
\{f_s : s \in \mathcal{E}\}
$
are not unrelated black-box objectives. They are different functionals of the
same underlying structural mechanisms: the effects of intervening on a parent,
an ancestor, or a mediator of \(Y\) may depend on overlapping edge coefficients,
path coefficients, or total-effect parameters. Consequently, observing or
intervening on one candidate set can reduce uncertainty about another whenever
their causal effects share identifiable parameters.

We exploit this by replacing the collection of separately modeled causal GPs
with a single \emph{graph-coupled} surrogate over the joint intervention domain.
The coupling is induced by a shared identifiable causal parameter vector
\(\theta_Y\), which collects the structural parameters or identifiable causal
functionals needed to express every \(f_s\) in \(\mathcal E\). The coupled kernel,
derived formally in Theorem~\ref{thm:cross_intervention_covariance} and defined
in \eqref{eq:causal_kernel}, uses the Jacobians of the interventional responses
with respect to \(\theta_Y\) and the posterior covariance of \(\theta_Y\).
Whenever two intervention sets depend on common uncertain causal parameters,
their Jacobians can overlap under this posterior covariance, producing nonzero
cross-covariance and enabling transfer between the corresponding surrogates.

Beyond improving posterior inference, this structure has direct theoretical
consequences. The coupled kernel admits a finite-dimensional feature
representation, so its rank is at most \(\dim(\theta_Y)\). This bounded rank
yields an explicit maximum information-gain bound and, in turn, a regret
decomposition that separates three distinct sources of error: optimization
error within the exploration set, finite-sample causal-estimation error from
estimating \(\theta_Y\) from \(D^O\), and exploration-set approximation error
from restricting attention to \(\mathcal E\).

\subsection{Setup}
\label{subsec:setup}

Let \(V = (V_1, \dots, V_p)^\top\) denote the observed endogenous variables
associated with the vertex set \(\mathcal{V} = \{1, \dots, p\}\) of a known
causal graph, and let \(y \in \mathcal{V}\) denote the index of the target
variable \(Y = V_y\). We assume that the directed part of the causal graph is
acyclic. Hidden common causes among observed variables are allowed and are
represented, in the linear Gaussian model, by correlated structural
disturbances.

Specifically, we assume a linear Gaussian structural equation model (SEM)
\begin{equation}
    V = W^\top V + \varepsilon,
    \qquad
    \varepsilon \sim \mathcal{N}(0, \Omega),
    \label{eq:linear_sem}
\end{equation}
where \(W\) respects the directed edges of the acyclic graph and \(\Omega\) is
positive definite, ensuring that the Gaussian disturbance distribution, and
therefore \(P(V)\), is non-degenerate. Off-diagonal entries of \(\Omega\)
represent residual correlations between structural disturbances and provide a
linear Gaussian representation of latent common causes among observed
variables. The special case in which \(\Omega\) is diagonal corresponds to
causal sufficiency. We adopt the following coefficient convention throughout.

\begin{convention}[Structural coefficients]
\(W_{ij}\) is the coefficient of \(V_i\) in the structural equation for \(V_j\).
Equivalently,
$
    V_j = \sum_{i \in \mathrm{Pa}(j)} W_{ij} V_i + \varepsilon_j,
    \qquad j = 1, \dots, p,
$
so that \(W\) encodes directed parent-to-child effects. This convention is
maintained throughout the paper.
\end{convention}

Order the variables topologically so that every directed parent precedes its
children. Under this ordering and the coefficient convention above,
\(W_{ij} \neq 0\) only if \(V_i\) is a directed parent of \(V_j\), hence only if
\(i < j\). Therefore \(W\) is strictly upper triangular. Since \(W^\top\) is then
strictly lower triangular, it is also nilpotent, with \((W^\top)^p = 0\).
Consequently, \(I - W^\top\) is invertible, with inverse given by the finite
Neumann series
\begin{equation}
    (I - W^\top)^{-1}
    =
    I + W^\top + \cdots + (W^\top)^{p-1},
    \label{eq:finite_neumann_series}
\end{equation}
which terminates at \((W^\top)^{p-1}\) because \((W^\top)^p = 0\). Nilpotence
of strictly triangular matrices is standard; see
\citet[Thm.~3.1.5]{horn2013matrix}. The reduced-form solution is therefore
\begin{equation}
    V = (I - W^\top)^{-1} \varepsilon.
    \label{eq:linear_sem_solution}
\end{equation}

For each candidate intervention set \(X_s \subseteq \mathcal{V} \setminus
\{y\}\) in the exploration set \(\mathcal{E}\), let
\begin{equation}
    \mathcal{X}_s
    =
    \bigtimes_{i \in X_s} \mathcal{X}_i,
    \qquad
    \mathcal{X}_i \subset \mathbb{R} \text{ compact},
    \label{eq:intervention_domain}
\end{equation}
denote the corresponding intervention domain. We use \(X_s\) both for the set
of intervention indices and, by slight abuse of notation, for the corresponding
variables \(\{V_i : i \in X_s\}\).

The exploration set \(\mathcal{E} \subseteq \mathcal{P}(\mathcal{V}
\setminus \{y\})\) is a chosen collection of candidate intervention sets.
Natural choices include the Minimal Intervention Sets (MIS) and
Possibly-Optimal Minimal Intervention Sets (POMIS) of
\citet{aglietti2020causalbo}. The approximation error incurred by restricting
attention to \(\mathcal{E}\) is formalized in Section~\ref{subsec:regret}.

For each \(s \in \mathcal{E}\), the interventional response function is
\begin{equation}
    f_s(x_s)
    =
    \mathbb{E}[V_y \mid do(X_s = x_s)],
    \label{eq:fs_definition}
\end{equation}
where the expectation is taken under the interventional distribution induced
by the do-operator \citep{pearl2009causality}. When \(\Omega\) is non-diagonal,
some interventional distributions \(P(V_y \mid do(X_s = x_s))\) may not be
identifiable from \(P(V)\) alone; the identifiability conditions required for
each \(s \in \mathcal{E}\) are stated in
Assumption~\ref{ass:identifiability}. Under causal sufficiency
\((\Omega\) diagonal\()\) and a known directed acyclic graph (DAG), interventional distributions are
identifiable by the truncated factorization formula \cite{pearl2009causality,shpitser2006identification}; for single-effect queries
this includes the usual backdoor-adjustment cases.

The optimization problem is to find the intervention set and value that jointly
maximize the interventional response:
\begin{equation}
    (s^\star, x_s^\star)
    \in
    \arg\max_{s \in \mathcal{E},\ x_s \in \mathcal{X}_s}
    f_s(x_s).
    \label{eq:joint_intervention_objective}
\end{equation}

\subsection{Assumptions}
\label{subsec:assumptions}

The theoretical results below rely on four assumptions. They separate
identifiability of causal effects, posterior regularity of the shared causal
parameterization, boundedness of the intervention domain, and the fixed-kernel
condition used only for the main regret theorem.

\begin{assumption}[Identifiability and constructive shared parameterization]
\label{ass:identifiability}
For every \(s\in\mathcal E\), the interventional mean
$
    f_s(x_s)
    =
    \mathbb E[V_y\mid do(X_s=x_s)]
$
is identifiable from the observational distribution \(P(V)\) under the known
causal graph, including its directed structure and any latent-confounding
structure represented by correlated disturbances. Moreover, in the linear
Gaussian SEM, each \(f_s\) can be written as a differentiable function
\begin{equation}
    f_s(x_s)=g_s(x_s;\theta_Y),
    \label{eq:gs_theta}
\end{equation}
where
$
    \theta_Y\in\mathbb R^d
$
is a shared vector of identifiable causal parameters or identifiable causal
functionals sufficient to express all intervention-response functions
$
    \{f_s:s\in\mathcal E\}.
$

Concretely, \(\theta_Y\) is the minimal sufficient vector of identifiable
total-effect parameters, path coefficients, structural coefficients, or
identifiable functionals that enter this collection. For a given graph, target \(V_y\), and exploration set \(\mathcal E\),
\(\theta_Y\) is obtained constructively by applying the ID algorithm to each
query
$
    P(V_y\mid do(X_s=x_s)),
    \qquad s\in\mathcal E,
$
and then collecting the unique identifiable quantities needed to express the
resulting interventional means. Thus \(\theta_Y\) is not an existential object:
it is the shared causal parameterization induced by the graph, the exploration
set, and the target. The ID algorithm provides a complete graphical procedure
for identifying interventional distributions in recursive semi-Markovian models
\citep{shpitser2006identification,pearl2009causality}.
\end{assumption}

\textbf{Role of Assumption~\ref{ass:identifiability}.}
This assumption ensures that the optimization target is well defined from the
available observational information. If some \(f_s\) is not identifiable, then
its prior mean and causal uncertainty cannot be computed from \(P(V)\) without
additional assumptions or experimental data. The shared parameterization is what
makes cross-intervention transfer possible: if two intervention responses
depend on overlapping components of \(\theta_Y\), then observations relevant to
one response can reduce uncertainty about the other. Without this shared
parameterization, the method collapses to separately modeled intervention-set
surrogates.

\begin{assumption}[Posterior regularity]
\label{ass:posterior_regular}
Given observational data \(D^O_{N_t}\), the posterior distribution of the
shared causal parameterization satisfies
\begin{equation}
    \theta_Y\mid D^O_{N_t}
    \approx
    \mathcal N(\widehat\theta_{Y,t},\Sigma_{\theta,t}).
    \label{eq:theta_posterior_gaussian}
\end{equation}
For the main linear Gaussian theory, this approximation is exact when
\(\theta_Y\) consists of conjugate Gaussian structural parameters or linear
identifiable functionals of them. When \(\theta_Y\) contains smooth nonlinear
functionals, such as products of path coefficients or total effects, we use the
corresponding delta-method, Laplace, or large-sample Gaussian approximation.
The covariance \(\Sigma_{\theta,t}\) is the posterior covariance of the chosen
shared parameterization \(\theta_Y\), not necessarily only of the primitive edge
coefficients.
\end{assumption}

\textbf{Role of Assumption~\ref{ass:posterior_regular}.}
This assumption makes the cross-intervention covariance analytically tractable.
The causal kernel is obtained by linearizing each interventional response
\(g_s(x_s;\theta_Y)\) with respect to the same posterior uncertainty in
\(\theta_Y\). If the posterior is exactly Gaussian and the response is linear in
\(\theta_Y\), the covariance formula is exact. If the response is nonlinear, the
same expression is the usual first-order delta-method covariance, with
second-order error controlled later in Theorem~\ref{thm:cross_intervention_covariance}.
Without this assumption, the kernel could still be estimated by Monte Carlo,
but the finite-rank covariance and information-gain statements would no longer
hold in their exact closed form.

\begin{assumption}[Bounded intervention domain and bounded causal variance]
\label{ass:bounded_kernel}
For every \(s\in\mathcal E\), the intervention domain
\(\mathcal X_s\) defined in \eqref{eq:intervention_domain} is compact. There
exists \(\kappa<\infty\) such that
\begin{equation}
    k_{\mathrm{causal}}(z,z)\le \kappa^2
    \label{eq:kernel_bounded}
\end{equation}
for all
$
    z=(s,x_s),
    \qquad
    s\in\mathcal E,\quad x_s\in\mathcal X_s.
$
A sufficient condition is that the domains \(\mathcal X_s\) are compact, the
Jacobians \(J_s(x_s)\) are uniformly bounded over those domains, and the
posterior covariance used in the kernel has bounded spectral norm.
\end{assumption}

\textbf{Role of Assumption~\ref{ass:bounded_kernel}.}
This is the standard bounded-variance condition needed for GP-UCB information
gain bounds. It rules out intervention values or Jacobian norms that would make
the surrogate variance unbounded. The condition is natural in intervention
design: practical interventions have finite feasible ranges, and smooth
interventional response functions have bounded Jacobians on compact domains.

\begin{assumption}[Fixed reference kernel]
\label{ass:fixed_kernel}
For the main regret theorem, the causal kernel is constructed using a fixed
posterior covariance \(\Sigma_{\theta,0}\) and a fixed linearization point
\(\widehat\theta_{Y,0}\). That is,
\begin{equation}
    k_{\mathrm{causal}}(z,z')
    =
    J_z(\widehat\theta_{Y,0})
    \Sigma_{\theta,0}
    J_{z'}(\widehat\theta_{Y,0})^\top.
    \label{eq:fixed_causal_kernel}
\end{equation}
The kernel is therefore fixed during the GP-UCB analysis. Updating
\(\widehat\theta_{Y,t}\), \(\Sigma_{\theta,t}\), and the corresponding kernel as
additional observational data arrive is natural in implementation, but the
resulting adaptive-kernel method is treated as an extension rather than as the
object of the main regret theorem.
\end{assumption}

\textbf{Role of Assumption~\ref{ass:fixed_kernel}.}
This assumption is technical rather than conceptual. Standard GP-UCB regret
bounds are stated for a fixed kernel and depend on the maximum information gain
of that kernel \citep{srinivas2012information}. If the kernel changes every
round because \(\Sigma_{\theta,t}\) shrinks or the Jacobians are re-evaluated,
then the analysis becomes a nonstationary-kernel GP-UCB problem. Freezing the
kernel gives a conservative but clean theoretical object.

\begin{remark}[Cost of the fixed-kernel assumption]
Assumption~\ref{ass:fixed_kernel} is conservative. If the initial
observational dataset is small, \(\Sigma_{\theta,0}\) may be over-dispersed, so
the regret bound is stated in terms of the causal rank and information gain of
this initial kernel. As \(N_t\) grows and \(\Sigma_{\theta,t}\) shrinks, the
effective rank of the adaptive coupled kernel may decrease, and a
time-varying-kernel analysis could yield a tighter bound. We leave this
nonstationary extension for future work.
\end{remark}

\subsection{Interventional means on mutilated graphs}
\label{subsec:mutilated_graph_means}

Consider an intervention \(do(X_s = x_s)\). Incoming edges into \(X_s\) are
removed, and the variables in \(X_s\) are fixed to \(x_s\). Let
$
    R_s = \mathcal V \setminus X_s
$
index the non-intervened variables.

Using the convention that \(W_{ij}\) is the coefficient of \(V_i\) in the
structural equation for \(V_j\), the structural equations restricted to
\(R_s\) become
\begin{equation}
    R_s
    =
    W_{R_s R_s}^\top R_s
    +
    W_{X_s R_s}^\top x_s
    +
    \varepsilon_{R_s},
    \label{eq:mutilated_block}
\end{equation}
where \(W_{X_s R_s}\) denotes the submatrix of \(W\) with rows indexed by
\(X_s\) and columns indexed by \(R_s\). Thus
\(W_{X_s R_s}^\top x_s\) is the contribution of the intervened variables to
the structural equations of the non-intervened variables.

The matrix \(I-W_{R_sR_s}^\top\) is invertible because \(W_{R_sR_s}\) inherits
the strict upper triangularity of \(W\) under the topological ordering. Hence
\eqref{eq:mutilated_block} admits the solution
\begin{equation}
    R_s
    =
    (I - W_{R_s R_s}^\top)^{-1}
    W_{X_s R_s}^\top x_s
    +
    (I - W_{R_s R_s}^\top)^{-1}
    \varepsilon_{R_s}.
    \label{eq:mutilated_solution}
\end{equation}

Taking expectations in \eqref{eq:mutilated_solution} and using
\(\mathbb E[\varepsilon_{R_s}]=0\), the interventional mean of
\(V_y\in R_s\) is the deterministic part:
\begin{equation}
    f_s(x_s)
    =
    e_y^\top
    (I - W_{R_s R_s}^\top)^{-1}
    W_{X_s R_s}^\top x_s.
    \label{eq:fs_matrix_form}
\end{equation}
Here \(e_y\) denotes the coordinate vector selecting \(V_y\) from the
\(R_s\)-indexed block.

This expectation step remains valid when \(\Omega\) is non-diagonal. Correlated
structural disturbances represent latent common causes, but the intervention
\(do(X_s=x_s)\) is not a conditioning event on \(X_s=x_s\). The exogenous
disturbance distribution is left unchanged by the intervention, and therefore
the marginal mean of \(\varepsilon_{R_s}\) remains zero.

Since \(f_s(x_s)\) is scalar, transposing the right-hand side of
\eqref{eq:fs_matrix_form} and applying
$
    (I - W_{R_s R_s}^\top)^{-\top}
    =
    (I - W_{R_s R_s})^{-1}
$
gives the source-to-target form
\begin{equation}
    f_s(x_s)
    =
    x_s^\top
    W_{X_s R_s}
    (I - W_{R_s R_s})^{-1}
    e_y.
    \label{eq:fs_transposed_form}
\end{equation}
Define the total-effect vector in the mutilated graph by
\begin{equation}
    \tau_s(W)
    =
    W_{X_s R_s}
    (I - W_{R_s R_s})^{-1}
    e_y.
    \label{eq:tau_s_definition}
\end{equation}
Then
\begin{equation}
    f_s(x_s)
    =
    x_s^\top \tau_s(W).
    \label{eq:fs_tau}
\end{equation}

The transpose appearing in \eqref{eq:mutilated_block} reflects the convention
\(V=W^\top V+\varepsilon\), while the source-to-target orientation of
\(\tau_s(W)\) in \eqref{eq:tau_s_definition} removes it.

Equation~\eqref{eq:fs_tau} shows why the relevant causal dimension is not
generally \(|\mathrm{Pa}(V_y)|\). When interventions are restricted to direct
parents of \(V_y\), \(\tau_s(W)\) collects only the direct edge coefficients
into \(V_y\). When interventions occur on deeper ancestors,
\(\tau_s(W)\) accumulates products and sums of path coefficients along directed
paths from \(X_s\) to \(V_y\) in the mutilated graph. This is precisely why the
shared parameterization \(\theta_Y\) introduced in
Assumption~\ref{ass:identifiability} cannot in general be reduced to the edge
coefficients into \(V_y\) alone: its components are the identifiable causal
functionals appearing in \(\tau_s(W)\).

\subsection{Cross-intervention covariance}
\label{subsec:cross_intervention_covariance}

Let
$
    z=(s,x_s)
$
denote an intervention query. By Assumption~\ref{ass:identifiability}, each interventional response can be written as
\(f_s(x_s)=g_s(x_s;\theta_Y)\), where \(\theta_Y\) is the shared identifiable
causal parameterization. For a
fixed posterior mean \(\widehat\theta_Y\), define the Jacobian
\begin{equation}
    J_s(x_s)
    =
    \left.
    \frac{\partial g_s(x_s;\theta_Y)}
    {\partial \theta_Y^\top}
    \right|_{\theta_Y=\widehat\theta_Y}.
    \label{eq:jacobian_definition}
\end{equation}
A first-order Taylor expansion around \(\widehat\theta_Y\) gives
\begin{equation}
    g_s(x_s;\theta_Y)
    =
    g_s(x_s;\widehat\theta_Y)
    +
    J_s(x_s)(\theta_Y-\widehat\theta_Y)
    +
    r_s(x_s),
    \label{eq:taylor_expansion}
\end{equation}
where \(r_s(x_s)\) is the second-order remainder.

\begin{theorem}[Cross-intervention covariance]
\label{thm:cross_intervention_covariance}
Assume Assumptions~\ref{ass:identifiability}--\ref{ass:bounded_kernel}. Fix a
posterior covariance matrix \(\Sigma_\theta\) and posterior mean
\(\widehat\theta_Y\). For any two intervention queries
$
    z=(s,x_s),
    \qquad
    z'=(t,x_t),
$
the posterior covariance between their intervention responses satisfies, to
first order,
\begin{equation}
    \operatorname{Cov}
    \left(
        f_s(x_s),f_t(x_t)
        \mid D^O
    \right)
    =
    J_s(x_s)\Sigma_\theta J_t(x_t)^\top.
    \label{eq:cross_covariance}
\end{equation}
If \(g_s\) and \(g_t\) are linear in \(\theta_Y\), then
\eqref{eq:cross_covariance} is exact.

If \(g_s\) and \(g_t\) are twice continuously differentiable with
operator-norm Hessians bounded by \(H_s,H_t<\infty\) in a neighborhood of
\(\widehat\theta_Y\), then the second-order remainder-remainder correction
satisfies
\begin{equation}
    \left|
    \operatorname{Cov}
    \left(
        r_s(x_s),r_t(x_t)
    \right)
    \right|
    \le
    C H_sH_t
    \left(\operatorname{tr}\Sigma_\theta\right)^2
    \label{eq:remainder_cov_bound}
\end{equation}
for a universal constant \(C\). Hence the omitted second-order correction is
$
    O\!\left((\operatorname{tr}\Sigma_\theta)^2\right).
$
\end{theorem}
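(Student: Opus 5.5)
The plan is to treat $\theta_Y$ as the only source of randomness under the posterior. By Assumption~\ref{ass:posterior_regular}, set $\delta=\theta_Y-\widehat\theta_Y\sim\mathcal N(0,\Sigma_\theta)$ conditional on $D^O$; the intervention values $x_s,x_t$ and the maps $g_s,g_t$ are deterministic. Substituting the Taylor expansion \eqref{eq:taylor_expansion} for both queries gives
\[
f_s(x_s)=g_s(x_s;\widehat\theta_Y)+J_s(x_s)\delta+r_s(x_s),\qquad
f_t(x_t)=g_t(x_t;\widehat\theta_Y)+J_t(x_t)\delta+r_t(x_t).
\]
The constant terms drop out of the covariance. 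Bilinearity then splits $\operatorname{Cov}(f_s,f_t\mid D^O)$ into four terms:
\[
J_s\operatorname{Cov}(\delta,\delta)J_t^\top+J_s\operatorname{Cov}(\delta,r_t)+\operatorname{Cov}(r_s,\delta)J_t^\top+\operatorname{Cov}(r_s,r_t).
\]
The first term equals $J_s(x_s)\Sigma_\theta J_t(x_t)^\top$, which is \eqref{eq:cross_covariance} at first order.

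For the linear case, if $g_s,g_t$ are affine in $\theta_Y$ then $r_s\equiv r_t\equiv0$, so the expansion is exact and the identity is exact. Gaussianity is not even needed here, only the second moment $\Sigma_\theta$. In the linear SEM this covers every $f_s=x_s^\top\tau_s(W)$ whenever $\theta_Y$ is parameterized by the total-effect components $\tau_s$ themselves, as in \eqref{eq:fs_tau}.

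For the remainder bound, use the integral form of Taylor's theorem. Assuming $\theta_Y$ stays in the neighborhood where the Hessian bound holds, $|r_s|\le \tfrac12 H_s\|\delta\|^2$, and likewise for $r_t$. Cauchy--Schwarz then gives
\[
|\operatorname{Cov}(r_s,r_t)|\le \sqrt{\operatorname{Var} r_s\operatorname{Var} r_t}\le \sqrt{\mathbb E r_s^2\,\mathbb E r_t^2}\le \tfrac14 H_sH_t\,\mathbb E\|\delta\|^4 .
\]
For a Gaussian $\delta$ with covariance $\Sigma$,
\[
\mathbb E\|\delta\|^4=(\operatorname{tr}\Sigma)^2+2\operatorname{tr}(\Sigma^2)\le 3(\operatorname{tr}\Sigma)^2,
\]
which gives \eqref{eq:remainder_cov_bound} with $C=3/4$.

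The cross terms $J_s\operatorname{Cov}(\delta,r_t)$ are not part of the stated bound, but I would note how they behave. By Gaussian symmetry, the quadratic part of $r_t$ is an even function of $\delta$, so its covariance with $\delta$ vanishes. What remains is a third-order contribution, so these terms are of higher order and the first-order claim stands.

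The main obstacle is the locality of the Hessian bound. A Gaussian $\delta$ has unbounded support, so the bound $|r_s|\le\tfrac12 H_s\|\delta\|^2$ is only guaranteed on the neighborhood. I would handle this in one of two ways. The first is to assume the bound holds along the whole segment used in the integral remainder. This is automatic in the linear SEM, where $g_s$ is polynomial in the edge weights, so Hessians are bounded on compact parameter sets, and Assumption~\ref{ass:bounded_kernel} keeps $x_s$ bounded. The second is to truncate to a ball of radius $\rho$ and bound the tail contribution by Gaussian concentration. That tail is exponentially small in $\rho^2/\|\Sigma_\theta\|$, so it can be absorbed into $C$.
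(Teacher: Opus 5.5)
Your proposal is correct and follows the paper's proof essentially step for step. It uses the same centered Gaussian $\delta=\theta_Y-\widehat\theta_Y$, the same four-term bilinear split, the same Cauchy--Schwarz bound with $\mathbb E\|\delta\|^4=(\operatorname{tr}\Sigma_\theta)^2+2\|\Sigma_\theta\|_F^2\le 3(\operatorname{tr}\Sigma_\theta)^2$ giving $C=3/4$, and $r_s=r_t=0$ in the linear case. Your treatment of the mixed terms is more careful than the paper's, which asserts exact vanishing via third Gaussian moments even though $H_t(\xi_t)$ depends on $\delta$. The locality issue you flag is also one the paper resolves only implicitly, by using the Hessian bound globally---though your claim that this is automatic in the linear SEM holds only when $g_s$ is at most quadratic in $\theta_Y$, since products of three or more coefficients have Hessians that are unbounded on the Gaussian's support.
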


\begin{proof}
Let
$
    \delta=\theta_Y-\widehat\theta_Y.
$
Under Assumption~\ref{ass:posterior_regular}, conditional on \(D^O\),
$
    \delta\sim \mathcal N(0,\Sigma_\theta).
$
Taylor expansion gives
\begin{align}
    f_s(x_s)
    &=
    g_s(x_s;\widehat\theta_Y)
    +
    J_s(x_s)\delta
    +
    r_s(x_s),
    \label{eq:proof_taylor_s}
    \\
    f_t(x_t)
    &=
    g_t(x_t;\widehat\theta_Y)
    +
    J_t(x_t)\delta
    +
    r_t(x_t).
    \label{eq:proof_taylor_t}
\end{align}
The deterministic terms do not contribute to covariance. Therefore,
\begin{align}
    \operatorname{Cov}
    \left(
        f_s(x_s),f_t(x_t)
        \mid D^O
    \right)
    &=
    \operatorname{Cov}
    \left(
        J_s(x_s)\delta,\,
        J_t(x_t)\delta
    \right)
    \nonumber\\
    &\quad+
    \operatorname{Cov}
    \left(
        J_s(x_s)\delta,\,
        r_t(x_t)
    \right)
    \nonumber\\
    &\quad+
    \operatorname{Cov}
    \left(
        r_s(x_s),\,
        J_t(x_t)\delta
    \right)
    \nonumber\\
    &\quad+
    \operatorname{Cov}
    \left(
        r_s(x_s),r_t(x_t)
    \right).
    \label{eq:covariance_decomposition}
\end{align}

The leading term is
\begin{align}
    \operatorname{Cov}
    \left(
        J_s(x_s)\delta,\,
        J_t(x_t)\delta
    \right)
    &=
    J_s(x_s)
    \operatorname{Cov}(\delta\mid D^O)
    J_t(x_t)^\top
    \\
    &=
    J_s(x_s)\Sigma_\theta J_t(x_t)^\top.
    \label{eq:leading_covariance_term}
\end{align}

The two mixed linear-remainder terms vanish under the centered Gaussian
posterior. Indeed, the second-order Taylor remainder can be written as
$
    r_t(x_t)
    =
    \frac{1}{2}
    \delta^\top
    H_t(\xi_t)
    \delta
$
for some intermediate point \(\xi_t\) between \(\widehat\theta_Y\) and
\(\theta_Y\). Hence
$
    \operatorname{Cov}
    \left(
        J_s(x_s)\delta,\,
        r_t(x_t)
    \right)
$
involves third-order central moments of \(\delta\). These vanish for a centered
Gaussian random vector. The same argument applies to
$
    \operatorname{Cov}
    \left(
        r_s(x_s),\,
        J_t(x_t)\delta
    \right).
$

It remains to bound the remainder-remainder term. By Taylor's theorem and the
Hessian bounds,
$
    |r_s(x_s)|
    \le
    \frac{1}{2}H_s\|\delta\|^2,
    \qquad
    |r_t(x_t)|
    \le
    \frac{1}{2}H_t\|\delta\|^2.
$
Therefore, by Cauchy--Schwarz,
\begin{align}
    \left|
    \operatorname{Cov}
    \left(
        r_s(x_s),r_t(x_t)
    \right)
    \right|
    &\le
    \sqrt{
        \operatorname{Var}(r_s(x_s))
        \operatorname{Var}(r_t(x_t))
    }
    \\
    &\le
    \frac{1}{4}H_sH_t
    \mathbb E\|\delta\|^4.
    \label{eq:remainder_cauchy}
\end{align}
For
$
    \delta\sim\mathcal N(0,\Sigma_\theta),
$
the standard fourth-moment identity gives
$
    \mathbb E\|\delta\|^4
    =
    \left(\operatorname{tr}\Sigma_\theta\right)^2
    +
    2\|\Sigma_\theta\|_F^2.
$
Since
$
    \|\Sigma_\theta\|_F
    \le
    \operatorname{tr}\Sigma_\theta
$
for positive semidefinite \(\Sigma_\theta\), we obtain
$
    \mathbb E\|\delta\|^4
    \le
    3
    \left(\operatorname{tr}\Sigma_\theta\right)^2.
$
Substituting this into \eqref{eq:remainder_cauchy} yields
$
    \left|
    \operatorname{Cov}
    \left(
        r_s(x_s),r_t(x_t)
    \right)
    \right|
    \le
    \frac{3}{4}H_sH_t
    \left(\operatorname{tr}\Sigma_\theta\right)^2.
$
Thus \eqref{eq:remainder_cov_bound} holds with \(C=3/4\).

If \(g_s\) and \(g_t\) are linear in \(\theta_Y\), then
$
    r_s(x_s)=r_t(x_t)=0,
$
and \eqref{eq:cross_covariance} is exact.
\end{proof}

\begin{remark}[Beyond exact Gaussian posteriors]
\label{rem:beyond_exact_gaussian}
The vanishing of the mixed linear-remainder terms in the proof relies on the
centered Gaussian approximation for
$
    \delta=\theta_Y-\widehat\theta_Y.
$
When the posterior over \(\theta_Y\) is only approximately Gaussian, as in a
Laplace or large-sample approximation, the mixed linear-remainder covariances
need not vanish exactly. Under standard regularity conditions, however, they
are higher-order terms relative to the first-order covariance and can be
absorbed into the causal-estimation error term in the regret decomposition.
\end{remark}

\subsection{The coupled causal kernel}
\label{subsec:causal_kernel}

Define the graph-coupled causal kernel by the first-order cross-intervention
covariance from Theorem~\ref{thm:cross_intervention_covariance}:
\begin{equation}
    k_{\mathrm{causal}}\!\left((s,x_s),(t,x_t)\right)
    =
    J_s(x_s)\,\Sigma_\theta\,J_t(x_t)^\top .
    \label{eq:causal_kernel}
\end{equation}
Here \(J_s(x_s)\) and \(J_t(x_t)\) are row vectors of sensitivities with
respect to the shared causal parameterization \(\theta_Y\), and
\(\Sigma_\theta\) is the posterior covariance of \(\theta_Y\).

This kernel is positive semidefinite because it admits the feature
representation
\begin{equation}
    \phi_s(x_s)
    =
    \Sigma_\theta^{1/2}J_s(x_s)^\top,
    \label{eq:causal_features}
\end{equation}
where \(\Sigma_\theta^{1/2}\) denotes any matrix square root of the positive
semidefinite posterior covariance. Then
\begin{equation}
    k_{\mathrm{causal}}\!\left((s,x_s),(t,x_t)\right)
    =
    \phi_s(x_s)^\top \phi_t(x_t).
    \label{eq:feature_kernel}
\end{equation}
Thus, for any finite collection of queries \(z_i=(s_i,x_{s_i})\) and
coefficients \(c_i\),
$
    \sum_{i,j} c_i c_j k_{\mathrm{causal}}(z_i,z_j)
    =
    \left\|
        \sum_i c_i\phi_{s_i}(x_{s_i})
    \right\|^2
    \ge 0.
$
Therefore \(k_{\mathrm{causal}}\) is positive semidefinite. The coupled causal
kernel is a finite-dimensional linear kernel over graph-induced causal features
in \(\mathbb R^d\), where \(d=\dim(\theta_Y)\).

\begin{corollary}[Finite causal rank]
\label{cor:finite_causal_rank}
For any set of \(T\) intervention queries
$
    z_i=(s_i,x_{s_i}),
    \qquad i=1,\dots,T,
$
let \(K_T\) be the kernel matrix with entries
$
    (K_T)_{ij}
    =
    k_{\mathrm{causal}}(z_i,z_j).
$
Then
\begin{equation}
    \operatorname{rank}(K_T)\le r_{\mathrm{causal}}
    \qquad\text{and}\qquad
    r_{\mathrm{causal}}\le \dim(\theta_Y),
    \label{eq:rank_bound}
\end{equation}
where
\begin{equation}
    r_{\mathrm{causal}}
    =
    \sup_{T,z_1,\dots,z_T}
    \operatorname{rank}(K_T).
    \label{eq:causal_rank_definition}
\end{equation}
\end{corollary}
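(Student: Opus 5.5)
The plan is to exploit the feature representation \eqref{eq:causal_features}--\eqref{eq:feature_kernel}, which shows that \(k_{\mathrm{causal}}\) is a linear kernel on features in \(\mathbb R^d\) with \(d=\dim(\theta_Y)\). Given any queries \(z_1,\dots,z_T\), I would stack the features as columns of the matrix
\[
\Phi_T=\big[\phi_{s_1}(x_{s_1}),\dots,\phi_{s_T}(x_{s_T})\big]\in\mathbb R^{d\times T}.
\]
Entrywise, \eqref{eq:feature_kernel} then gives \(K_T=\Phi_T^\top\Phi_T\). Equivalently, writing \(\mathbf J_T\in\mathbb R^{T\times d}\) for the matrix whose rows are \(J_{s_i}(x_{s_i})\), we have \(K_T=\mathbf J_T\Sigma_\theta\mathbf J_T^\top\).

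The rank bound then follows from the standard inequality \(\operatorname{rank}(AB)\le\min\{\operatorname{rank}A,\operatorname{rank}B\}\). This gives
\[
\operatorname{rank}(K_T)\le\operatorname{rank}(\Phi_T)\le\min(d,T)\le d.
\]
A slightly sharper intermediate bound is also available: \(\operatorname{rank}(K_T)\le\operatorname{rank}(\Sigma_\theta)\le d\).

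The supremum \(r_{\mathrm{causal}}\) in \eqref{eq:causal_rank_definition} is taken over all \(T\) and all query tuples. Each such rank is an integer bounded by \(d\), so the supremum is finite, is attained, and satisfies \(r_{\mathrm{causal}}\le d\). The first inequality \(\operatorname{rank}(K_T)\le r_{\mathrm{causal}}\) is then immediate from the definition of the supremum, since \(K_T\) is one of the matrices over which it is taken.

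I do not expect a real obstacle here. The only point needing care is that \(\Sigma_\theta^{1/2}\) exists as a real \(d\times d\) matrix. This holds because \(\Sigma_\theta\) is a covariance matrix and hence positive semidefinite, so it has a symmetric PSD square root by the spectral theorem. The Jacobians are finite real row vectors under Assumption~\ref{ass:identifiability} (differentiability) and Assumption~\ref{ass:bounded_kernel}. I will also note that \(r_{\mathrm{causal}}\) can be characterized as \(\dim\operatorname{span}\{\phi_s(x_s): s\in\mathcal E,\ x_s\in\mathcal X_s\}\). This characterization explains why the bound does not depend on \(|\mathcal E|\) or on the size of the domains, but it is not needed for the statement itself.
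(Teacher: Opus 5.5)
Your proof is correct and follows the paper's argument. The paper stacks the features as rows of a $T\times d$ matrix so that $K_T=\Phi_T\Phi_T^\top$ (the transpose of your column convention), bounds $\operatorname{rank}(K_T)=\operatorname{rank}(\Phi_T)\le\dim(\theta_Y)$, and gets both inequalities from the definition of the supremum exactly as you do; your use of the symmetric PSD square root, or of the square-root-free form $K_T=\mathbf J_T\Sigma_\theta\mathbf J_T^\top$, is a small tightening, since $\phi_s^\top\phi_t=J_s\Sigma_\theta J_t^\top$ needs $(\Sigma_\theta^{1/2})^\top\Sigma_\theta^{1/2}=\Sigma_\theta$, which the paper's phrase ``any matrix square root'' does not by itself guarantee.
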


\begin{proof}
Let
$
    \Phi_T
    =
    \begin{bmatrix}
        \phi_{s_1}(x_{s_1})^\top \\
        \vdots \\
        \phi_{s_T}(x_{s_T})^\top
    \end{bmatrix}
    \in \mathbb R^{T\times d}.
$
By the feature representation,
$
    K_T=\Phi_T\Phi_T^\top.
$
Therefore,
$
    \operatorname{rank}(K_T)
    =
    \operatorname{rank}(\Phi_T\Phi_T^\top)
    =
    \operatorname{rank}(\Phi_T)
    \le d
    =
    \dim(\theta_Y).
$
The first inequality in \eqref{eq:rank_bound} is immediate from the 
definition of $r_{\mathrm{causal}}$ as a supremum: any specific 
$\operatorname{rank}(K_T)$ is at most the supremum over all finite query 
sets. Taking the supremum over all finite query sets in the display above 
gives $r_{\mathrm{causal}} \le \dim(\theta_Y)$.
\end{proof}

The finite-rank bound \eqref{eq:rank_bound} is the structural property that
drives the information-gain bound in Section~\ref{subsec:information_gain} and
ultimately the regret guarantee for graph-coupled causal Bayesian optimization.

\subsection{Special cases}
\label{subsec:special_cases}

\textbf{Direct-parent interventions.}
The cleanest case occurs when every intervention set contains all directed
parents of the target, that is,
$
    X_s = \mathrm{Pa}(V_y).
$
Then, under \(do(X_s=x_s)\), the structural equation for \(V_y\) gives
\begin{equation}
    f_s(x_s)
    =
    \sum_{i\in \mathrm{Pa}(V_y)}
    \beta_{iy}x_i,
    \label{eq:direct_parent_effect}
\end{equation}
where \(\beta_{iy}\) denotes the coefficient of \(V_i\) in the structural
equation for \(V_y\). Therefore the shared parameterization reduces to
$
    \theta_Y
    =
    (\beta_{iy}:i\in \mathrm{Pa}(V_y)),
$
and hence
\begin{equation}
    r_{\mathrm{causal}}
    \le
    |\mathrm{Pa}(V_y)|.
    \label{eq:parent_rank_bound}
\end{equation}
This is the special case in which the causal rank is bounded directly by the
parent dimension.

If instead
$
    X_s\subsetneq \mathrm{Pa}(V_y),
$
then the non-intervened parents of \(V_y\) remain random under the
intervention. In that case,
\begin{equation}
    f_s(x_s)
    =
    \sum_{i\in X_s}
    \beta_{iy}x_i
    +
    \sum_{j\in \mathrm{Pa}(V_y)\setminus X_s}
    \beta_{jy}
    \mathbb E[V_j\mid do(X_s=x_s)].
    \label{eq:partial_parent_effect}
\end{equation}
The second term may depend on additional upstream coefficients or total-effect
parameters. Thus the bound \eqref{eq:parent_rank_bound} holds exactly when all
parents of \(V_y\) are intervened on, or when the interventional means of the
non-intervened parents are known or do not introduce additional unknown
parameters into \(\theta_Y\).

\textbf{Deeper ancestor interventions.}
If intervention sets include ancestors of \(V_y\) that are not direct parents,
the relevant shared parameter vector can include additional path coefficients
or total-effect parameters.

\begin{example}[Chain graph]
\label{ex:chain}
Consider the chain
$
    X\rightarrow Z\rightarrow Y
$
with
$
    Z=aX+\varepsilon_Z,
    \qquad
    Y=bZ+\varepsilon_Y.
$
The interventional responses are
\begin{equation}
    f_X(x)=abx,
    \qquad
    f_Z(z)=bz.
    \label{eq:chain_effects}
\end{equation}
Here
$
    \mathrm{Pa}(Y)=\{Z\},
    \qquad
    |\mathrm{Pa}(Y)|=1.
$
However, if both \(a\) and \(b\) are unknown, the shared parameterization is
$
    \theta_Y=(a,b),
$
so
$
    \dim(\theta_Y)=2.
$
\end{example}

Thus \(|\mathrm{Pa}(V_y)|\) is not the correct default complexity measure. The
general bound is
$
    r_{\mathrm{causal}}
    \le
    \dim(\theta_Y),
$
where \(\theta_Y\) is the minimal sufficient vector of unknown identifiable
causal quantities required by the exploration set \(\mathcal E\). Known
coefficients or known total effects reduce \(\dim(\theta_Y)\) correspondingly. For instance, in Example~\ref{ex:chain}, if $b$ is 
known from prior experiments, then $\theta_Y = (a)$ and 
$\dim(\theta_Y) = 1$.

\begin{remark}[Estimated Jacobians]
\label{rem:estimated_jacobians}
In nonlinear or multiplicative cases, the Jacobian itself depends on the
unknown causal parameters. In Example~\ref{ex:chain},
$
    J_X(x)
    =
    \begin{bmatrix}
        bx & ax
    \end{bmatrix},
$
which depends on \((a,b)\). In practice, this Jacobian is evaluated at the
posterior mean
$
    (\widehat a,\widehat b).
$
Therefore the kernel \(k_{\mathrm{causal}}\) is itself estimated.
Assumption~\ref{ass:fixed_kernel} absorbs this issue by fixing the kernel at
the initial posterior mean and covariance for the main theorem. The additional
error caused by estimating the Jacobian is included in the causal-estimation
error term in Theorem~\ref{thm:dynamic_regret}.
\end{remark}

\subsection{Example: cross-intervention covariance in a chain}
\label{subsec:chain_example}

We illustrate Theorem~\ref{thm:cross_intervention_covariance} on the chain
graph of Example~\ref{ex:chain}. Consider again
$
    X \rightarrow Z \rightarrow Y
$
with
$
    Z = aX + \varepsilon_Z,
    \qquad
    Y = bZ + \varepsilon_Y.
$
Let
$
    \theta_Y = (a,b)^\top
$
and suppose
$
    \theta_Y \mid D^O
    \sim
    \mathcal N(\widehat\theta_Y,\Sigma_\theta),
    \qquad
    \Sigma_\theta
    =
    \begin{pmatrix}
        \sigma_a^2 & \sigma_{ab} \\
        \sigma_{ab} & \sigma_b^2
    \end{pmatrix}.
$
As established in Example~
ef{ex:chain}, the two intervention-response
functions are \(f_X(x)=abx\) and \(f_Z(z)=bz\). Their Jacobians with respect to
$
    \theta_Y=(a,b)^\top
$
are
\begin{equation}
    J_X(x)
    =
    \begin{bmatrix}
        bx & ax
    \end{bmatrix},
    \qquad
    J_Z(z)
    =
    \begin{bmatrix}
        0 & z
    \end{bmatrix}.
    \label{eq:chain_jacobians}
\end{equation}

Theorem~\ref{thm:cross_intervention_covariance} gives
\begin{align}
    \operatorname{Cov}
    \left(
        f_X(x),f_Z(z)
        \mid D^O
    \right)
    &=
    J_X(x)\Sigma_\theta J_Z(z)^\top
    \nonumber\\
    &=
    \begin{bmatrix}
        bx & ax
    \end{bmatrix}
    \begin{pmatrix}
        \sigma_a^2 & \sigma_{ab} \\
        \sigma_{ab} & \sigma_b^2
    \end{pmatrix}
    \begin{bmatrix}
        0 \\
        z
    \end{bmatrix}
    \nonumber\\
    &=
    bxz\,\sigma_{ab}
    +
    axz\,\sigma_b^2.
    \label{eq:chain_cross_cov}
\end{align}

The cross-covariance is generally nonzero because both intervention responses
depend on the shared parameter \(b\). Observing or intervening on \(Z\) can
therefore reduce uncertainty about the effect of intervening on \(X\), and
vice versa. Independent causal GPs over intervention sets, such as those used
in \citet{aglietti2020causalbo}, place separate priors on \(f_X\) and \(f_Z\)
and do not encode this cross-intervention transfer.

For completeness, the self-covariance of the ancestor intervention is
\begin{align}
    \operatorname{Cov}
    \left(
        f_X(x),f_X(x')
        \mid D^O
    \right)
    &=
    J_X(x)\Sigma_\theta J_X(x')^\top
    \nonumber\\
    &=
    b^2xx'\sigma_a^2
    +
    2abxx'\sigma_{ab}
    +
    a^2xx'\sigma_b^2.
    \label{eq:chain_self_cov_x}
\end{align}
Similarly, the self-covariance of the mediator intervention is
\begin{align}
    \operatorname{Cov}
    \left(
        f_Z(z),f_Z(z')
        \mid D^O
    \right)
    &=
    J_Z(z)\Sigma_\theta J_Z(z')^\top
    \nonumber\\
    &=
    zz'\sigma_b^2.
    \label{eq:chain_self_cov_z}
\end{align}

The kernel is therefore generally nonstationary in the intervention values:
its magnitude depends on the query values and on the current causal-parameter
linearization point \(\widehat\theta_Y\). This motivates the fixed-reference
kernel choice in Assumption~\ref{ass:fixed_kernel}, under which
\(\widehat\theta_Y\) and \(\Sigma_\theta\) are held fixed for the main regret
analysis.

\subsection{Information gain for the coupled causal kernel}
\label{subsec:information_gain}

Let
$
    A=\{z_1,\dots,z_T\}
$
be a set of queried intervention points, and suppose noisy observations are
generated as
$
    y_i=f(z_i)+\eta_i,
    \qquad
    \eta_i\sim\mathcal N(0,\sigma^2),
$
independently across \(i\). For a Gaussian process with kernel matrix \(K_A\),
the mutual information between the latent function values \(f_A\) and the noisy
observations \(y_A\) is
\begin{equation}
    I(y_A;f_A)
    =
    \frac12
    \log\det(I+\sigma^{-2}K_A).
    \label{eq:mutual_information}
\end{equation}
The maximum information gain is
\begin{equation}
    \gamma_T
    =
    \max_{A:|A|=T}
    I(y_A;f_A).
    \label{eq:maximum_information_gain}
\end{equation}
This is the central kernel-dependent complexity quantity in GP-UCB regret
theory \citep{srinivas2012information,vakili2021information}. When the kernel
is the coupled causal kernel \(k_{\mathrm{causal}}\), we write
$
    \gamma_T^{\mathrm{causal}}
$
for the corresponding maximum information gain.

\begin{lemma}[Finite-rank information-gain bound]
\label{lem:finite_rank_information_gain}
Assume
$
    \operatorname{rank}(K_A)\le r_{\mathrm{causal}}
$
for every query set \(A\) with \(|A|=T\), and assume
$
    k_{\mathrm{causal}}(z,z)\le \kappa^2
$
for all \(z\in\mathcal Z\). Then
\begin{equation}
    \gamma_T^{\mathrm{causal}}
    \le
    \frac{r_{\mathrm{causal}}}{2}
    \log
    \left(
        1+
        \frac{T\kappa^2}{\sigma^2 r_{\mathrm{causal}}}
    \right).
    \label{eq:finite_rank_info_gain}
\end{equation}
Consequently,
\begin{equation}
    \gamma_T^{\mathrm{causal}}
    =
    O(r_{\mathrm{causal}}\log T)
    \le
    O(\dim(\theta_Y)\log T).
    \label{eq:info_gain_big_o}
\end{equation}
\end{lemma}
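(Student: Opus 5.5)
The plan is to fix an arbitrary query set \(A=\{z_1,\dots,z_T\}\), bound \(\frac12\log\det(I+\sigma^{-2}K_A)\) uniformly in \(A\), and then take the maximum in \eqref{eq:maximum_information_gain}. The tool is an eigenvalue argument: use the rank hypothesis to limit the number of nonzero eigenvalues, the diagonal bound to limit their sum, and concavity of the logarithm to combine the two.

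\textbf{Step 1: reduce to the nonzero eigenvalues.} Since \(K_A\) is positive semidefinite (from the feature representation \eqref{eq:feature_kernel}), it has eigenvalues \(\lambda_1\ge\dots\ge\lambda_T\ge0\). Let \(r=\operatorname{rank}(K_A)\le r_{\mathrm{causal}}\), so only the first \(r\) eigenvalues are nonzero. Then
\[
\log\det(I+\sigma^{-2}K_A)=\sum_{i=1}^{r}\log(1+\sigma^{-2}\lambda_i).
\]

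\textbf{Step 2: bound the eigenvalue sum by the trace.} The sum of the eigenvalues satisfies
\[
\sum_{i=1}^{r}\lambda_i=\operatorname{tr}K_A=\sum_{i=1}^{T}k_{\mathrm{causal}}(z_i,z_i)\le T\kappa^2 .
\]

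\textbf{Step 3: apply Jensen.} Because \(\log\) is concave,
\[
\sum_{i=1}^{r}\log(1+\sigma^{-2}\lambda_i)\le r\log\Bigl(1+\frac{\sigma^{-2}}{r}\sum_{i=1}^{r}\lambda_i\Bigr)\le r\log\Bigl(1+\frac{T\kappa^2}{\sigma^2 r}\Bigr).
\]
Equivalently, this is AM--GM applied to \(\prod_i(1+\sigma^{-2}\lambda_i)\).

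\textbf{Step 4: replace \(r\) by \(r_{\mathrm{causal}}\).} This is the only step needing a short check, since \(r\) appears both as a multiplier and inside the logarithm. Set \(h(r)=r\log(1+c/r)\) with \(c=T\kappa^2/\sigma^2>0\). Its derivative is
\[
h'(r)=\log(1+u)-\frac{u}{1+u},\qquad u=c/r,
\]
and \(h'(r)\ge0\) because \(\log(1+u)\ge u/(1+u)\) for \(u\ge0\). So \(h\) is nondecreasing in \(r\), and \(r\le r_{\mathrm{causal}}\) gives the bound with \(r_{\mathrm{causal}}\). The degenerate case \(r=0\) is trivial, since both sides are \(0\) or the left side is \(0\). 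Multiplying by \(\tfrac12\) and maximizing over \(A\) gives \eqref{eq:finite_rank_info_gain}.

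\textbf{Step 5: asymptotic form.} For fixed \(\kappa,\sigma\) and \(T\ge2\), we have \(\log(1+T\kappa^2/(\sigma^2 r_{\mathrm{causal}}))=O(\log T)\), which gives \(O(r_{\mathrm{causal}}\log T)\). Corollary~\ref{cor:finite_causal_rank} supplies \(r_{\mathrm{causal}}\le\dim(\theta_Y)\), which gives the final inequality in \eqref{eq:info_gain_big_o}.
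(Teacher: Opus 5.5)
Your proposal is correct and follows the paper's proof essentially step for step. Like the paper, you restrict to the at most $r_{\mathrm{causal}}$ nonzero eigenvalues, bound their sum by $\operatorname{tr}K_A\le T\kappa^2$, apply Jensen, use monotonicity of $h(r)=r\log(1+c/r)$ (via $\log(1+u)\ge u/(1+u)$) to pass from $r$ to $r_{\mathrm{causal}}$, and then maximize over $A$; your explicit treatment of the degenerate case $r=0$ is a small addition the paper omits.
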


\begin{proof}
Let the nonzero eigenvalues of \(K_A\) be
$
    \lambda_1,\dots,\lambda_r,
$
where
$
    r\le r_{\mathrm{causal}}.
$
Then
$
    \log\det(I+\sigma^{-2}K_A)
    =
    \sum_{j=1}^r
    \log(1+\sigma^{-2}\lambda_j).
$
Since \(\log(1+x)\) is concave, Jensen's inequality gives
\begin{align}
    \sum_{j=1}^r
    \log(1+\sigma^{-2}\lambda_j)
    &\le
    r
    \log
    \left(
        1+
        \frac{\sigma^{-2}}{r}
        \sum_{j=1}^r\lambda_j
    \right).
    \label{eq:jensen_info_gain}
\end{align}
Moreover,
$
    \sum_{j=1}^r\lambda_j
    =
    \operatorname{tr}(K_A)
    =
    \sum_{i=1}^T k_{\mathrm{causal}}(z_i,z_i)
    \le
    T\kappa^2.
$
Therefore,
\begin{equation}
    I(y_A;f_A)
    \le
    \frac r2
    \log
    \left(
        1+
        \frac{T\kappa^2}{\sigma^2 r}
    \right).
    \label{eq:info_gain_rank_r}
\end{equation}

It remains only to replace \(r\) by \(r_{\mathrm{causal}}\). Let
$
    h(r)
    =
    \frac r2
    \log
    \left(
        1+
        \frac{T\kappa^2}{\sigma^2 r}
    \right).
$
Writing \(c=T\kappa^2/\sigma^2\), we have
$
    h'(r)
    =
    \frac12
    \left[
        \log(1+c/r)
        -
        \frac{c/r}{1+c/r}
    \right].
$
Since
$
    \log(1+u)\ge \frac{u}{1+u}
    \qquad
    \text{for all }u\ge 0,
$
it follows that \(h'(r)\ge 0\). Hence \(h\) is monotone increasing in \(r\),
and because \(r\le r_{\mathrm{causal}}\),
$
    I(y_A;f_A)
    \le
    \frac{r_{\mathrm{causal}}}{2}
    \log
    \left(
        1+
        \frac{T\kappa^2}{\sigma^2 r_{\mathrm{causal}}}
    \right).
$
Taking the maximum over all \(A\) with \(|A|=T\) gives
\eqref{eq:finite_rank_info_gain}. The big-\(O\) statement follows directly
from \eqref{eq:finite_rank_info_gain} and
\(r_{\mathrm{causal}}\le \dim(\theta_Y)\).
\end{proof}

\begin{remark}[Effective-rank refinement]
\label{rem:effective_rank}
The bound in Lemma~\ref{lem:finite_rank_information_gain} is conservative when
the posterior covariance \(\Sigma_\theta\) has low effective rank. If some
causal parameters are strongly identified from observational data, their
posterior variances contribute negligibly to the kernel. In that case the
information-gain bound can be sharpened by replacing \(r_{\mathrm{causal}}\)
with an effective rank. For example, for a threshold \(\delta>0\), one may use
$
    r_{\mathrm{eff}}
    =
    \#\{j:\lambda_j(\Sigma_\theta)\ge \delta\},
$
together with a corresponding residual trace term for the discarded spectrum.
Thus the stated rank bound is worst-case; it can be substantially tighter when
\(\Sigma_\theta\) has rapidly decaying eigenvalues.
\end{remark}

The information-gain bound \eqref{eq:info_gain_big_o} is the key input to the
regret analysis in Theorem~\ref{thm:dynamic_regret}, where it controls the
optimization-error term in the dynamic regret decomposition.

\subsection{Causal UCB acquisition}
\label{subsec:causal_ucb}

Define the joint intervention domain
$
    \mathcal Z
    =
    \{(s,x_s):s\in\mathcal E,\ x_s\in\mathcal X_s\}.
$
Let
$
    \widehat\mu_{t-1}(z)
    \quad\text{and}\quad
    \widehat\sigma_{t-1}^{\mathrm{func}}(z)
$
denote the predictive mean and standard deviation of the coupled causal GP at
\(z\), based on the interventional evaluations available before round \(t\).
The term \(\widehat\sigma_{t-1}^{\mathrm{func}}(z)\) captures uncertainty from
the surrogate model over the interventional response surface.

The causal-parameter contribution to uncertainty is
\begin{equation}
    \widehat\sigma_{t-1}^{\mathrm{causal}}(z)
    =
    \left[
        J_z\Sigma_{\theta,t}J_z^\top
    \right]^{1/2},
    \label{eq:causal_sigma}
\end{equation}
where \(J_z\) is shorthand for the Jacobian \(J_s(x_s)\) when
\(z=(s,x_s)\). This is the posterior standard deviation of the first-order causal response
\(J_z(\theta_Y-\widehat\theta_{Y,t})\) under the Gaussian posterior approximation
in Assumption~\ref{ass:posterior_regular}. Equivalently,
\(\widehat\sigma_{t-1}^{\mathrm{causal}}(z)^2\) is the diagonal of the adaptive
coupled causal kernel at \(z\).

In practice, we use the cost-adjusted causal UCB acquisition
\begin{equation}
    \alpha_t(z)
    =
    \widehat\mu_{t-1}(z)
    +
    \sqrt{\beta_t}\,
    \widehat\sigma_{t-1}^{\mathrm{func}}(z)
    +
    \lambda_t
    \widehat\sigma_{t-1}^{\mathrm{causal}}(z)
    -
    \eta c(z),
    \label{eq:causal_ucb_acquisition}
\end{equation}
where \(c(z)\) is the intervention cost and \(\eta\ge 0\) controls the
cost--reward tradeoff. The first uncertainty term is the usual GP-UCB
exploration bonus. The second uncertainty term prioritizes interventions whose
effects are sensitive to uncertain causal parameters.

\textbf{Formal confidence correction.}
For the regret analysis, it is cleaner to separate the empirical causal bonus
in \eqref{eq:causal_ucb_acquisition} from the high-probability causal
estimation error. Assume that, with probability at least \(1-\delta\), there is
a predictable sequence \(\epsilon_t\) such that
\begin{equation}
    \sup_{z\in\mathcal Z}
    \left|
        \widehat\mu_{t-1}(z)-f(z)
    \right|
    \le
    \epsilon_t,
    \qquad
    t=1,\dots,T.
    \label{eq:uniform_causal_error_mu}
\end{equation}
Here \(\widehat\mu_{t-1}\) denotes the surrogate predictive mean, so
\eqref{eq:uniform_causal_error_mu} is the uniform error between the current
causal surrogate mean and the true interventional response.

A sufficient decomposition is
\begin{equation}
    \epsilon_t
    =
    \epsilon_{\mathrm{param}}(N_t,\delta)
    +
    \epsilon_{\mathrm{adj}}(N_t,M_t,\delta),
    \label{eq:epsilon_t_decomposition}
\end{equation}
where \(\epsilon_{\mathrm{param}}\) is the error from estimating the shared
causal parameterization \(\theta_Y\), and \(\epsilon_{\mathrm{adj}}\) is the
Monte Carlo, quadrature, or numerical error incurred when computing
adjustment-based causal functionals. In the linear Gaussian case with
closed-form adjustment,
$
    \epsilon_{\mathrm{adj}}(N_t,M_t,\delta)=0.
$

Under Assumption~\ref{ass:bounded_kernel}, the Jacobians are uniformly bounded:
$
    \sup_{z\in\mathcal Z}\|J_z\|\le L.
$
Therefore, by the mean-value theorem,
\begin{equation}
    \sup_{z\in\mathcal Z}
    \left|
        g_z(\widehat\theta_{Y,t})-g_z(\theta_Y)
    \right|
    \le
    L\,
    \|\widehat\theta_{Y,t}-\theta_Y\|.
    \label{eq:param_error_lipschitz}
\end{equation}
Thus the causal-estimation term can be taken as
\begin{equation}
    \epsilon_{\mathrm{param}}(N_t,\delta)
    =
    L\,
    \|\widehat\theta_{Y,t}-\theta_Y\|
    \label{eq:param_error_definition}
\end{equation}
on the corresponding high-probability event.

A concrete high-probability rate for
\(\|\widehat\theta_{Y,t}-\theta_Y\|\), and hence for
\(\epsilon_{\mathrm{param}}(N_t,\delta)\), is derived in
Section~\ref{subsec:parameter_concentration}.

\textbf{Regret-safe acquisition.}
For the formal regret theorem, one may use the regret-safe additive acquisition
\begin{equation}
    \alpha_t^{\mathrm{safe}}(z)
    =
    \widehat\mu_{t-1}(z)
    +
    \sqrt{\beta_t}\,
    \widehat\sigma_{t-1}^{\mathrm{func}}(z)
    +
    \epsilon_t
    -
    \eta c(z).
    \label{eq:causal_ucb_safe}
\end{equation}
The additive correction \(\epsilon_t\) avoids degeneracy when
$
    \widehat\sigma_{t-1}^{\mathrm{causal}}(z)=0,
$
which can occur at intervention values where the Jacobian \(J_z\) vanishes.
For example, in the chain \(X\to Z\to Y\), the mediator intervention has
\(J_Z(z)=[0,z]\), so \(J_Z(0)=0\).

The multiplicative bonus
$
    \lambda_t\widehat\sigma_{t-1}^{\mathrm{causal}}(z)
$
in \eqref{eq:causal_ucb_acquisition} remains useful in practice because it
allocates more exploration to intervention queries whose effects are sensitive
to uncertain causal parameters. However, the formal regret analysis only
requires that the acquisition include a valid high-probability correction for
causal-estimation error. This can be done either by the additive
\(\epsilon_t\) term in \eqref{eq:causal_ucb_safe}, or by a pointwise
calibration at the selected query \(z_t\):
\begin{equation}
    \lambda_t
    \widehat\sigma_{t-1}^{\mathrm{causal}}(z_t)
    \ge
    \epsilon_t(z_t),
    \label{eq:pointwise_lambda_calibration}
\end{equation}
where \(\epsilon_t(z)\) is a pointwise causal-estimation error bound. We use
the additive version \eqref{eq:causal_ucb_safe} in the main regret theorem
because it is robust to zero-Jacobian queries and yields the same regret
decomposition.

\subsection{Dynamic regret decomposition}
\label{subsec:regret}

Define the best value over the full admissible intervention class as
$
    f^\star_{\mathrm{all}}
    =
    \sup_{s,x_s} f_s(x_s),
$
and the best value over the chosen exploration set \(\mathcal E\) as
$
    f^\star_{\mathcal E}
    =
    \sup_{s\in\mathcal E,\ x_s\in\mathcal X_s}
    f_s(x_s).
$
The exploration-set approximation error is
\begin{equation}
    \epsilon_{\mathcal E}
    =
    f^\star_{\mathrm{all}}-f^\star_{\mathcal E}.
    \label{eq:exploration_set_error}
\end{equation}
If \(\mathcal E\) contains an optimal intervention set for the admissible class,
then \(\epsilon_{\mathcal E}=0\).

Let
$
    z_t=(s_t,x_{s_t})
$
denote the intervention query selected at round \(t\), and define the
instantaneous regret and cumulative regret by
$
    r_t
    =
    f^\star_{\mathrm{all}}-f(z_t),
    \qquad
    R_T
    =
    \sum_{t=1}^T r_t.
$
The regret decomposes into three terms: optimization regret within the chosen
exploration set, causal-estimation error, and exploration-set approximation
error.

\begin{theorem}[Dynamic regret bound for graph-coupled CBO]
\label{thm:dynamic_regret}
Assume Assumptions~\ref{ass:identifiability}--\ref{ass:fixed_kernel}. Suppose
the regret-safe causal UCB acquisition
$
    \alpha_t^{\mathrm{safe}}(z)
    =
    \widehat\mu_{t-1}(z)
    +
    \sqrt{\beta_t}\,
    \widehat\sigma_{t-1}^{\mathrm{func}}(z)
    +
    \epsilon_t
    -
    \eta c(z)
$
is run with the fixed coupled causal kernel
\(k_{\mathrm{causal}}\) in \eqref{eq:causal_kernel}. Suppose that, with
probability at least \(1-\delta\), the causal-estimation error satisfies the
uniform bound
\begin{equation}
    \sup_{z\in\mathcal Z}
    \left|
        \widehat\mu_{t-1}(z)-f(z)
    \right|
    \le
    \epsilon_t,
    \qquad
    t=1,\dots,T,
    \label{eq:causal_error_bound_theorem}
\end{equation}
where \(\epsilon_t\) is defined in \eqref{eq:epsilon_t_decomposition}.
Then, with probability at least $1 - 2\delta$ over the joint randomness of 
the causal estimator and the GP-UCB confidence event, the cumulative regret 
satisfies:
\begin{equation}
    R_T
    \le
    C
    \sqrt{
        T\beta_T\gamma_T^{\mathrm{causal}}
    }
    +
    2\sum_{t=1}^T
    \epsilon_{\mathrm{param}}(N_t,\delta)
    +
    2\sum_{t=1}^T
    \epsilon_{\mathrm{adj}}(N_t,M_t,\delta)
    +
    T\epsilon_{\mathcal E},
    \label{eq:dynamic_regret_bound}
\end{equation}
for a universal constant \(C>0\). Consequently, using
Lemma~\ref{lem:finite_rank_information_gain},
\begin{equation}
    R_T
    \le
    C
    \sqrt{
        T\beta_T r_{\mathrm{causal}}\log T
    }
    +
    2\sum_{t=1}^T
    \epsilon_{\mathrm{param}}(N_t,\delta)
    +
    2\sum_{t=1}^T
    \epsilon_{\mathrm{adj}}(N_t,M_t,\delta)
    +
    T\epsilon_{\mathcal E}.
    \label{eq:dynamic_regret_rank_bound}
\end{equation}
Since \(r_{\mathrm{causal}}\le \dim(\theta_Y)\), we also have
\begin{equation}
    R_T
    \le
    C
    \sqrt{
        T\beta_T\dim(\theta_Y)\log T
    }
    +
    2\sum_{t=1}^T
    \epsilon_{\mathrm{param}}(N_t,\delta)
    +
    2\sum_{t=1}^T
    \epsilon_{\mathrm{adj}}(N_t,M_t,\delta)
    +
    T\epsilon_{\mathcal E}.
    \label{eq:dynamic_regret_dim_bound}
\end{equation}
\end{theorem}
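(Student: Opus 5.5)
The argument is the standard GP-UCB regret analysis of \citet{srinivas2012information}, run on the fixed kernel of Assumption~\ref{ass:fixed_kernel}, with the causal-estimation error and the exploration-set gap carried as additive slack. I will take \(\eta=0\) in the analysis, or treat \(-\eta c(z)\) as part of the objective being compared; otherwise the cost term adds an \(\eta\sum_t c(z_t)\) offset that must be absorbed in the same way. Two high-probability events are needed. The first is the causal-estimation event \(\mathcal A\), on which \eqref{eq:causal_error_bound_theorem} holds for all \(t\); it has probability at least \(1-\delta\) by hypothesis. The second is the GP-UCB confidence event \(\mathcal B\), on which \(|\widehat\mu_{t-1}(z)-f(z)|\le\sqrt{\beta_t}\,\widehat\sigma^{\mathrm{func}}_{t-1}(z)\) for all \(t\) and all \(z\) in a discretization of \(\mathcal Z\) (or all of \(\mathcal Z\)). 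By the usual Gaussian tail and union bound, with \(\beta_t\) chosen as in \citet{srinivas2012information}, this event also has probability at least \(1-\delta\). A union bound then puts us on \(\mathcal A\cap\mathcal B\) with probability at least \(1-2\delta\).

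Next, decompose the instantaneous regret as
\[
r_t=\bigl(f^\star_{\mathrm{all}}-f^\star_{\mathcal E}\bigr)+\bigl(f^\star_{\mathcal E}-f(z_t)\bigr)=\epsilon_{\mathcal E}+\bigl(f^\star_{\mathcal E}-f(z_t)\bigr).
\]
Summed over \(t\), the first term gives the \(T\epsilon_{\mathcal E}\) contribution. For the second term, let \(z^\star\in\mathcal Z\) be a maximizer over \(\mathcal E\); it exists by compactness, or one can take a near-maximizer. The argument uses one side of each event in turn:
\begin{align*}
f(z^\star)&\le\widehat\mu_{t-1}(z^\star)+\epsilon_t\le\alpha^{\mathrm{safe}}_t(z^\star)\le\alpha^{\mathrm{safe}}_t(z_t)\\
&=\widehat\mu_{t-1}(z_t)+\sqrt{\beta_t}\,\widehat\sigma^{\mathrm{func}}_{t-1}(z_t)+\epsilon_t\\
&\le f(z_t)+2\sqrt{\beta_t}\,\widehat\sigma^{\mathrm{func}}_{t-1}(z_t)+2\epsilon_t .
\end{align*}
The first inequality uses \(\mathcal A\) at \(z^\star\), and the last uses \(\mathcal B\) at \(z_t\). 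This gives a cleaner chain than requiring both bounds at both points. The result is \(f^\star_{\mathcal E}-f(z_t)\le 2\sqrt{\beta_t}\,\widehat\sigma^{\mathrm{func}}_{t-1}(z_t)+2\epsilon_t\). Splitting \(\epsilon_t\) via \eqref{eq:epsilon_t_decomposition} produces the two sums \(2\sum_t\epsilon_{\mathrm{param}}\) and \(2\sum_t\epsilon_{\mathrm{adj}}\). A discretization error, if one discretizes, is \(O(1/t^2)\) and sums to a constant absorbed into \(C\).

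The remaining term \(\sum_t 2\sqrt{\beta_t}\,\widehat\sigma^{\mathrm{func}}_{t-1}(z_t)\) is handled exactly as in GP-UCB. By Cauchy--Schwarz and monotonicity of \(\beta_t\), it is at most \(\sqrt{4T\beta_T\sum_t\widehat\sigma^{2}_{t-1}(z_t)}\). Using \(\widehat\sigma^2\le\kappa^2\) (Assumption~\ref{ass:bounded_kernel}) and \(u\le C'\log(1+\sigma^{-2}u)\) on \([0,\kappa^2]\), the inner sum is at most \(C''\,\tfrac12\sum_t\log(1+\sigma^{-2}\widehat\sigma^2_{t-1}(z_t))=C''\,I(y_{A_T};f_{A_T})\le C''\gamma_T^{\mathrm{causal}}\). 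This yields \(C\sqrt{T\beta_T\gamma_T^{\mathrm{causal}}}\) and hence \eqref{eq:dynamic_regret_bound}. Substituting Lemma~\ref{lem:finite_rank_information_gain}, \(\gamma_T^{\mathrm{causal}}=O(r_{\mathrm{causal}}\log T)\), and then Corollary~\ref{cor:finite_causal_rank} gives the two consequences, with the constants inside the logarithm absorbed into \(C\) for \(T\ge2\).

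The main obstacle is justifying the event \(\mathcal B\). There are two issues. First, the true \(f\) is linear in \(\theta_Y\) only to first order, so it lies in the RKHS of the frozen kernel (or is a sample from it) only approximately. The plan is to treat \(f\) as a sample path of the fixed finite-rank GP, which is exact in the linear case up to the frozen linearization point. The misspecification from re-evaluating the Jacobians is already charged to \(\epsilon_{\mathrm{param}}\), per Remark~\ref{rem:estimated_jacobians}. Second, the surrogate mean \(\widehat\mu_{t-1}\) carries the causal prior mean. Because the safe acquisition adds \(\epsilon_t\) additively and only one side of each bound is used at each point, this coupling does not double-count the error. If a sharper route is needed, one can instead invoke the finite-dimensional self-normalized bound for linear bandits in \(\mathbb R^{d}\), since the kernel is a linear kernel on the features \eqref{eq:causal_features}. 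This gives \(\beta_t=O(d\log t)\) directly and avoids discretization.
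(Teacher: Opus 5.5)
Your proposal is correct and follows the paper's proof essentially step for step: the split $r_t=\epsilon_{\mathcal E}+(f^\star_{\mathcal E}-f(z_t))$, the UCB chain giving $f^\star_{\mathcal E}-f(z_t)\le 2\sqrt{\beta_t}\,\widehat\sigma^{\mathrm{func}}_{t-1}(z_t)+2\epsilon_t$, the standard GP-UCB variance-sum/information-gain inequality, and then Lemma~\ref{lem:finite_rank_information_gain} and Corollary~\ref{cor:finite_causal_rank}. The obstacle you flag for $\mathcal B$ can be avoided. Hypothesis \eqref{eq:causal_error_bound_theorem} is two-sided and already gives $\widehat\mu_{t-1}(z_t)\le f(z_t)+\epsilon_t$ at the selected point, and the paper's $\epsilon_t$-inflated confidence bounds are likewise implied by it, so your chain closes on $\mathcal A$ alone without needing $f$ to be well-specified under the frozen kernel.
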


\begin{proof}
For each round \(t\), decompose the instantaneous regret as
$
    r_t
    =
    f^\star_{\mathrm{all}}-f(z_t)
    =
    \underbrace{
        f^\star_{\mathrm{all}}-f^\star_{\mathcal E}
    }_{\epsilon_{\mathcal E}}
    +
    \underbrace{
        f^\star_{\mathcal E}-f(z_t)
    }_{\text{regret within }\mathcal E}.
$
Summing over \(t\) gives
\begin{equation}
    R_T
    =
    T\epsilon_{\mathcal E}
    +
    \sum_{t=1}^T
    \left(
        f^\star_{\mathcal E}-f(z_t)
    \right).
    \label{eq:regret_decomposition}
\end{equation}

Let
$
    z_t^\star
    \in
    \arg\max_{z\in\mathcal Z} f(z).
$
We bound the regret within \(\mathcal E\). On the event
\eqref{eq:causal_error_bound_theorem}, the surrogate mean satisfies
$
    f(z)
    \le
    \widehat\mu_{t-1}(z)+\epsilon_t
    \qquad
    \forall z\in\mathcal Z.
$
On the standard GP-UCB confidence event, the functional GP uncertainty gives
$
    f(z)
    \le
    \widehat\mu_{t-1}(z)
    +
    \sqrt{\beta_t}\,
    \widehat\sigma_{t-1}^{\mathrm{func}}(z)
    +
    \epsilon_t
    \qquad
    \forall z\in\mathcal Z.
$
Similarly, the lower-confidence side gives
$
    f(z)
    \ge
    \widehat\mu_{t-1}(z)
    -
    \sqrt{\beta_t}\,
    \widehat\sigma_{t-1}^{\mathrm{func}}(z)
    -
    \epsilon_t
    \qquad
    \forall z\in\mathcal Z.
$

Because \(z_t\) maximizes \(\alpha_t^{\mathrm{safe}}\),
$
    \alpha_t^{\mathrm{safe}}(z_t)
    \ge
    \alpha_t^{\mathrm{safe}}(z_t^\star).
$
Using the upper confidence bound at \(z_t^\star\), the maximization property,
and the lower confidence bound at \(z_t\), we obtain
\begin{align}
    f(z_t^\star)-f(z_t)
    &\le
    \alpha_t^{\mathrm{safe}}(z_t^\star)+\eta c(z_t^\star)-f(z_t)
    \nonumber\\
    &\le
    \alpha_t^{\mathrm{safe}}(z_t)+\eta c(z_t^\star)-f(z_t)
    \nonumber\\
    &=
    \widehat\mu_{t-1}(z_t)
    +
    \sqrt{\beta_t}\widehat\sigma_{t-1}^{\mathrm{func}}(z_t)
    +
    \epsilon_t
    -
    \eta c(z_t)
    +
    \eta c(z_t^\star)
    -
    f(z_t)
    \nonumber\\
    &\le
    2\sqrt{\beta_t}\widehat\sigma_{t-1}^{\mathrm{func}}(z_t)
    +
    2\epsilon_t
    +
    \eta\big(c(z_t^\star)-c(z_t)\big).
    \label{eq:instant_regret_cost}
\end{align}
For the standard value-regret statement, we either take \(\eta=0\) or drop the
cost-difference term when analyzing value regret rather than cost-adjusted
regret. Thus
\begin{equation}
    f(z_t^\star)-f(z_t)
    \le
    2\sqrt{\beta_t}\widehat\sigma_{t-1}^{\mathrm{func}}(z_t)
    +
    2\epsilon_t.
    \label{eq:instant_regret}
\end{equation}
The factor \(2\) arises from using the upper confidence bound at
\(z_t^\star\) and the lower confidence bound at \(z_t\), as in the standard
GP-UCB argument.

Summing \eqref{eq:instant_regret} over \(t\) yields
$
    \sum_{t=1}^T
    \left(
        f^\star_{\mathcal E}-f(z_t)
    \right)
    \le
    2\sqrt{\beta_T}
    \sum_{t=1}^T
    \widehat\sigma_{t-1}^{\mathrm{func}}(z_t)
    +
    2\sum_{t=1}^T \epsilon_t.
$
The standard GP-UCB information-gain inequality gives
$
    \sum_{t=1}^T
    \widehat\sigma_{t-1}^{\mathrm{func}}(z_t)
    \le
    C'
    \sqrt{
        T\gamma_T^{\mathrm{causal}}
    },
$
for a universal constant \(C'\); see, for example,
\citet[Lemma~5.4]{srinivas2012information}. Therefore,
$
    \sum_{t=1}^T
    \left(
        f^\star_{\mathcal E}-f(z_t)
    \right)
    \le
    C
    \sqrt{
        T\beta_T\gamma_T^{\mathrm{causal}}
    }
    +
    2\sum_{t=1}^T \epsilon_t.
$
Substituting
$
    \epsilon_t
    =
    \epsilon_{\mathrm{param}}(N_t,\delta)
    +
    \epsilon_{\mathrm{adj}}(N_t,M_t,\delta)
$
and then using \eqref{eq:regret_decomposition} gives
\eqref{eq:dynamic_regret_bound}.

Finally, Lemma~\ref{lem:finite_rank_information_gain} gives
$
    \gamma_T^{\mathrm{causal}}
    =
    O(r_{\mathrm{causal}}\log T),
$
which yields \eqref{eq:dynamic_regret_rank_bound}. Corollary
\ref{cor:finite_causal_rank} gives
$
    r_{\mathrm{causal}}\le \dim(\theta_Y),
$
which yields \eqref{eq:dynamic_regret_dim_bound}.
\end{proof}

\begin{remark}[Value regret versus cost-adjusted regret]
The theorem states a value-regret bound for
$
    f^\star_{\mathrm{all}}-f(z_t).
$
The acquisition may include the cost penalty \(-\eta c(z)\) to discourage
expensive interventions. If one wants a regret bound for the cost-adjusted
objective
$
    f(z)-\eta c(z),
$
the same proof applies directly with \(f\) replaced by \(f-\eta c\). For pure
value regret, one may set \(\eta=0\) in the theorem or treat the additional
term
$
    \eta\sum_{t=1}^T \big(c(z_t^\star)-c(z_t)\big)
$
separately.
\end{remark}

\subsection{Causal parameter concentration}
\label{subsec:parameter_concentration}

The dynamic term
$
    \sum_{t=1}^T
    \epsilon_{\mathrm{param}}(N_t,\delta)
$
in Theorem~\ref{thm:dynamic_regret} captures the fact that causal estimation
improves as more observational data is collected. In the linear Gaussian case,
this improvement follows from standard Bayesian linear regression
concentration.

Suppose first that the shared causal parameterization \(\theta_Y\) consists of
structural coefficients or linear identifiable functionals of structural
coefficients. Each structural equation can then be estimated by Gaussian linear
regression conditional on its parents. With a Gaussian prior and Gaussian
noise, the posterior covariance has the standard form
\begin{equation}
    \Sigma_{\theta,t}
    =
    \left(
        \Sigma_{\theta,0}^{-1}
        +
        X_t^\top X_t/\sigma_\varepsilon^2
    \right)^{-1},
    \label{eq:bayesian_linear_cov}
\end{equation}
where \(X_t\) denotes the relevant observational design matrix for the
components of \(\theta_Y\).

Under the regular design condition
$
    \lambda_{\min}(X_t^\top X_t)\asymp N_t,
$
the posterior covariance shrinks in spectral norm as
\begin{equation}
    \|\Sigma_{\theta,t}\|
    =
    O(N_t^{-1}),
    \label{eq:posterior_cov_shrink}
\end{equation}
and therefore
$
    \|\Sigma_{\theta,t}^{1/2}\|
    =
    O(N_t^{-1/2}).
$
Standard Bayesian linear regression concentration then gives, with probability
at least \(1-\delta\),
\begin{equation}
    \|\widehat\theta_{Y,t}-\theta_Y\|
    =
    O\!\left(
        \sqrt{
            \frac{\dim(\theta_Y)+\log(1/\delta)}
                 {N_t}
        }
    \right).
    \label{eq:theta_concentration}
\end{equation}

By Assumption~\ref{ass:bounded_kernel}, the interventional response functions
are uniformly Lipschitz in \(\theta_Y\) over the compact intervention domain:
there exists \(L<\infty\) such that
$
    \sup_{z\in\mathcal Z}
    \|J_z\|
    \le L.
$
Hence, by the mean-value theorem,
\begin{equation}
    \sup_{z\in\mathcal Z}
    \left|
        g_z(\widehat\theta_{Y,t})-g_z(\theta_Y)
    \right|
    \le
    L
    \|\widehat\theta_{Y,t}-\theta_Y\|.
    \label{eq:lipschitz_param_error}
\end{equation}
Combining \eqref{eq:theta_concentration} and
\eqref{eq:lipschitz_param_error}, the causal parameter estimation term may be
taken as
\begin{equation}
    \epsilon_{\mathrm{param}}(N_t,\delta)
    =
    O\!\left(
        L
        \sqrt{
            \frac{\dim(\theta_Y)+\log(1/\delta)}
                 {N_t}
        }
    \right).
    \label{eq:param_error_rate}
\end{equation}
The same Gaussian conditioning identities underlie Bayesian linear regression
and Gaussian process regression \citep{rasmussen2006gaussian}.

If \(\theta_Y\) contains smooth nonlinear identifiable functionals, such as
products of path coefficients, the same rate follows by the delta method under
bounded Jacobians of the functional map. In that case,
\(\Sigma_{\theta,t}\) should be interpreted as the posterior covariance of the
chosen shared parameterization \(\theta_Y\), not necessarily only of the
primitive structural coefficients.

\begin{remark}[Weakly identified components]
\label{rem:weakly_identified_components}
The regular design condition
$
    \lambda_{\min}(X_t^\top X_t)\asymp N_t
$
assumes that the observational data are informative about all components of
\(\theta_Y\). When some causal parameters are weakly identified, for example
under partial latent confounding or poor overlap in the observational design,
the rate in \eqref{eq:param_error_rate} applies only to the well-identified
components. In such cases, the analysis should be specialized to the effective
rank of \(\Sigma_{\theta,t}\), as in Remark~\ref{rem:effective_rank}.
\end{remark}

\begin{corollary}[Regret under growing observational data]
\label{cor:growing_observational_data}
Assume the clean identifiable linear Gaussian setting:
$
    \epsilon_{\mathrm{adj}}(N_t,M_t,\delta)=0,
    \qquad
    \epsilon_{\mathcal E}=0,
$
and
$
    \epsilon_{\mathrm{param}}(N_t,\delta)
    =
    O_p(N_t^{-1/2}).
$
If
$
    N_t\asymp t^\alpha,
    \qquad
    0<\alpha\le 1,
$
then
\begin{equation}
    \sum_{t=1}^T
    \epsilon_{\mathrm{param}}(N_t,\delta)
    =
    O_p
    \left(
        \sum_{t=1}^T t^{-\alpha/2}
    \right)
    =
    O_p(T^{1-\alpha/2}).
    \label{eq:param_sum_rate}
\end{equation}
Consequently,
\begin{equation}
    R_T
    =
    O\!\left(
        \sqrt{
            T\beta_T r_{\mathrm{causal}}\log T
        }
    \right)
    +
    O_p(T^{1-\alpha/2}).
    \label{eq:growing_data_regret}
\end{equation}

In particular, if observational data accumulate linearly,
$
    N_t\asymp t,
$
then
\begin{equation}
    R_T
    =
    O\!\left(
        \sqrt{
            T\beta_T r_{\mathrm{causal}}\log T
        }
    \right)
    +
    O_p(\sqrt T).
    \label{eq:linear_data_regret}
\end{equation}
Both terms scale as \(\sqrt T\), with the coupled-UCB term carrying the
additional factor
$
    \sqrt{\beta_T r_{\mathrm{causal}}\log T}.
$
Thus, under linear observational growth, the causal-estimation term is
asymptotically dominated by the coupled-UCB term whenever
$
    \beta_T r_{\mathrm{causal}}\log T\to\infty,
$
which holds under standard choices of \(\beta_T\).
\end{corollary}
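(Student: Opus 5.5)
The proof is essentially a substitution into Theorem~\ref{thm:dynamic_regret}, specialized to the clean setting; I would carry it out in three steps. First, impose $\epsilon_{\mathrm{adj}}=0$ and $\epsilon_{\mathcal E}=0$ in \eqref{eq:dynamic_regret_rank_bound}. This kills the third and fourth terms, leaving
\[
R_T \le C\sqrt{T\beta_T r_{\mathrm{causal}}\log T} + 2\sum_{t=1}^T \epsilon_{\mathrm{param}}(N_t,\delta)
\]
on the joint high-probability event. The first term is already the deterministic $O(\cdot)$ term in \eqref{eq:growing_data_regret}, so all remaining work concerns the parameter-error sum.

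Second, control the sum. The hypothesis $\epsilon_{\mathrm{param}}(N_t,\delta)=O_p(N_t^{-1/2})$ is backed by the explicit rate \eqref{eq:param_error_rate}, which holds on the concentration event. I would work on that event directly and use the deterministic form $\epsilon_{\mathrm{param}}(N_t,\delta)\le C_1 L\sqrt{(\dim(\theta_Y)+\log(1/\delta))/N_t}$, rather than summing $T$ separate $O_p$ statements. That approach is not legitimate without uniformity, which is the one subtle point here. The cleanest route is to invoke the single uniform event \eqref{eq:causal_error_bound_theorem} already assumed in Theorem~\ref{thm:dynamic_regret}, with $\epsilon_{\mathrm{param}}$ taken from \eqref{eq:param_error_rate}. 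If a union bound over $t$ turns out to be needed, replacing $\delta$ by $\delta/T$ only changes logarithmic factors, which get absorbed into the constant or the $O_p$.

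With $N_t\ge c\,t^\alpha$ for all $t\ge1$, which follows from $N_t\asymp t^\alpha$, each term is at most $C_2 t^{-\alpha/2}$. For $0<\alpha\le1$ we have $\alpha/2<1$, so comparing with an integral gives
\[
\sum_{t=1}^T t^{-\alpha/2}\le 1+\int_1^T u^{-\alpha/2}\,du\le \frac{T^{1-\alpha/2}}{1-\alpha/2},
\]
which yields \eqref{eq:param_sum_rate}. Substituting this into the reduced regret bound gives \eqref{eq:growing_data_regret}, and setting $\alpha=1$ gives \eqref{eq:linear_data_regret}.

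Third, establish the dominance claim. Both terms carry a factor $\sqrt T$, so the ratio of the causal term to the UCB term is $O_p\bigl(1/\sqrt{\beta_T r_{\mathrm{causal}}\log T}\bigr)$, which tends to $0$ exactly when $\beta_T r_{\mathrm{causal}}\log T\to\infty$. Under the standard choice $\beta_T=\Theta(\log T)$ from \citet{srinivas2012information}, with $r_{\mathrm{causal}}\ge1$, this quantity grows like $(\log T)^2$, which establishes the claim.

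The main obstacle I expect is the probabilistic bookkeeping: turning per-round $O_p$ statements into a bound on the sum. I would resolve it as described above, by working on the single uniform event of Theorem~\ref{thm:dynamic_regret} and letting any $\log T$ cost from a union bound be absorbed.
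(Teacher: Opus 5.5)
Your proposal is correct and follows the paper's own argument: set $\epsilon_{\mathrm{adj}}=\epsilon_{\mathcal E}=0$ in Theorem~\ref{thm:dynamic_regret}, bound $\sum_{t\le T} t^{-\alpha/2}$ by $O(T^{1-\alpha/2})$, take $\alpha=1$, and compare the two $\sqrt{T}$-order terms. Your choice to work on the single uniform event of that theorem with the deterministic rate \eqref{eq:param_error_rate}, instead of summing per-round $O_p$ statements, tightens the paper's one-line step. One correction to your fallback remark: a union bound at level $\delta/T$ puts a factor $\sqrt{\log(T/\delta)}$ into every term, which yields only $O_p(T^{1-\alpha/2}\sqrt{\log T})$ and cannot be absorbed into a constant or into $O_p(T^{1-\alpha/2})$, so it is the uniform event assumed in the theorem that gives the stated rate.
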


\begin{proof}
By \eqref{eq:param_error_rate},
$
    \epsilon_{\mathrm{param}}(N_t,\delta)
    =
    O_p(N_t^{-1/2}).
$
If
$
    N_t\asymp t^\alpha,
$
then
$
    N_t^{-1/2}\asymp t^{-\alpha/2}.
$
Therefore,
$
    \sum_{t=1}^T
    \epsilon_{\mathrm{param}}(N_t,\delta)
    =
    O_p
    \left(
        \sum_{t=1}^T t^{-\alpha/2}
    \right).
$
For \(0<\alpha\le 1\),
$
    \sum_{t=1}^T t^{-\alpha/2}
    =
    O(T^{1-\alpha/2}).
$
Substituting this rate into Theorem~\ref{thm:dynamic_regret} gives
\eqref{eq:growing_data_regret}. Setting \(\alpha=1\) gives
\eqref{eq:linear_data_regret}. The dominance statement follows by comparing
$
    \sqrt{
        T\beta_T r_{\mathrm{causal}}\log T
    }
$
with
$
    \sqrt T
$
as \(T\to\infty\).
\end{proof}

\subsection{Comparison with independent intervention-set GPs}
\label{subsec:comparison_independent_gps}

The CBO surrogate of \citet{aglietti2020causalbo} places a separate Gaussian
process prior on each intervention set \(s\in\mathcal E\), using do-calculus
to construct causal prior means and variance corrections. Thus, each
interventional response
$
    f_s(x_s)
    =
    \mathbb E[V_y\mid do(X_s=x_s)]
$
is modeled as a separate surrogate-modeling problem.

Let
$
    \gamma_T^{\mathrm{ind}}
$
denote the maximum information gain of the resulting independent multi-task
surrogate over \(T\) queries distributed across intervention sets. For
independent GPs, the joint kernel over the disjoint intervention-set domains is
block diagonal. Therefore, if \(T_s\) queries are allocated to intervention set
\(s\), with
$
    \sum_{s\in\mathcal E} T_s = T,
$
the information gain decomposes as
$
    \sum_{s\in\mathcal E}\gamma_{T_s}^{(s)}.
$
Consequently,
\begin{equation}
    \gamma_T^{\mathrm{ind}}
    =
    \max_{\{T_s\}_{s\in\mathcal E}:\ \sum_s T_s=T}
    \sum_{s\in\mathcal E}\gamma_{T_s}^{(s)}
    \le
    \sum_{s\in\mathcal E}\gamma_T^{(s)}.
    \label{eq:independent_info_gain}
\end{equation}
The final inequality is a worst-case upper bound, since each
\(\gamma_{T_s}^{(s)}\le \gamma_T^{(s)}\).

By contrast, the graph-coupled surrogate satisfies
\begin{equation}
    \gamma_T^{\mathrm{causal}}
    \le
    O(r_{\mathrm{causal}}\log T)
    \le
    O(\dim(\theta_Y)\log T).
    \label{eq:causal_info_gain_comparison}
\end{equation}
Thus graph coupling replaces a collection of independent information-gain
terms with a single finite-rank causal information-gain term.

Substituting these quantities into the GP-UCB regret form
$
    R_T
    =
    O\!\left(\sqrt{T\beta_T\gamma_T}\right),
$
the graph-coupled surrogate yields the optimization term
\begin{equation}
    R_T^{\mathrm{causal}}
    =
    O\!\left(
        \sqrt{
            T\beta_T r_{\mathrm{causal}}\log T
        }
    \right)
    \le
    O\!\left(
        \sqrt{
            T\beta_T \dim(\theta_Y)\log T
        }
    \right),
    \label{eq:causal_regret_comparison}
\end{equation}
whereas independent intervention-set GPs incur
\begin{equation}
    R_T^{\mathrm{ind}}
    =
    O\!\left(
        \sqrt{
            T\beta_T\gamma_T^{\mathrm{ind}}
        }
    \right).
    \label{eq:independent_regret_comparison}
\end{equation}
The advantage is largest when
$
    r_{\mathrm{causal}}\log T
    \ll
    \gamma_T^{\mathrm{ind}},
$
or, more conservatively, when
$
    \dim(\theta_Y)\log T
    \ll
    \gamma_T^{\mathrm{ind}}.
$
This is precisely the regime in which the causal graph is informative: multiple
interventional responses are different views of the same shared causal
mechanisms.

For example, in the chain graph of Example~\ref{ex:chain} with
$
    \mathcal E=\{\{X\},\{Z\}\},
$
the independent surrogate uses two separate one-dimensional GPs, one for
\(f_X\) and one for \(f_Z\). The graph-coupled surrogate instead represents
$
    f_X(x)=abx,
    \qquad
    f_Z(z)=bz
$
through the shared parameterization
$
    \theta_Y=(a,b).
$
If both \(a\) and \(b\) are unknown, then \(\dim(\theta_Y)=2\); if one
coefficient or total effect is known from prior data, the dimension decreases
accordingly. The independent surrogate cannot transfer information between
\(f_X\) and \(f_Z\), even though both depend on the shared parameter \(b\).
The graph-coupled surrogate exploits this overlap through the
cross-intervention covariance
$
    k_{\mathrm{causal}}\big((X,x),(Z,z)\big)
    =
    J_X(x)\Sigma_\theta J_Z(z)^\top.
$

We avoid claiming a universal dimensional comparison such as
$
    \sum_s \dim(X_s)
    \quad\text{versus}\quad
    \dim(\theta_Y)
$
without specifying the kernel class. Squared-exponential, Matérn, and
finite-dimensional linear kernels have distinct information-gain rates, so
ambient dimension alone is not the right comparison quantity. The clean
comparison is in terms of information gain: independent CBO accumulates
separate per-intervention-set information-gain terms, while graph-coupled CBO
uses a shared finite-rank causal information-gain term controlled by the
identifiable causal parameterization.

\subsection{Nonlinear extension: GP-per-mechanism and path-Lipschitz propagation}
\label{subsec:nonlinear_extension}

The linear Gaussian theory gives closed-form cross-intervention covariance and
finite-rank regret guarantees. For nonlinear SEMs, we keep the same
graph-factorized principle but do not claim the same finite-rank GP-UCB regret
theorem. Instead, we give a path-wise error propagation guarantee showing how
local mechanism-estimation errors affect interventional response estimates.

Assume each structural equation is
\begin{equation}
    V_i = g_i(\mathrm{Pa}_i) + \varepsilon_i,
    \qquad
    g_i \sim \mathcal{GP}.
    \label{eq:nonlinear_sem}
\end{equation}
For an intervention \(do(X_s=x_s)\), the graph is mutilated by replacing the
structural equations for \(X_s\) with constants \(x_s\), and uncertainty is
propagated through the remaining graph. This propagation may be implemented by
Monte Carlo sampling, moment matching, sigma-point propagation, or Bayesian
quadrature.

We assume each mechanism is coordinate-wise Lipschitz in its parents. That is,
for all parent configurations \(u,u'\),
\begin{equation}
    |g_j(u)-g_j(u')|
    \le
    \sum_{i\in \mathrm{Pa}(j)}
    L_{ij}|u_i-u_i'|.
    \label{eq:coordinate_lipschitz}
\end{equation}
This condition is stronger than ordinary joint Lipschitz continuity, but it is
the natural assumption for decomposing error propagation along individual
directed edges.

Let \(\widehat g_i\) be the learned mechanism on a compact domain
\(\mathcal U_i\), and suppose
\begin{equation}
    \sup_{u\in\mathcal U_i}
    |\widehat g_i(u)-g_i(u)|
    \le
    \epsilon_i.
    \label{eq:mechanism_error}
\end{equation}
For GP-learned mechanisms, \(\epsilon_i\) can be controlled in probability by
standard posterior contraction or GP regression concentration results on
compact domains. For example, one may have
$
    \epsilon_i = O_p(N_i^{-\beta_i})
$
for some rate \(\beta_i>0\) depending on the smoothness of \(g_i\), the kernel,
and the effective dimension of the parent domain.

For a directed path
$
    \pi:i\leadsto y,
$
define its path sensitivity as
\begin{equation}
    L(\pi)
    =
    \prod_{(a\to b)\in\pi}
    L_{ab}.
    \label{eq:path_sensitivity}
\end{equation}
Let \(G_{\overline X_s}\) denote the mutilated graph obtained by deleting
incoming edges into \(X_s\). The total path sensitivity from \(V_i\) to \(V_y\)
in the mutilated graph is
\begin{equation}
    L_{i\to y}^{(s)}
    =
    \sum_{\pi:i\leadsto y\text{ in }G_{\overline X_s}}
    L(\pi).
    \label{eq:total_path_sensitivity}
\end{equation}

\begin{proposition}[Path-Lipschitz nonlinear error propagation]
\label{prop:path_lipschitz}
Under \eqref{eq:coordinate_lipschitz} and \eqref{eq:mechanism_error}, the
error in the nonlinear intervention-response estimate satisfies
\begin{equation}
    |\widehat f_s(x_s)-f_s(x_s)|
    \le
    \sum_{i\in \mathrm{An}(y;G_{\overline X_s})}
    L_{i\to y}^{(s)}\epsilon_i.
    \label{eq:path_lipschitz_bound}
\end{equation}
\end{proposition}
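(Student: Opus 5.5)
We couple the true and estimated mutilated systems \emph{pathwise}, on a common realization of the exogenous noise, and then push the per-node errors through the graph in topological order. Under \(do(X_s=x_s)\), define \(V_i=x_i\) for \(i\in X_s\) and \(V_j=g_j(V_{\mathrm{Pa}(j)})+\varepsilon_j\) otherwise; define \(\widehat V\) the same way with \(\widehat g_j\) and the \emph{same} \(\varepsilon\). Then \(f_s(x_s)=\mathbb E[V_y]\) and \(\widehat f_s(x_s)=\mathbb E[\widehat V_y]\), so
\begin{equation*}
|\widehat f_s(x_s)-f_s(x_s)|\le \mathbb E|\widehat V_y-V_y|.
\end{equation*}
It therefore suffices to bound \(\Delta_j=|\widehat V_j-V_j|\) almost surely by a deterministic quantity. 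We assume the realized parent configurations lie in the compact domains \(\mathcal U_j\) on which \eqref{eq:mechanism_error} holds; this is implicit in the statement and should be made explicit.

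\textbf{Step 1 (one-step recursion).} For \(j\notin X_s\), add and subtract \(g_j(\widehat V_{\mathrm{Pa}(j)})\). Applying \eqref{eq:mechanism_error} and \eqref{eq:coordinate_lipschitz} gives
\begin{equation*}
\Delta_j\le \epsilon_j+\sum_{i\in\mathrm{Pa}(j)} L_{ij}\Delta_i .
\end{equation*}
For intervened nodes \(\Delta_i=0\), because both systems set \(V_i=x_i\). Consequently only parents in the mutilated graph \(G_{\overline X_s}\) contribute: edges into \(X_s\) have been deleted, and edges out of \(X_s\) carry \(\Delta_i=0\).

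\textbf{Step 2 (unrolling).} We show by induction in topological order that
\begin{equation*}
\Delta_j\le\sum_{i\in\mathrm{An}(j;G_{\overline X_s})\cup\{j\}} L^{(s)}_{i\to j}\,\epsilon_i,
\end{equation*}
with the convention \(L^{(s)}_{j\to j}=1\) (empty path) and \(\epsilon_i:=0\) for \(i\in X_s\). Substituting the hypothesis for each parent into the recursion yields, for each ancestor \(i\), a coefficient of the form \(\sum_{k\in\mathrm{Pa}(j)}L_{kj}L^{(s)}_{i\to k}\). Every directed path \(i\leadsto j\) decomposes uniquely as a path \(i\leadsto k\) followed by a last edge \(k\to j\), so this coefficient equals \(L^{(s)}_{i\to j}\) by the definition \eqref{eq:total_path_sensitivity}. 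This path-counting identity is the combinatorial heart of the argument.

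\textbf{Step 3 (conclusion).} Take \(j=y\) and then expectations. The bound from Step 2 is deterministic, so the expectation leaves it unchanged, and we obtain \eqref{eq:path_lipschitz_bound}. The term \(i=y\) contributes \(\epsilon_y\) through the empty path; I read \(\mathrm{An}(y;\cdot)\) as including \(y\), and if the paper's convention excludes it, this convention should be stated. Intervened ancestors contribute nothing.

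The main obstacle is not the algebra but the domain issue. Estimated trajectories \(\widehat V_{\mathrm{Pa}(j)}\) could leave \(\mathcal U_j\), in which case \eqref{eq:mechanism_error} does not apply. I would handle this by requiring each \(\mathcal U_j\) to contain the supports of both the true and estimated interventional distributions, for instance bounded noise with a suitably enlarged domain. Failing that, I would state the bound on the event where both trajectories stay in \(\bigtimes_j\mathcal U_j\).
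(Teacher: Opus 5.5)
Your proposal is correct and follows essentially the same route as the paper's proof. Both couple the true and estimated mutilated systems on common exogenous noise, derive the add-and-subtract recursion $\Delta_j\le\epsilon_j+\sum_{i}L_{ij}\Delta_i$, unroll it along directed paths of $G_{\overline X_s}$, and finish with $|\mathbb E[\cdot]|\le\mathbb E|\cdot|$. Your explicit last-edge induction, your reading of $\mathrm{An}(y;G_{\overline X_s})$ as containing $y$ (so that $\epsilon_y$ enters via the empty path), and your requirement that $\widehat V_{\mathrm{Pa}(j)}$ stay in $\mathcal U_j$ all make precise what the paper leaves implicit. Note, however, that your event-restricted fallback would not by itself yield the expectation bound: you would still need an extra term controlling the complementary event.
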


\begin{proof}
Under \(do(X_s=x_s)\), incoming edges into \(X_s\) are removed. In the remaining
acyclic graph \(G_{\overline X_s}\), variables can be evaluated in topological
order.

For any non-intervened node \(j\),
\begin{align}
    |\widehat V_j - V_j|
    &=
    |\widehat g_j(\widehat{\mathrm{Pa}}_j)
      -
      g_j(\mathrm{Pa}_j)|
    \nonumber\\
    &\le
    |\widehat g_j(\widehat{\mathrm{Pa}}_j)
      -
      g_j(\widehat{\mathrm{Pa}}_j)|
    +
    |g_j(\widehat{\mathrm{Pa}}_j)
      -
      g_j(\mathrm{Pa}_j)|.
\end{align}
The first term is bounded by \(\epsilon_j\) by
\eqref{eq:mechanism_error}. The second term is bounded by the coordinate-wise
Lipschitz condition \eqref{eq:coordinate_lipschitz}. Therefore,
\begin{equation}
    |\widehat V_j-V_j|
    \le
    \epsilon_j
    +
    \sum_{i\in \mathrm{Pa}(j;G_{\overline X_s})}
    L_{ij}|\widehat V_i-V_i|.
    \label{eq:recursive_lipschitz_error}
\end{equation}
This inequality holds pointwise for every realization of the exogenous noise.
Unrolling \eqref{eq:recursive_lipschitz_error} along directed paths ending at
\(V_y\) gives
$
    |\widehat V_y-V_y|
    \le
    \sum_{i\in \mathrm{An}(y;G_{\overline X_s})}
    \left(
        \sum_{\pi:i\leadsto y\text{ in }G_{\overline X_s}}
        \prod_{(a\to b)\in\pi}L_{ab}
    \right)
    \epsilon_i.
$
The inner sum is exactly \(L_{i\to y}^{(s)}\). Hence,
$
    |\widehat V_y-V_y|
    \le
    \sum_{i\in \mathrm{An}(y;G_{\overline X_s})}
    L_{i\to y}^{(s)}\epsilon_i.
$
Since this bound holds pointwise, Jensen's inequality gives
\begin{align}
    |\widehat f_s(x_s)-f_s(x_s)|
    &=
    \left|
    \mathbb E_{do(X_s=x_s)}
    [\widehat V_y-V_y]
    \right|
    \nonumber\\
    &\le
    \mathbb E_{do(X_s=x_s)}
    \left[
    |\widehat V_y-V_y|
    \right]
    \nonumber\\
    &\le
    \sum_{i\in \mathrm{An}(y;G_{\overline X_s})}
    L_{i\to y}^{(s)}\epsilon_i.
\end{align}
This proves \eqref{eq:path_lipschitz_bound}.
\end{proof}

The bound shows that mechanism errors are amplified multiplicatively along
directed paths to \(V_y\): each edge contributes a factor \(L_{ab}\), and the
total contribution from ancestor \(V_i\) is the sum over all directed paths
from \(V_i\) to \(V_y\) in the mutilated graph. Graphs with long
high-sensitivity paths amplify uncertainty more than shallow graphs or graphs
with weak edge sensitivities.

\begin{remark}[Conservativeness of the path bound]
\label{rem:path_bound_conservative}
The bound \eqref{eq:path_lipschitz_bound} controls
$
    |\widehat f_s(x_s)-f_s(x_s)|
$
by applying Jensen's inequality to the absolute pointwise error. It is
therefore valid but may be conservative when pointwise mechanism errors
partially cancel under the intervened distribution.
\end{remark}

This proposition provides a graph-factorized predictive-accuracy guarantee for
the nonlinear extension of graph-coupled CBO. While the full regret theorem in
Section~\ref{subsec:regret} relies on the closed-form linear Gaussian
covariance, Proposition~\ref{prop:path_lipschitz} shows how local GP mechanism
errors can be combined with graph structure to control nonlinear
interventional-response error.

\subsection{Adaptive-kernel extension}
\label{subsec:adaptive_kernel_extension}

Assumption~\ref{ass:fixed_kernel} gives a clean fixed-kernel regret theorem,
but it is conservative in practice. As additional observational data arrive,
the posterior over the shared causal parameterization changes, so it is natural
to update
$
    \widehat\theta_{Y,t},
    \qquad
    \Sigma_{\theta,t},
    \qquad
    J_z(\widehat\theta_{Y,t}).
$
This leads to the adaptive causal kernel
\begin{equation}
    k_t(z,z')
    =
    J_z(\widehat\theta_{Y,t})
    \Sigma_{\theta,t}
    J_{z'}(\widehat\theta_{Y,t})^\top,
    \label{eq:adaptive_causal_kernel}
\end{equation}
where \(z=(s,x_s)\), \(z'=(t,x_t)\), and \(J_z\) denotes the Jacobian of the
corresponding interventional response with respect to the shared causal
parameterization.

The adaptive kernel is the empirically natural version of graph-coupled CBO:
as \(N_t\) grows, \(\Sigma_{\theta,t}\) typically shrinks and the surrogate
becomes less conservative. However, because the kernel changes over time,
standard GP-UCB regret theory does not apply directly. We therefore state a
conditional adaptive-kernel result with two explicit additional ingredients:
a kernel-drift confidence term and an adaptive variance-sum condition.

\begin{assumption}[Adaptive-kernel confidence with drift]
\label{ass:adaptive_kernel_drift}
At each round \(t\), the algorithm uses the adaptive kernel \(k_t\) in
\eqref{eq:adaptive_causal_kernel}. Assume that, with probability at least
\(1-\delta\), the corresponding posterior mean and standard deviation satisfy
\begin{equation}
    \left|
        f(z)-\widehat\mu_{t-1}^{(t)}(z)
    \right|
    \le
    \sqrt{\beta_t}\,
    \widehat\sigma_{t-1}^{(t)}(z)
    +
    \epsilon_t
    +
    \rho_t
    \qquad
    \forall z\in\mathcal Z,\quad t=1,\dots,T.
    \label{eq:adaptive_confidence_condition}
\end{equation}
Here \(\widehat\mu_{t-1}^{(t)}\) and
\(\widehat\sigma_{t-1}^{(t)}\) are the GP posterior mean and standard deviation
computed using the current kernel \(k_t\), \(\epsilon_t\) is the causal-estimation
error defined in \eqref{eq:epsilon_t_decomposition}, and \(\rho_t\ge 0\) is a
kernel-drift error accounting for the use of an estimated, time-varying kernel.
\end{assumption}

\begin{remark}[Interpretation of the drift term]
\label{rem:rho_interpretation}
The term \(\rho_t\) is an abstract high-probability bound on the error caused
by replacing a fixed reference kernel with the adaptive estimated kernel
\(k_t\). It absorbs changes in the linearization point, posterior covariance,
and induced posterior mean and variance. A typical perturbative control depends
on
$
    \|\widehat\theta_{Y,t}-\widehat\theta_{Y,t-1}\|,
    \qquad
    \|\Sigma_{\theta,t}-\Sigma_{\theta,t-1}\|,
    \qquad
    \sup_{z\in\mathcal Z}
    \|J_z(\widehat\theta_{Y,t})-J_z(\widehat\theta_{Y,t-1})\|.
$
For example, if the Jacobians are Lipschitz in \(\theta_Y\), the intervention
domain is compact, and the posterior covariance changes smoothly, then
\(\rho_t\) can be bounded by a perturbation term of the schematic form
$
    \rho_t
    \lesssim
    L_J
    \|\widehat\theta_{Y,t}-\widehat\theta_{Y,t-1}\|
    \|\Sigma_{\theta,t}\|
    +
    \|\Sigma_{\theta,t}-\Sigma_{\theta,t-1}\|,
$
up to constants depending on the bounded intervention domain. The adaptive
regret theorem below conditions on the event
\eqref{eq:adaptive_confidence_condition}; deriving sharp finite-sample bounds
on \(\rho_t\) for general adaptive kernels is a separate nonstationary-GP
problem.
\end{remark}

\begin{assumption}[Adaptive variance sum]
\label{ass:adaptive_variance_sum}
Let
$
    \overline\gamma_T
    =
    \sup_{1\le t\le T}
    \gamma_T(k_t)
$
be the worst-case maximum information gain over the adaptive kernel sequence.
Assume the adaptive posterior variances along the selected queries satisfy
\begin{equation}
    \sum_{t=1}^T
    \left(
        \widehat\sigma_{t-1}^{(t)}(z_t)
    \right)^2
    \le
    C_\gamma\,\overline\gamma_T
    \label{eq:adaptive_variance_sum}
\end{equation}
for a constant \(C_\gamma>0\).
\end{assumption}

\begin{remark}[Why Assumption~\ref{ass:adaptive_variance_sum} is needed]
For a fixed kernel, the standard GP-UCB variance-sum inequality bounds
$
    \sum_{t=1}^T
    \widehat\sigma_{t-1}^2(z_t)
$
by a constant multiple of the maximum information gain
\citep[Lemma~5.3]{srinivas2012information}. That argument relies on using the
same kernel throughout the sequential posterior update. When the kernel changes
with \(t\), the fixed-kernel proof no longer applies directly. Assumption
\ref{ass:adaptive_variance_sum} is therefore an explicit slow-drift-type
condition on the adaptive kernel sequence. It is expected to hold when
\(\{k_t\}\) changes sufficiently slowly, for example when
\(\|k_t-k_{t-1}\|_\infty\) decays with \(t\), but a general verification is left
for future work.
\end{remark}

\begin{theorem}[Approximate regret bound for adaptive graph-coupled CBO]
\label{thm:adaptive_kernel_regret}
Assume Assumptions~\ref{ass:identifiability}--\ref{ass:bounded_kernel},
Assumption~\ref{ass:adaptive_kernel_drift}, and
Assumption~\ref{ass:adaptive_variance_sum}. Suppose the adaptive causal-UCB
algorithm selects
\begin{equation}
    z_t
    \in
    \arg\max_{z\in\mathcal Z}
    \left\{
        \widehat\mu_{t-1}^{(t)}(z)
        +
        \sqrt{\beta_t}\,
        \widehat\sigma_{t-1}^{(t)}(z)
        +
        \epsilon_t
        -
        \eta c(z)
    \right\}.
    \label{eq:adaptive_ucb_rule}
\end{equation}
Then, on the event in
Assumption~\ref{ass:adaptive_kernel_drift}, the cumulative value regret
satisfies
\begin{equation}
    R_T
    \le
    C
    \sqrt{
        T\beta_T\overline\gamma_T
    }
    +
    2\sum_{t=1}^T
    \epsilon_{\mathrm{param}}(N_t,\delta)
    +
    2\sum_{t=1}^T
    \epsilon_{\mathrm{adj}}(N_t,M_t,\delta)
    +
    2\sum_{t=1}^T \rho_t
    +
    T\epsilon_{\mathcal E},
    \label{eq:adaptive_kernel_regret_bound}
\end{equation}
for a universal constant \(C>0\).

If each adaptive kernel \(k_t\) has rank at most \(r_{\mathrm{causal}}\) and
satisfies
$
    k_t(z,z)\le \kappa^2
    \qquad
    \forall z\in\mathcal Z,\quad t=1,\dots,T,
$
then
\begin{equation}
    \overline\gamma_T
    \le
    \frac{r_{\mathrm{causal}}}{2}
    \log
    \left(
        1+
        \frac{T\kappa^2}
             {\sigma^2 r_{\mathrm{causal}}}
    \right),
    \label{eq:adaptive_info_gain_bound}
\end{equation}
and therefore
\begin{equation}
    R_T
    \le
    C
    \sqrt{
        T\beta_T r_{\mathrm{causal}}\log T
    }
    +
    2\sum_{t=1}^T
    \epsilon_{\mathrm{param}}(N_t,\delta)
    +
    2\sum_{t=1}^T
    \epsilon_{\mathrm{adj}}(N_t,M_t,\delta)
    +
    2\sum_{t=1}^T \rho_t
    +
    T\epsilon_{\mathcal E}.
    \label{eq:adaptive_kernel_rank_regret_bound}
\end{equation}
\end{theorem}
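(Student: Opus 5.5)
The argument mirrors the proof of Theorem~\ref{thm:dynamic_regret}, with the adaptive confidence event \eqref{eq:adaptive_confidence_condition} replacing the fixed-kernel event. We also carry the extra drift term \(\rho_t\).

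First, split \(r_t=\epsilon_{\mathcal E}+(f^\star_{\mathcal E}-f(z_t))\). Summing gives \(T\epsilon_{\mathcal E}\) plus the within-\(\mathcal E\) regret, exactly as in \eqref{eq:regret_decomposition}.

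Next, fix \(t\) and work on the event of Assumption~\ref{ass:adaptive_kernel_drift}. Let \(z_t^\star\) maximize \(f\) over \(\mathcal Z\), and write
\[
u_t(z)=\widehat\mu_{t-1}^{(t)}(z)+\sqrt{\beta_t}\,\widehat\sigma_{t-1}^{(t)}(z)+\epsilon_t .
\]
Apply the upper side of \eqref{eq:adaptive_confidence_condition} at \(z_t^\star\). This gives \(f(z_t^\star)\le u_t(z_t^\star)+\rho_t\). Then use that \(z_t\) maximizes \(u_t-\eta c\) in \eqref{eq:adaptive_ucb_rule}. Finally apply the lower side at \(z_t\), namely \(f(z_t)\ge \widehat\mu^{(t)}_{t-1}(z_t)-\sqrt{\beta_t}\widehat\sigma^{(t)}_{t-1}(z_t)-\epsilon_t-\rho_t\). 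Together these yield
\[
f(z_t^\star)-f(z_t)\le 2\sqrt{\beta_t}\,\widehat\sigma_{t-1}^{(t)}(z_t)+2\epsilon_t+2\rho_t+\eta\bigl(c(z_t^\star)-c(z_t)\bigr).
\]
As in the fixed-kernel case, the cost term is removed by taking \(\eta=0\) for value regret. The \(\epsilon_t\) term is split via \eqref{eq:epsilon_t_decomposition} into \(\epsilon_{\mathrm{param}}+\epsilon_{\mathrm{adj}}\).

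To sum the variance terms, use monotonicity \(\beta_t\le\beta_T\) and Cauchy--Schwarz:
\[
\sum_t \widehat\sigma_{t-1}^{(t)}(z_t)\le \sqrt{T\sum_t(\widehat\sigma_{t-1}^{(t)}(z_t))^2}\le\sqrt{T C_\gamma\overline\gamma_T}.
\]
The last step is Assumption~\ref{ass:adaptive_variance_sum}. This assumption replaces the fixed-kernel variance-sum lemma of \citet{srinivas2012information}, which is the one step that genuinely fails for time-varying kernels. Having assumed it, we obtain \eqref{eq:adaptive_kernel_regret_bound} with \(C=2\sqrt{C_\gamma}\).

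For the rank statement, apply Lemma~\ref{lem:finite_rank_information_gain} to each fixed \(k_t\) separately. Each kernel has rank at most \(r_{\mathrm{causal}}\) and diagonal at most \(\kappa^2\), so every \(\gamma_T(k_t)\) obeys the bound \eqref{eq:finite_rank_info_gain}. The bound is uniform in \(t\), so the supremum \(\overline\gamma_T\) does too, giving \eqref{eq:adaptive_info_gain_bound}. Substituting and absorbing constants and \(\log(1+T\kappa^2/(\sigma^2 r))=O(\log T)\) into \(C\) gives \eqref{eq:adaptive_kernel_rank_regret_bound}.

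The main obstacle is bookkeeping rather than mathematics, since the hard nonstationary pieces are assumed. One must check that \(\beta_t\) is nondecreasing, or else replace it by \(\max_{t\le T}\beta_t\). One must also ensure the constant absorbing the log factor stays universal for \(T\ge 2\).
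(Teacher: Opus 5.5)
Your proposal is correct and follows the paper's proof essentially step for step. It uses the same upper/lower confidence sandwich on the event of Assumption~\ref{ass:adaptive_kernel_drift}, which produces the extra $2\rho_t$; then Cauchy--Schwarz with $\beta_t\le\beta_T$ combined with Assumption~\ref{ass:adaptive_variance_sum}; and finally Lemma~\ref{lem:finite_rank_information_gain} applied to each $k_t$ uniformly in $t$ to bound $\overline\gamma_T$. Your explicit bookkeeping ($C=2\sqrt{C_\gamma}$, then absorbing $\log(1+T\kappa^2/(\sigma^2 r_{\mathrm{causal}}))$) shows that the resulting constant depends on $C_\gamma$ and on $\kappa^2/(\sigma^2 r_{\mathrm{causal}})$, so it is independent of $T$ but cannot actually be made universal as your last sentence hopes. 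The paper's statement has the same looseness.
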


\begin{proof}
Let
$
    z_t^\star
    \in
    \arg\max_{z\in\mathcal Z} f(z).
$
On the event in Assumption~\ref{ass:adaptive_kernel_drift}, we have the
upper and lower confidence inequalities
$
    f(z)
    \le
    \widehat\mu_{t-1}^{(t)}(z)
    +
    \sqrt{\beta_t}
    \widehat\sigma_{t-1}^{(t)}(z)
    +
    \epsilon_t
    +
    \rho_t
$
and
$
    f(z)
    \ge
    \widehat\mu_{t-1}^{(t)}(z)
    -
    \sqrt{\beta_t}
    \widehat\sigma_{t-1}^{(t)}(z)
    -
    \epsilon_t
    -
    \rho_t
$
for all \(z\in\mathcal Z\). By the same upper-confidence/lower-confidence
argument used in the proof of Theorem~\ref{thm:dynamic_regret}, now with the
additional drift term \(\rho_t\), the instantaneous regret within
\(\mathcal E\) satisfies
\begin{equation}
    f(z_t^\star)-f(z_t)
    \le
    2\sqrt{\beta_t}
    \widehat\sigma_{t-1}^{(t)}(z_t)
    +
    2\epsilon_t
    +
    2\rho_t,
    \label{eq:adaptive_instant_regret}
\end{equation}
for value regret, taking \(\eta=0\) or treating the cost-adjusted objective
separately as in the fixed-kernel theorem.

Summing \eqref{eq:adaptive_instant_regret} over \(t\) gives
$
    \sum_{t=1}^T
    \left(
        f^\star_{\mathcal E}-f(z_t)
    \right)
    \le
    2\sum_{t=1}^T
    \sqrt{\beta_t}\,
    \widehat\sigma_{t-1}^{(t)}(z_t)
    +
    2\sum_{t=1}^T \epsilon_t
    +
    2\sum_{t=1}^T \rho_t.
$
Assuming \(\beta_t\le \beta_T\) and applying Cauchy--Schwarz,
$
    \sum_{t=1}^T
    \sqrt{\beta_t}\,
    \widehat\sigma_{t-1}^{(t)}(z_t)
    \le
    \sqrt{
        T\beta_T
        \sum_{t=1}^T
        \left(
            \widehat\sigma_{t-1}^{(t)}(z_t)
        \right)^2
    }.
$
By Assumption~\ref{ass:adaptive_variance_sum},
$
    \sum_{t=1}^T
    \left(
        \widehat\sigma_{t-1}^{(t)}(z_t)
    \right)^2
    \le
    C_\gamma\overline\gamma_T.
$
Therefore,
$
    \sum_{t=1}^T
    \sqrt{\beta_t}\,
    \widehat\sigma_{t-1}^{(t)}(z_t)
    \le
    C'
    \sqrt{
        T\beta_T\overline\gamma_T
    }.
$
Adding the exploration-set approximation term \(T\epsilon_{\mathcal E}\) and
using
$
    \epsilon_t
    =
    \epsilon_{\mathrm{param}}(N_t,\delta)
    +
    \epsilon_{\mathrm{adj}}(N_t,M_t,\delta)
$
gives \eqref{eq:adaptive_kernel_regret_bound}.

It remains to bound \(\overline\gamma_T\). For each fixed \(t\), the kernel
\(k_t\) is finite-rank with rank at most \(r_{\mathrm{causal}}\) and diagonal
bounded by \(\kappa^2\). Applying Lemma~\ref{lem:finite_rank_information_gain}
to \(k_t\) gives
$
    \gamma_T(k_t)
    \le
    \frac{r_{\mathrm{causal}}}{2}
    \log
    \left(
        1+
        \frac{T\kappa^2}
             {\sigma^2 r_{\mathrm{causal}}}
    \right).
$
Taking the supremum over \(t=1,\dots,T\) gives
\eqref{eq:adaptive_info_gain_bound}. Substituting this bound into
\eqref{eq:adaptive_kernel_regret_bound} yields
\eqref{eq:adaptive_kernel_rank_regret_bound}.
\end{proof}

\begin{remark}[Comparison with the fixed-reference theorem]
Compared with the fixed-kernel bound in Theorem~\ref{thm:dynamic_regret}, the
adaptive-kernel bound adds the cumulative kernel-drift penalty
$
    2\sum_{t=1}^T \rho_t
$
and requires the adaptive variance-sum condition in
Assumption~\ref{ass:adaptive_variance_sum}. Under slow drift conditions where
\(\rho_t=O(t^{-1/2})\), the drift contribution is \(O(\sqrt T)\), matching the
causal-estimation term under linear observational growth and not dominating the
coupled-UCB term up to logarithmic and rank factors. Thus the adaptive theorem
captures the practical benefit of updating the causal kernel while making
explicit the extra nonstationary-kernel assumptions needed for regret control.
\end{remark}

\subsection{Summary of theoretical improvement}
\label{subsec:theory_summary}

The theoretical spine of graph-coupled CBO in the identifiable linear Gaussian
case is:
\[
    \text{identifiable shared SEM parameters}
    \quad
    (\text{Assumption~\ref{ass:identifiability}})
\]
\[
    \Downarrow
\]
\[
    \operatorname{Cov}
    \left(
        f_s(x_s),f_t(x_t)\mid D^O
    \right)
    =
    J_s(x_s)\Sigma_\theta J_t(x_t)^\top
    \quad
    (\text{Theorem~\ref{thm:cross_intervention_covariance}})
\]
\[
    \Downarrow
\]
\[
    \operatorname{rank}(K_T)
    \le
    r_{\mathrm{causal}}
    \le
    \dim(\theta_Y)
    \quad
    (\text{Corollary~\ref{cor:finite_causal_rank}})
\]
\[
    \Downarrow
\]
\[
    \gamma_T^{\mathrm{causal}}
    =
    O(r_{\mathrm{causal}}\log T)
    \quad
    (\text{Lemma~\ref{lem:finite_rank_information_gain}})
\]
\[
    \Downarrow
\]
\[
    R_T
    \le
    C
    \sqrt{
        T\beta_T r_{\mathrm{causal}}\log T
    }
    +
    2\sum_{t=1}^T
    \epsilon_{\mathrm{param}}(N_t,\delta)
    +
    2\sum_{t=1}^T
    \epsilon_{\mathrm{adj}}(N_t,M_t,\delta)
    +
    T\epsilon_{\mathcal E}
    \quad
    (\text{Theorem~\ref{thm:dynamic_regret}}).
\]

The causal graph produces an optimization advantage by coupling
intervention-response functions through shared identifiable mechanisms. The
regret depends on the rank of the shared causal parameterization rather than
on a collection of independent intervention-set surrogates
(Section~\ref{subsec:comparison_independent_gps}).

For nonlinear SEMs, Proposition~\ref{prop:path_lipschitz} provides a
graph-factorized path-Lipschitz error guarantee in place of the closed-form
linear Gaussian covariance. For the practical adaptive-kernel implementation,
Theorem~\ref{thm:adaptive_kernel_regret} gives an approximate regret bound
under explicit kernel-drift and adaptive variance-sum conditions.

\section{Algorithm}
\label{sec:algorithm}

This section translates the graph-coupled theory into a concrete sequential
optimization procedure. The algorithm maintains three objects across rounds:
(i) an observational dataset \(D^O_{N_t}\), which determines the posterior over
the shared causal parameterization \(\theta_Y\); (ii) an interventional dataset
\(D^I_t=\{(z_i,y_i)\}_{i=1}^{t-1}\), where \(z_i=(s_i,x_{s_i})\); and
(iii) a coupled causal Gaussian process surrogate whose kernel is induced by
the posterior uncertainty in \(\theta_Y\).

\begin{algorithm}[!ht]
\DontPrintSemicolon
\SetKwInOut{Input}{Input}
\SetKwInOut{Output}{Output}
\SetKwComment{Comment}{// }{}

\Input{Causal graph \(G\); exploration set \(\mathcal E\);
initial observational data \(D^O_{N_0}\);
intervention domains \(\{\mathcal X_s\}_{s\in\mathcal E}\);
horizon \(T\);
cost function \(c\); cost weight \(\eta\ge 0\);
confidence level \(\delta\in(0,1)\);
observation budget \(N_{\max}\);
observation probability \(p_{\mathrm{obs}}\).}

\Output{Estimated best intervention
\(\widehat z^\star=(\widehat s^\star,\widehat x^\star)\)
and value \(\widehat f^\star\).}

\BlankLine
\textbf{Initialization:}\;
Construct the shared parameterization \(\theta_Y\) by applying the ID
algorithm to each query \(P(V_y\mid do(X_s=x_s))\), \(s\in\mathcal E\)\;
Compute the initial posterior
\((\widehat\theta_{Y,0},\Sigma_{\theta,0})\) from \(D^O_{N_0}\)\;
Compute the response maps \(g_s(x_s;\theta_Y)\) and Jacobians
\(J_s(x_s;\widehat\theta_{Y,0})\) for \(s\in\mathcal E\)\;
Set \(D^I_0\leftarrow\emptyset\) and \(N\leftarrow N_0\)\;

\BlankLine
\For{\(t=1,\dots,T\)}{
    Compute $\epsilon_t
        =
        \epsilon_{\mathrm{param}}(N,\delta)
        +
        \epsilon_{\mathrm{adj}}(N,M_t,\delta)$\;
    Sample \(u\sim\mathrm{Uniform}(0,1)\)\;

    \uIf{\(u<p_{\mathrm{obs}}\) \textbf{and} \(N<N_{\max}\)}{
        \Comment*[h]{Observe step}\;
        Collect observational sample \(v_t\sim P(V)\)\;
        Set \(N\leftarrow N+1\)\;
        \(D^O_N\leftarrow D^O_{N-1}\cup\{v_t\}\)\;
        Update \((\widehat\theta_{Y,t},\Sigma_{\theta,t})\) from
        \(D^O_N\)\;
        Set \(D^I_t\leftarrow D^I_{t-1}\)\;
    }
    \Else{
        \Comment*[h]{Intervene step}\;
        Build the coupled causal kernel
        $k_t(z,z')
            =
            J_z(\widehat\theta_{Y,t})
            \Sigma_{\theta,t}
            J_{z'}(\widehat\theta_{Y,t})^\top$\;
            
        for \(z,z'\in\mathcal Z\)
        \Comment*[h]{or use \(k_0\) in the fixed-reference variant}\;
        Compute coupled GP posterior mean \(\widehat\mu_{t-1}(z)\) and
        standard deviation \(\widehat\sigma^{\mathrm{func}}_{t-1}(z)\)
        using \(D^I_{t-1}\)\;
        Select $z_t
            \in
            \arg\max_{z\in\mathcal Z}
            \left\{
                \widehat\mu_{t-1}(z)
                +
                \sqrt{\beta_t}\,
                \widehat\sigma^{\mathrm{func}}_{t-1}(z)
                +
                \epsilon_t
                -
                \eta c(z)
            \right\}$\;
        Perform \(do(X_{s_t}=x_{s_t})\), where \(z_t=(s_t,x_{s_t})\)\;
        Observe $y_t=f_{s_t}(x_{s_t})+\xi_t,
            \quad
            \xi_t\sim\mathcal N(0,\sigma^2)$\;
        \(D^I_t\leftarrow D^I_{t-1}\cup\{(z_t,y_t)\}\)\;
        Update the coupled GP posterior\;
    }
}

\BlankLine
Return $\widehat z^\star
    \in
    \arg\max_{(z_i,y_i)\in D^I_T}
    \widehat\mu_T(z_i),
    \quad
    \widehat f^\star=\widehat\mu_T(\widehat z^\star).$\;

\caption{Graph-coupled causal Bayesian optimization (GC-CBO)}
\label{alg:gccbo}
\end{algorithm}

\subsection{Graph-coupled CBO}
\label{subsec:algorithm_description}

Let
$
    \mathcal Z
    =
    \{(s,x_s):s\in\mathcal E,\ x_s\in\mathcal X_s\}
$
denote the joint intervention domain. In the fixed-reference version analyzed
in Theorem~\ref{thm:dynamic_regret}, the causal kernel is constructed once
from
$
    (\widehat\theta_{Y,0},\Sigma_{\theta,0}).
$
In the adaptive version, the kernel is updated as \(D^O_{N_t}\) grows, as
described in Section~\ref{subsec:adaptive_kernel_extension}. The pseudocode
below is written for the adaptive implementation; the fixed-reference variant
is obtained by replacing
$
    (\widehat\theta_{Y,t},\Sigma_{\theta,t})
$
by
$
    (\widehat\theta_{Y,0},\Sigma_{\theta,0})
$
in the kernel construction.

\subsection{Explanation and connection to the theory}
\label{subsec:algorithm_explanation}

\noindent\textbf{Constructing \(\theta_Y\).}
The initialization step applies the ID algorithm to every interventional query
in the exploration set and collects the unique identifiable causal quantities
needed to express all response functions
$
    f_s(x_s)=g_s(x_s;\theta_Y).
$
This is the constructive content of Assumption~\ref{ass:identifiability}.
The resulting dimension \(\dim(\theta_Y)\) is the quantity that controls the
finite-rank information-gain bound.

\noindent\textbf{Posterior over \(\theta_Y\).}
The posterior
$
    (\widehat\theta_{Y,t},\Sigma_{\theta,t})
$
is estimated from observational data. In conjugate linear Gaussian models this
update is closed-form; for smooth identifiable functionals,
\(\Sigma_{\theta,t}\) is interpreted as the posterior covariance of the chosen
shared parameterization, as in Assumption~\ref{ass:posterior_regular}.

\noindent\textbf{Coupled causal kernel.}
At an intervention query \(z=(s,x_s)\), the Jacobian
$
    J_z=J_s(x_s)
$
measures the sensitivity of the interventional response to the shared causal
parameters. The kernel
$
    k_t(z,z')
    =
    J_z\Sigma_{\theta,t}J_{z'}^\top
$
therefore couples intervention sets through posterior uncertainty in common
causal mechanisms. This is the algorithmic realization of
Theorem~\ref{thm:cross_intervention_covariance} and
Corollary~\ref{cor:finite_causal_rank}.

\noindent\textbf{Observation versus intervention.}
The observation branch reduces causal-estimation error by shrinking
\(\Sigma_{\theta,t}\) and therefore
$
    \epsilon_{\mathrm{param}}(N_t,\delta).
$
The intervention branch reduces optimization error by sampling high-value or
high-uncertainty interventional queries. The Bernoulli observation schedule is
a simple baseline policy. More adaptive policies can choose whether to observe
or intervene based on the current size of \(\Sigma_{\theta,t}\), the remaining
budget, or the expected reduction in \(\epsilon_t\).

\noindent\textbf{Regret-safe acquisition.}
The intervention branch uses
$
    \alpha_t^{\mathrm{safe}}(z)
    =
    \widehat\mu_{t-1}(z)
    +
    \sqrt{\beta_t}\,
    \widehat\sigma^{\mathrm{func}}_{t-1}(z)
    +
    \epsilon_t
    -
    \eta c(z).
$
The first two terms are the standard GP-UCB mean and exploration bonus. The
additive \(\epsilon_t\) term is the high-probability correction for causal
estimation error, introduced in Section~\ref{subsec:causal_ucb}. The cost
term penalizes expensive interventions. Setting \(\eta=0\) recovers the pure
value-regret objective analyzed in Theorem~\ref{thm:dynamic_regret}.

\noindent\textbf{Reporting rule.}
The algorithm returns the best intervention among those evaluated, measured by
the final posterior mean. This is the standard reporting rule for sequential
black-box optimization. One may instead return the best observed interventional
outcome or the posterior maximizer over all of \(\mathcal Z\).

\subsection{Soundness}
\label{subsec:algorithm_soundness}

The following proposition states that the algorithm inherits the regret
guarantee of the theory section when run in the fixed-reference setting.

\begin{proposition}[Soundness of fixed-reference GC-CBO]
\label{prop:gccbo_soundness}
Suppose Assumptions~\ref{ass:identifiability}--\ref{ass:fixed_kernel} hold.
Run Algorithm~\ref{alg:gccbo} in its fixed-reference variant with \(\eta=0\),
and suppose the causal-estimation event
\eqref{eq:causal_error_bound_theorem} and the standard GP-UCB confidence event
hold. Then the sequence of intervention queries satisfies the regret bound in
\eqref{eq:dynamic_regret_dim_bound}. If, in addition,
$
    \epsilon_{\mathrm{adj}}(N_t,M_t,\delta)=0,
    \qquad
    \epsilon_{\mathcal E}=0,
$
and
$
    N_t\asymp t,
$
then
\begin{equation}
    \frac{R_T}{T}
    =
    O\!\left(
        \sqrt{
            \frac{\beta_T\dim(\theta_Y)\log T}{T}
        }
    \right)
    +
    O_p(T^{-1/2}).
    \label{eq:gccbo_average_regret}
\end{equation}
Consequently, the best true intervention encountered by the algorithm
satisfies the same rate in simple regret:
\begin{equation}
    f^\star_{\mathrm{all}}
    -
    \max_{1\le t\le T} f(z_t)
    \le
    \frac{R_T}{T}.
    \label{eq:simple_regret_from_average}
\end{equation}
\end{proposition}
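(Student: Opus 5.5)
The plan is to reduce the proposition to Theorem~\ref{thm:dynamic_regret}, Lemma~\ref{lem:finite_rank_information_gain}, and Corollary~\ref{cor:growing_observational_data}, then convert cumulative regret into simple regret by an averaging argument.

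\textbf{Step 1: the algorithm is an instance of Theorem~\ref{thm:dynamic_regret}.} In the fixed-reference variant, every intervene step builds the kernel from $(\widehat\theta_{Y,0},\Sigma_{\theta,0})$. This is exactly the kernel in \eqref{eq:fixed_causal_kernel}, so Assumption~\ref{ass:fixed_kernel} applies. With $\eta=0$, the selection rule is $\arg\max_z \alpha_t^{\mathrm{safe}}(z)$, and $\epsilon_t$ is computed as in \eqref{eq:epsilon_t_decomposition}.

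Observe steps add no interventional query and leave the GP posterior unchanged. I will therefore index regret by intervention rounds only. Reindexing the rounds and letting $N_t$ be the current observational count is enough; the GP-UCB variance-sum inequality only needs the queried points in sequence. On the two stated events, Theorem~\ref{thm:dynamic_regret} then gives \eqref{eq:dynamic_regret_bound}. Combining Lemma~\ref{lem:finite_rank_information_gain} with Corollary~\ref{cor:finite_causal_rank} yields \eqref{eq:dynamic_regret_dim_bound}.

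\textbf{Step 2: the rate.} Set $\epsilon_{\mathrm{adj}}=0$ and $\epsilon_{\mathcal E}=0$, and use \eqref{eq:param_error_rate} to get $\epsilon_{\mathrm{param}}(N_t,\delta)=O_p(N_t^{-1/2})$. Corollary~\ref{cor:growing_observational_data} with $\alpha=1$ then gives
\[
R_T=O\!\left(\sqrt{T\beta_T\dim(\theta_Y)\log T}\right)+O_p(\sqrt T),
\]
where $r_{\mathrm{causal}}\le\dim(\theta_Y)$ is used once more. Dividing by $T$ gives \eqref{eq:gccbo_average_regret}.

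\textbf{Step 3: simple regret.} Since $r_t=f^\star_{\mathrm{all}}-f(z_t)$, the maximum of $f(z_t)$ is at least its average. Hence
\[
f^\star_{\mathrm{all}}-\max_t f(z_t)\le \frac1T\sum_t r_t=\frac{R_T}{T},
\]
which is \eqref{eq:simple_regret_from_average}.

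\textbf{Main obstacle.} The delicate point is reconciling the Bernoulli observation schedule with the hypothesis $N_t\asymp t$ and with the round indexing. Observe rounds consume the horizon, and $N_t$ is capped at $N_{\max}$. I would handle this by stating the result on the event where $N_t\asymp t$ holds, which is an assumption of the proposition. I would also note that a capped $N_{\max}$ is only compatible with that assumption if $N_{\max}\gtrsim T$. Otherwise the $\epsilon_{\mathrm{param}}$ sum degrades to $O(T/\sqrt{N_{\max}})$.
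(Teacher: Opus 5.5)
Your proposal is correct and follows the paper's proof essentially step for step. The paper likewise invokes Theorem~\ref{thm:dynamic_regret} together with Corollary~\ref{cor:finite_causal_rank} to get \eqref{eq:dynamic_regret_dim_bound}, applies Corollary~\ref{cor:growing_observational_data} under linear observational growth and divides by $T$ to get \eqref{eq:gccbo_average_regret}, and uses max-at-least-average for the simple-regret claim. Your extra bookkeeping is sound and covers points the paper's short proof leaves implicit: you reindex by intervention rounds only, and you note that the cap $N_{\max}$ must be at least of order $T$ for $N_t\asymp t$ to be tenable.
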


\begin{proof}
In the fixed-reference variant, Algorithm~\ref{alg:gccbo} uses the kernel
$
    k_{\mathrm{causal}}(z,z')
    =
    J_z(\widehat\theta_{Y,0})
    \Sigma_{\theta,0}
    J_{z'}(\widehat\theta_{Y,0})^\top
$
and the regret-safe acquisition analyzed in
Theorem~\ref{thm:dynamic_regret}. Therefore the regret bound in \eqref{eq:dynamic_regret_dim_bound} follows directly from
Theorem~\ref{thm:dynamic_regret} and
Corollary~\ref{cor:finite_causal_rank}. Under linear observational growth,
Corollary~\ref{cor:growing_observational_data} gives
\eqref{eq:gccbo_average_regret}. Finally,
$
    f^\star_{\mathrm{all}}
    -
    \max_{1\le t\le T} f(z_t)
    \le
    \frac{1}{T}
    \sum_{t=1}^T
    \left(
        f^\star_{\mathrm{all}}-f(z_t)
    \right)
    =
    \frac{R_T}{T},
$
which proves the simple-regret claim.
\end{proof}

\begin{remark}[Returned posterior maximizer]
The simple-regret guarantee in Proposition~\ref{prop:gccbo_soundness} applies
to the best true queried intervention,
$
    \arg\max_{1\le t\le T} f(z_t).
$
The posterior-mean reporting rule in Algorithm~\ref{alg:gccbo} is a practical
estimator of this best queried intervention and is consistent on the same
confidence event when posterior mean error vanishes.
\end{remark}

\begin{remark}[Adaptive implementation]
If Algorithm~\ref{alg:gccbo} updates the kernel at every round using
$
    (\widehat\theta_{Y,t},\Sigma_{\theta,t}),
$
then its guarantee is the adaptive-kernel bound in
Theorem~\ref{thm:adaptive_kernel_regret}, provided the drift and adaptive
variance-sum assumptions of Section~\ref{subsec:adaptive_kernel_extension}
hold.
\end{remark}

\subsection{Complexity analysis}
\label{subsec:algorithm_complexity}

Let
$
    d=\dim(\theta_Y),
    \qquad
    r=r_{\mathrm{causal}}\le d,
$
and let \(M=|\mathcal Z|\) denote the number of candidate intervention queries
after discretizing the domains \(\mathcal X_s\). For continuous domains,
\(M\) should be replaced by the cost of the numerical acquisition optimizer.

\noindent\textbf{Posterior update over \(\theta_Y\).}
A direct Bayesian linear regression update over \(d\) parameters costs
\(O(d^3)\) if a full covariance matrix is inverted. With rank-one or online
updates, this can be reduced to \(O(d^2)\) per observational sample after an
initial factorization.

\noindent\textbf{Coupled GP update.}
After \(t\) interventional evaluations, the naive GP posterior update costs
\(O(t^3)\) time and \(O(t^2)\) memory. Because the causal kernel has the
low-rank form
$
    K_t=\Phi_t\Phi_t^\top,
    \qquad
    \Phi_t\in\mathbb R^{t\times r},
$
the matrix inversion lemma gives
$
    O(tr^2+r^3)
$
time for the core low-rank solve and
$
    O(tr+r^2)
$
memory, assuming \(\Phi_t\) has been computed.

\noindent\textbf{Acquisition evaluation.}
Evaluating the acquisition over \(M\) candidate queries requires computing
feature vectors
$
    \phi(z)=\Sigma_{\theta,t}^{1/2}J_z^\top
$
and posterior mean/variance terms. Direct evaluation costs \(O(Mr)\) for
feature construction when Jacobians are available, plus posterior variance
evaluation. Using the low-rank posterior representation, this is typically
$
    O(Mr^2)
$
per acquisition sweep. For continuous domains, \(M\) is replaced by the number
of optimizer evaluations times the same per-query cost.

\noindent\textbf{Total complexity.}
Ignoring observational-data storage and assuming \(T\) intervention rounds, the
low-rank coupled GP computations cost approximately
$
    O(T^2r^2+Tr^3)
$
over the full run, while exhaustive acquisition evaluation over a discretized
domain contributes
$
    O(TMr^2).
$
The total computational cost is therefore
\begin{equation}
    O\!\left(
        T^2r^2
        +
        Tr^3
        +
        TMr^2
    \right),
    \label{eq:gccbo_complexity}
\end{equation}
plus the cost of updating the observational posterior over \(\theta_Y\). Since
\(r\le d\), this is substantially cheaper than a full GP update when
\(r\ll T\).

\noindent\textbf{Memory complexity.}
The low-rank implementation stores the feature matrix
$
    \Phi_t\in\mathbb R^{t\times r}
$
and the posterior covariance over \(\theta_Y\), requiring
$
    O(tr+d^2)
$
memory plus observational data storage. This improves over the \(O(t^2)\)
memory required by storing a dense GP kernel matrix.

\noindent\textbf{Comparison with independent CBO.}
Independent intervention-set CBO maintains separate GP surrogates for each
\(s\in\mathcal E\). If \(t_s\) interventions are allocated to set \(s\), then
dense kernel factorizations recomputed from scratch have total cost
$
    \sum_{s\in\mathcal E} O(t_s^3)
$
across the separate kernels. With incremental Cholesky updates this can be
reduced, but the independent surrogates still do not share information across
intervention sets. Graph-coupled CBO instead uses one low-rank surrogate over
the joint domain. The main advantage is statistical rather than only
computational: interventional data from one set can reduce posterior
uncertainty for another whenever the responses share identifiable causal
parameters, as quantified by
Section~\ref{subsec:comparison_independent_gps}.

\section{Experimental Results}
\label{sec:experiments}

We evaluate GC-CBO from the strongest theory-aligned setting to progressively more realistic and more challenging scenarios.  All experiments are minimization problems and report the best value found so far as a function of cumulative intervention cost.  For each method, curves show the median trajectory across random seeds with interquartile bands.  Tables report terminal mean, terminal standard deviation, terminal median, cumulative cost, and the first cost at which the median curve reaches an $\epsilon$-neighborhood of the optimum.  Unless otherwise stated, GC-CBO uses the paper-aligned lower-confidence-bound acquisition
\[
        \mu_t(z)-\sqrt{\beta_t}\sigma_t(z)+\eta c(z),
\]
where the posterior covariance includes the proposed causal finite-rank coupling term.  The extra causal-uncertainty bonus is set to zero in the main experiments to avoid double-counting uncertainty already present in the coupled posterior.

In all experiments, the horizontal axis is cumulative \emph{intervention} cost rather than the number of optimization iterations.  Observational data are used as an offline resource to estimate the initial causal model, causal prior, and parameter uncertainty; they are not charged on the intervention-cost axis.  Each interventional query incurs cost equal to the number of variables actively manipulated in that query.  Thus, an intervention on a singleton costs $1$, an intervention on a pair costs $2$, and an intervention on a set $S$ costs $|S|$.  This convention reflects the increasing experimental burden of manipulating more variables and is especially important in the ECOLI70 no-parent study, where allowing larger intervention sets both expands the feasible search family and permits more expensive high-order interventions.

\noindent\textbf{Linear-Gaussian theory validation.}
We first test the setting that directly matches the theory: a linear-Gaussian chain
\[
        X \longrightarrow Z \longrightarrow Y,
        \qquad
        Z=aX+\varepsilon_Z,
        \qquad
        Y=bZ+\varepsilon_Y .
\]
This example is deliberately small, but it is the cleanest test of the main mechanism.  The two intervention functions,
\[
        f_X(x)=\mathbb{E}[Y\mid do(X=x)]
        \quad\text{and}\quad
        f_Z(z)=\mathbb{E}[Y\mid do(Z=z)],
\]
are not independent black-box functions: both are determined by the same low-dimensional structural parameter vector $\theta_Y=(a,b)$.  GC-CBO should therefore induce a low-rank cross-intervention covariance, whereas independent CBO treats the two intervention functions separately.  The corresponding finite-rank, cross-covariance, information-gain, and parameter-concentration diagnostics are shown in Figure~\ref{fig:theory-diagnostics}, with the main numerical quantities summarized in Table~\ref{tab:theory-summary}.

The fitted parameters were
\[
        \widehat{\theta}_Y=(0.8515,-1.3654),
        \qquad
        \theta_Y=(0.8,-1.3).
\]
As reported in Table~\ref{tab:theory-summary} and visualized in the first panel of Figure~\ref{fig:theory-diagnostics}, the empirical coupled kernel had
\[
        \operatorname{rank}(K_T)=2=\dim(\theta_Y),
\]
which exactly matches the finite-rank prediction.  Moreover, the numerically estimated cross-intervention covariance agreed with the analytic covariance to machine precision over the tested query pairs, as shown in the second panel of Figure~\ref{fig:theory-diagnostics} and summarized by the zero cross-covariance error in Table~\ref{tab:theory-summary}.  This is the most direct validation that the implementation is using the intended causal kernel rather than merely adding another generic covariance function.

\begin{figure*}[t]
    \centering
    \includegraphics[width=.24\textwidth]{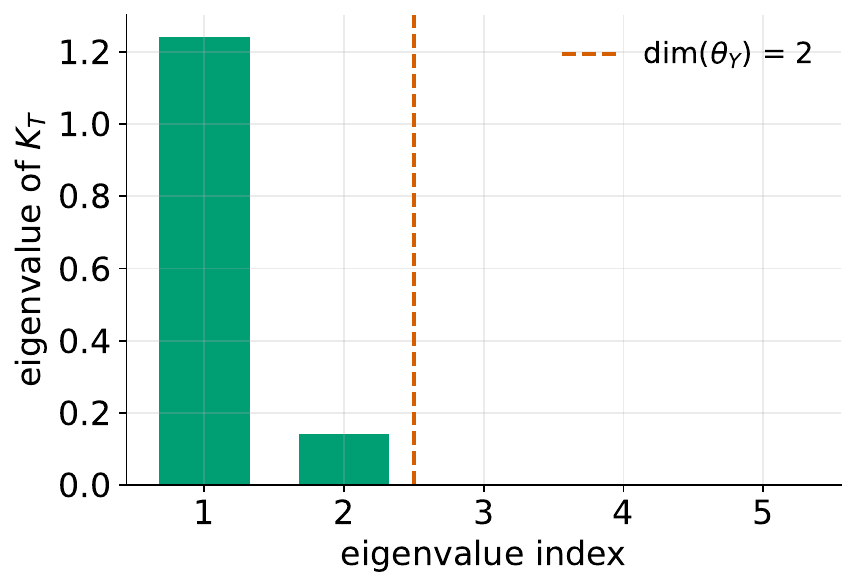}
    \includegraphics[width=.24\textwidth]{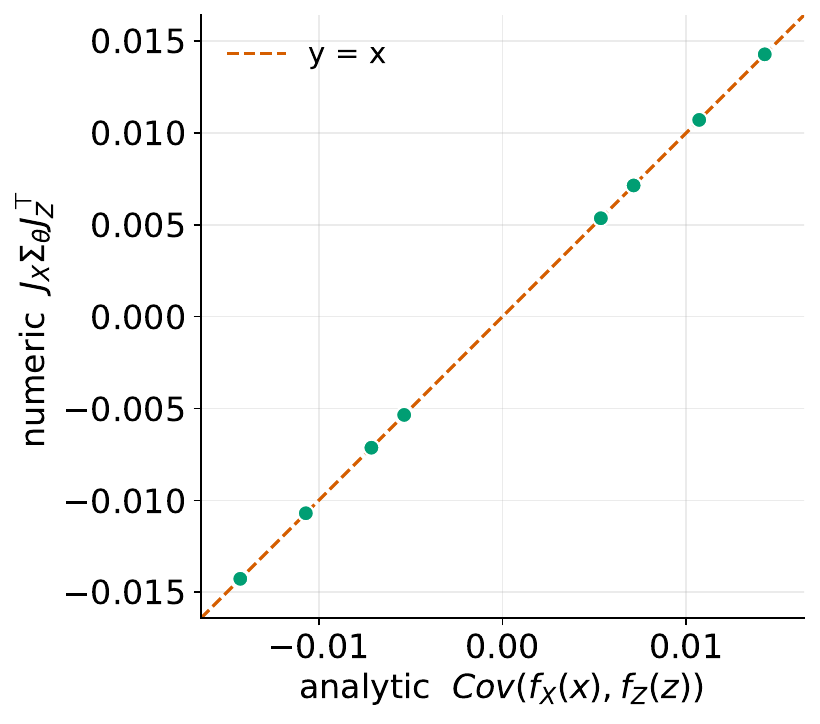}
    \includegraphics[width=.24\textwidth]{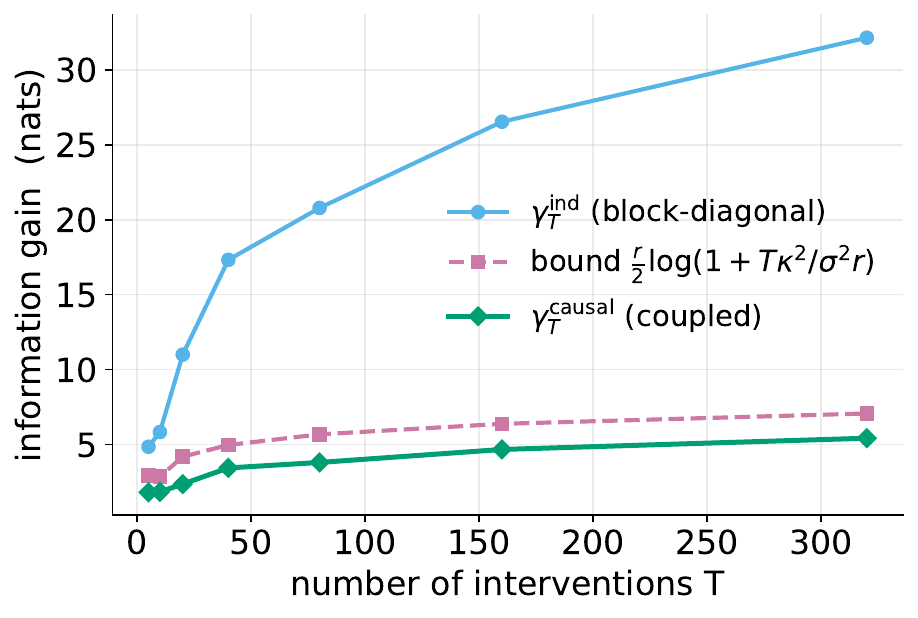}
    \includegraphics[width=.24\textwidth]{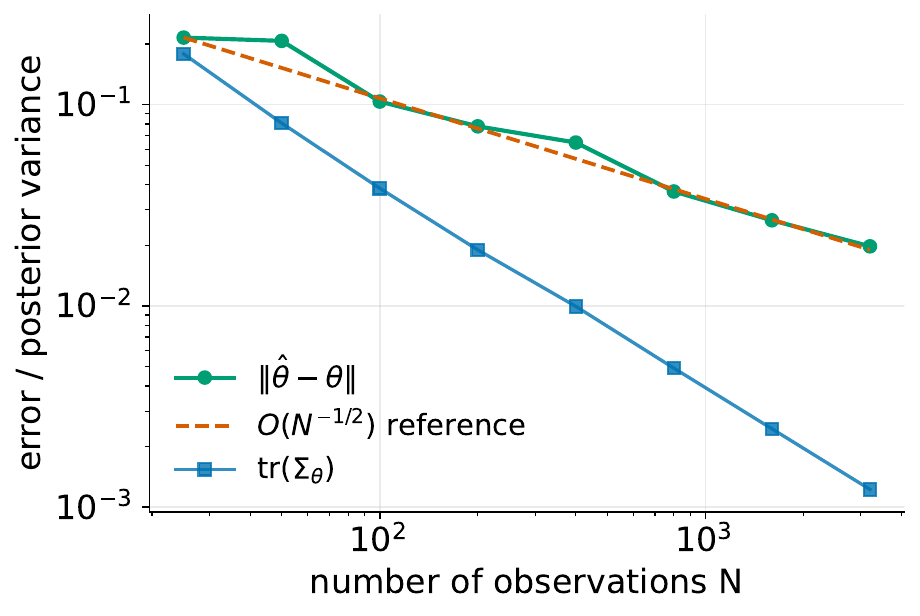}
    \caption{
    Theory-aligned diagnostics on the linear-Gaussian chain.
    Left to right: finite rank of the causal kernel, analytic versus numerical cross-intervention covariance, information gain of the causal kernel compared with an independent kernel, and concentration of the fitted structural parameters as the observational sample size increases.
    }
    \label{fig:theory-diagnostics}
\end{figure*}

The information-gain experiment further supports the theory, as shown in the third panel of Figure~\ref{fig:theory-diagnostics} and reported in Table~\ref{tab:theory-summary}.  At $T=320$, the causal kernel has information gain $5.416$, while the corresponding independent kernel has information gain $32.152$.  Thus, the coupled kernel grows far more slowly because it reuses a shared causal parameterization across intervention functions.  The empirical value also remains below the finite-rank bound used in the regret analysis, as shown in Table~\ref{tab:theory-summary}.  Finally, the parameter-concentration diagnostic in the fourth panel of Figure~\ref{fig:theory-diagnostics} shows that the estimation error decreases as the observational sample size grows; the trace of the parameter covariance falls from $0.1784$ at $N=25$ to $0.00123$ at $N=3200$, with the latter value reported in Table~\ref{tab:theory-summary}.  These results justify the central modeling choice of GC-CBO: when intervention functions are linked by shared structural mechanisms, the optimizer should exploit this coupling rather than learning each function from scratch.

\begin{table}[t]
\centering
\caption{Theory-aligned diagnostics on the linear-Gaussian chain.  The coupled causal kernel has rank equal to the target-relevant parameter dimension and substantially smaller information gain than an independent kernel.}
\label{tab:theory-summary}
\begin{tabular}{lccc}
\toprule
Diagnostic & Quantity & Value & Interpretation \\
\midrule
Finite rank & $\operatorname{rank}(K_T)$ & $2$ & Equals $\dim(\theta_Y)=2$ \\
Cross-covariance error & $\max |\widehat{k}-k|$ & $0.0$ & Matches analytic covariance \\
Information gain & $\gamma_{\rm causal}(320)$ & $5.416$ & Below rank-based bound $7.068$ \\
Independent baseline & $\gamma_{\rm ind}(320)$ & $32.152$ & Much larger than causal kernel \\
Parameter concentration & $\operatorname{tr}(\Sigma_{\theta})$ at $N=3200$ & $0.00123$ & Decreases with observational data \\
\bottomrule
\end{tabular}
\end{table}

\noindent\textbf{Cross-set transfer stress tests.}
The finite-rank diagnostics in Figure~\ref{fig:theory-diagnostics} and Table~\ref{tab:theory-summary} establish that the kernel has the right algebraic form.  We next ask whether this structure improves optimization when the intervention family is large and only sparse interventional data are available.  To isolate this effect, we use two stress tests in which several intervention functions share a downstream mechanism.  In the linear confounded-funnel case, held-out intervention sets are predicted from a small number of observed intervention outcomes on related sets.  Independent CBO cannot transfer across sets, while GC-CBO can transfer through the shared causal kernel.  The resulting transfer behavior is plotted in Figure~\ref{fig:stress-tests} and quantified in Table~\ref{tab:stress-results}.

The difference is substantial, as shown by the left panel of Figure~\ref{fig:stress-tests} and the linear-funnel columns of Table~\ref{tab:stress-results}.  With budget $3$, GC-CBO has prediction error $0.7707$, compared with $1.5801$ for independent CBO.  As the budget increases to $48$, GC-CBO improves to $0.1726$, while CBO remains around $1.6320$.  This is the clearest empirical demonstration of the proposed mechanism: GC-CBO does not merely optimize a single intervention function; it learns shared causal structure that generalizes across related intervention functions.

\begin{figure*}[t]
    \centering
    \includegraphics[width=.47\textwidth]{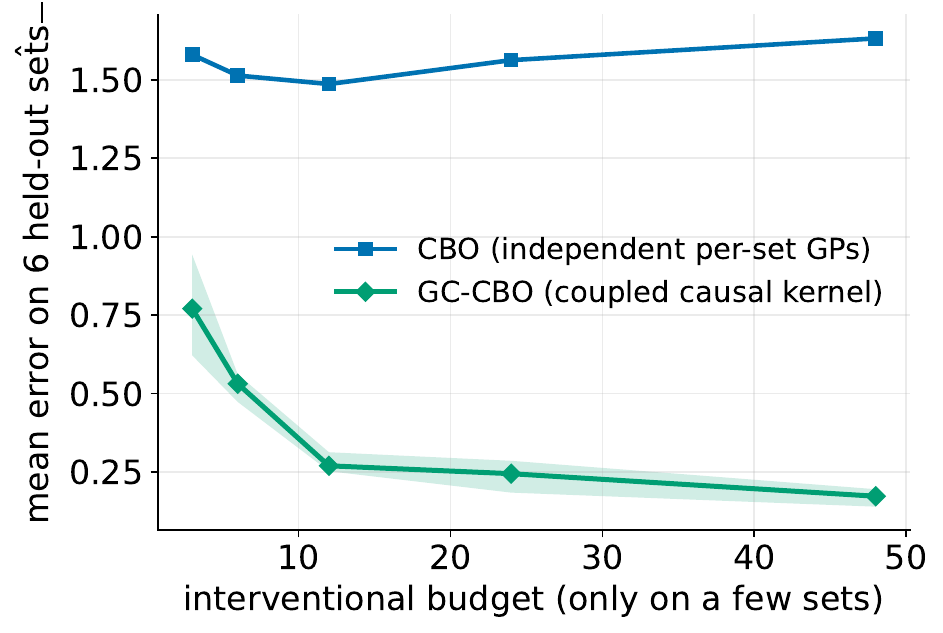}
    \includegraphics[width=.47\textwidth]{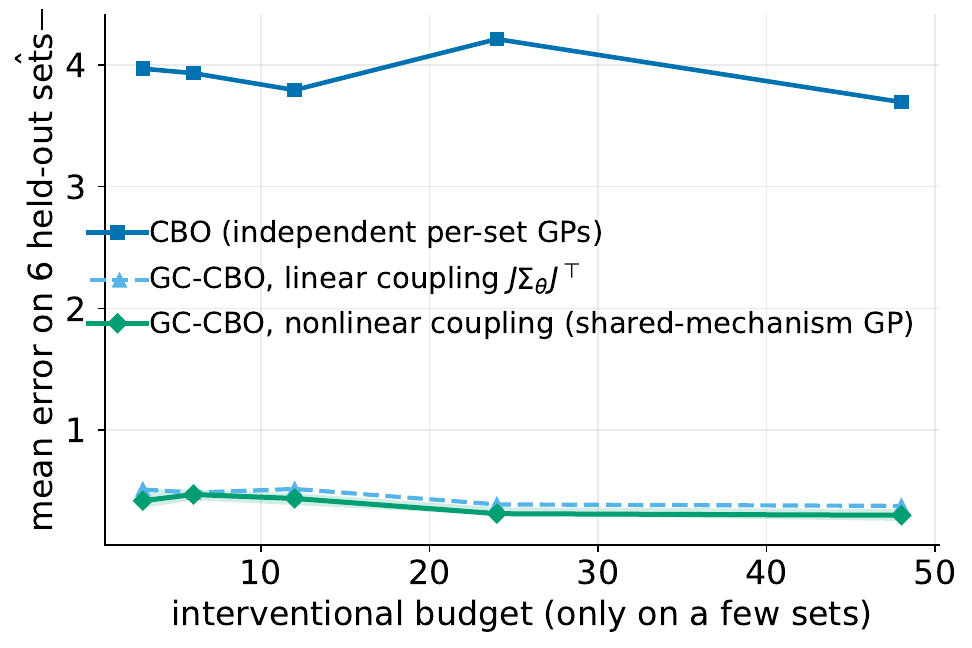}
    \caption{
    Cross-set transfer stress tests.  Left: linear confounded-funnel setting, where GC-CBO substantially reduces held-out intervention error compared with independent CBO.  Right: nonlinear shared-mechanism setting, where both linearized and nonlinear GC-CBO variants outperform independent CBO, with the nonlinear variant giving the best performance.
    }
    \label{fig:stress-tests}
\end{figure*}

The nonlinear shared-mechanism stress test gives a similar message but also reveals a limitation, as shown in the right panel of Figure~\ref{fig:stress-tests} and the nonlinear-funnel columns of Table~\ref{tab:stress-results}.  Independent CBO remains inaccurate, with error near $3.69$--$4.21$ across budgets.  GC-CBO with a linearized causal kernel reduces the error to about $0.38$ by budget $48$, while the nonlinear shared-mechanism version reaches $0.3017$.  Thus, even an approximate causal coupling is useful, but nonlinear mechanisms benefit from richer causal features.  This supports the paper's emphasis on finite-rank and locally linear causal representations while also clarifying that nonlinear domains may require adaptive or learned Jacobian features.

\begin{table}[t]
\centering
\caption{Cross-set transfer stress tests.  Errors are held-out intervention-function errors; lower is better.}
\label{tab:stress-results}
\begin{tabular}{cccccc}
\toprule
Budget & \multicolumn{2}{c}{Linear funnel} & \multicolumn{3}{c}{Nonlinear funnel} \\
\cmidrule(lr){2-3}\cmidrule(lr){4-6}
& CBO & GC-CBO & CBO & GC-CBO linear & GC-CBO nonlinear \\
\midrule
$3$  & $1.5801$ & $0.7707$ & $3.9687$ & $0.5139$ & $0.4212$ \\
$6$  & $1.5136$ & $0.5312$ & $3.9317$ & $0.4875$ & $0.4724$ \\
$12$ & $1.4865$ & $0.2697$ & $3.7933$ & $0.5179$ & $0.4396$ \\
$24$ & $1.5627$ & $0.2445$ & $4.2117$ & $0.3909$ & $0.3147$ \\
$48$ & $1.6320$ & $0.1726$ & $3.6944$ & $0.3787$ & $0.3017$ \\
\bottomrule
\end{tabular}
\end{table}

\noindent\textbf{Gaussian-network case study: ECOLI70.}
We then evaluate GC-CBO on a real Gaussian Bayesian network derived from the ECOLI70 network.  This case study is important because it matches the paper narrative more closely than the nonlinear toy benchmarks: the graph is a Gaussian network, intervention effects are induced by structural equations, and multiple intervention functions share downstream causal parameters.  We consider two variants.  The parent-intervention variant is included in the main benchmark comparison in Figure~\ref{fig:benchmark-convergence} and Table~\ref{tab:main-benchmarks}, while the no-parent ancestor-intervention variant is shown separately in Figure~\ref{fig:ecoli70-noparent} and Tables~\ref{tab:ecoli70-noparent}--\ref{tab:ecoli70-ablation}.

The first variant allows direct intervention on the parents of the target.  For target \texttt{yaeM}, the optimal intervention set is
\[
        \{\texttt{cspG},\texttt{lacA},\texttt{lacZ}\},
\]
with optimal value
\[
        f^\star=-4.6304 .
\]
Both CBO and GC-CBO reach this optimum reliably, whereas vanilla BO remains far from it, as shown in the ECOLI70 \texttt{yaeM} rows of Table~\ref{tab:main-benchmarks} and the corresponding panel of Figure~\ref{fig:benchmark-convergence}.  Specifically, BO obtains terminal mean $-1.9102$, while CBO and GC-CBO both obtain $-4.6304$.  This experiment confirms that GC-CBO behaves correctly in a theorem-aligned Gaussian-network setting and that the finite-rank diagnostic again holds:
\[
        \operatorname{rank}(K_T)=16=\dim(\theta_Y).
\]
However, this setting is also too easy for CBO because the direct parent set is available and the exploration family is small.  Consequently, this experiment should be interpreted as a correctness check rather than as the main evidence for superiority over CBO.

The second ECOLI70 variant is designed to reflect a more realistic experimental constraint: direct parents of the target cannot be manipulated.  For target \texttt{b1583}, the direct parents
\[
        \{\texttt{lacA},\texttt{lacZ},\texttt{yceP}\}
\]
are excluded from the intervention set.  The allowed variables are instead non-parent ancestors, and the optimizer searches over singleton, pair, triple, and up-to-five-variable ancestor interventions, yielding a substantially larger intervention-family search problem.  This setting better exposes the difference between independent CBO and GC-CBO: CBO must learn many set-specific response surfaces, whereas GC-CBO can share information through the downstream Gaussian mechanism.  The results of this no-parent case are plotted in Figure~\ref{fig:ecoli70-noparent} and reported in Table~\ref{tab:ecoli70-noparent}.  Because vanilla BO over the full manipulative set would violate the imposed set-size cap, the no-parent ECOLI70 comparisons intentionally focus on causal methods and ablations that respect the same intervention-family constraint.

For singleton and pair interventions, the optimum is
\[
        f^\star=0.8427,
        \qquad
        \text{best set}=\{\texttt{lacY},\texttt{eutG}\}.
\]
Both CBO and GC-CBO have terminal median $0.8427$, but their terminal means are $0.8930$ due to seed variability, as reported in the max-set-size-$2$ rows of Table~\ref{tab:ecoli70-noparent}.  The corresponding convergence behavior is shown in the left panel of Figure~\ref{fig:ecoli70-noparent}.  When triple interventions are allowed, the optimal set becomes
\[
        \{\texttt{lacY},\texttt{eutG},\texttt{fixC}\},
        \qquad
        f^\star=0.6422 .
\]
In this harder setting, the no-parent sweep in Table~\ref{tab:ecoli70-noparent} shows that GC-CBO improves the terminal mean from $0.7315$ for CBO to $0.7154$ and reduces the terminal standard deviation from $0.1377$ to $0.1097$.  When the maximum intervention-set size is increased to $5$, the feasible optimum improves further to
\[
        f^\star=0.3183,
        \qquad
        \text{best set}=
        \{\texttt{lacY},\texttt{eutG},\texttt{fixC},\texttt{cspG},\texttt{sucA}\}.
\]
This final sweep is substantially harder because the best feasible intervention is now a high-order ancestor set and the admissible intervention family is much larger.  As shown in Table~\ref{tab:ecoli70-noparent}, GC-CBO reaches the optimum with terminal mean and median $0.3183$, whereas CBO remains above the optimum with terminal mean $0.7688$ and median $0.6927$.  The full convergence experiment in Table~\ref{tab:main-benchmarks} gives a consistent comparison under the same max-set-size-$5$ constraint: CBO reaches terminal mean $0.6680$, whereas GC-CBO reaches $0.3500$, with a lower terminal standard deviation of $0.0630$ compared with $0.1550$.  The gain is now much clearer and occurs in the scenario where transfer across many ancestor-intervention functions is most useful.

\begin{figure*}[t]
    \centering
    \includegraphics[width=.69\textwidth]{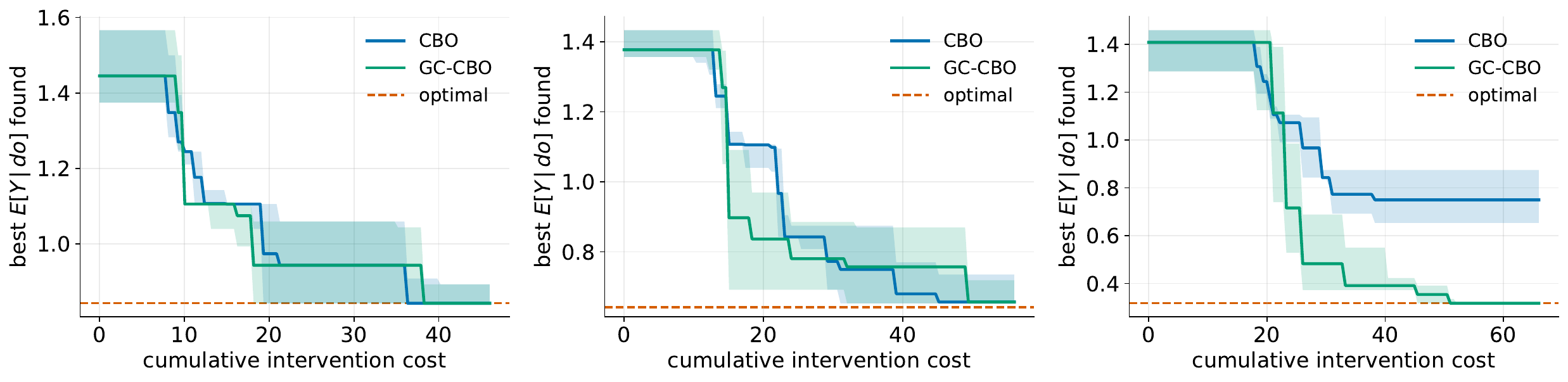}
    \includegraphics[width=.3\textwidth]{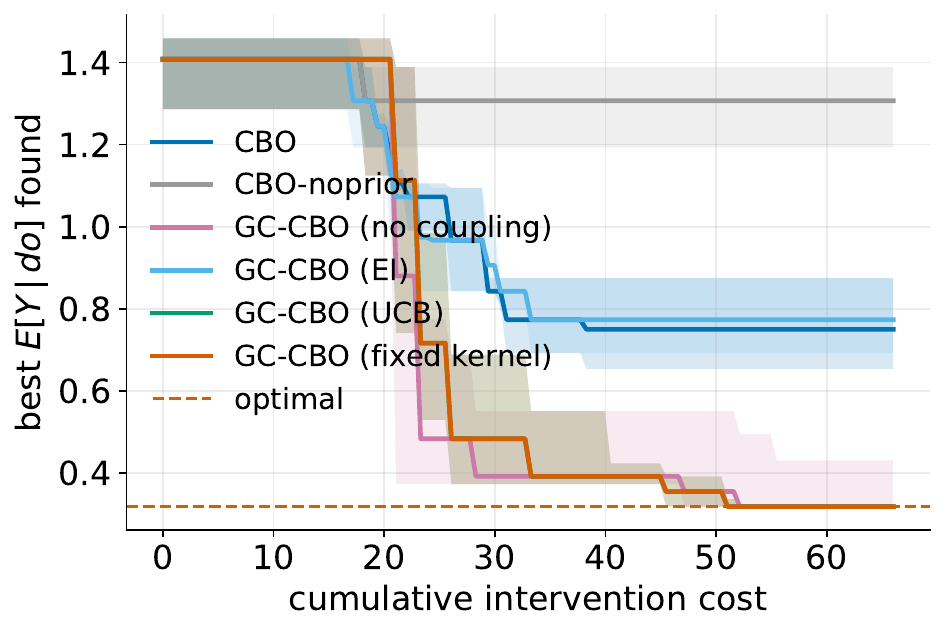}
    \caption{
    ECOLI70 no-parent intervention experiments for target \texttt{b1583}.  Direct parents are excluded from manipulation.  Left: sweep over maximum intervention-set size.  For maximum set sizes $2$ and $3$, the feasible intervention families are still small enough that both CBO and GC-CBO reach or approach the corresponding optimum.  When the maximum set size increases to $5$, the feasible optimum moves to a higher-order ancestor intervention and the intervention family becomes much larger; GC-CBO reaches the new optimum, while independent CBO does not within the tested budget.  The cumulative cost increases with maximum set size because each intervention on a set $S$ costs $|S|$, and larger caps allow the optimizer to query larger, more expensive intervention sets.  Right: ablation study in the max-set-size-$5$ setting, showing that removing causal prior information substantially degrades performance, while the UCB and fixed-kernel GC-CBO variants reach the optimum.
    }
    \label{fig:ecoli70-noparent}
\end{figure*}

\begin{table}[t]
\centering
\caption{ECOLI70 no-parent intervention sweep for target \texttt{b1583}.  Direct parents are excluded from the intervention set.  Lower values are better.}
\label{tab:ecoli70-noparent}
\begin{tabular}{llcccccc}
\toprule
Max set size & Method & Mean & Std. & Median & $f^\star$ & Best set & Rank/dim. \\
\midrule
$2$ & CBO    & $0.8930$ & $0.0872$ & $0.8427$ & $0.8427$ & \texttt{lacY;eutG} & -- \\
$2$ & GC-CBO & $0.8930$ & $0.0872$ & $0.8427$ & $0.8427$ & \texttt{lacY;eutG} & $19/19$ \\
$3$ & CBO    & $0.7315$ & $0.1377$ & $0.6571$ & $0.6422$ & \texttt{lacY;eutG;fixC} & -- \\
$3$ & GC-CBO & $0.7154$ & $0.1097$ & $0.6571$ & $0.6422$ & \texttt{lacY;eutG;fixC} & $19/19$ \\
$5$ & CBO    & $0.7688$ & $0.1723$ & $0.6927$ & $0.3183$ & \texttt{lacY;eutG;fixC;asnA} & -- \\
$5$ & GC-CBO & $0.3183$ & $0.0000$ & $0.3183$ & $0.3183$ & \texttt{lacY;eutG;fixC;cspG;sucA} & $19/19$ \\
\bottomrule
\end{tabular}
\end{table}

The ablation study clarifies which components matter most in the max-set-size-$5$ no-parent setting.  In this harder ECOLI70 setting, CBO with causal prior reaches terminal mean $0.7775$, while CBO without the causal prior remains at $1.2752$, as reported in Table~\ref{tab:ecoli70-ablation} and shown in the right panel of Figure~\ref{fig:ecoli70-noparent}.  This is the largest ablation gap and shows that causal structure is essential when direct parent interventions are unavailable.  The GC-CBO variants separate more clearly than in the max-set-size-$3$ case: GC-CBO without explicit cross-set coupling reaches $0.4315$, GC-CBO with EI reaches $0.7934$, and the UCB and fixed-kernel variants reach $0.3183$, exactly matching the optimum.  The UCB and fixed-kernel variants also achieve terminal median equal to the optimum with zero terminal variance.  These results indicate that, in the no-parent ECOLI70 experiment, the causal prior and intervention-family restriction are essential, while the coupled UCB kernel provides the strongest and most stable performance when the family becomes combinatorial.  The broader implication is that the finite-rank GC-CBO coupling becomes most valuable when many related intervention functions must be learned from sparse data, as shown more clearly by the stress-transfer experiments in Figure~\ref{fig:stress-tests} and Table~\ref{tab:stress-results}.

\begin{table}[t]
\centering
\caption{Ablation study on the ECOLI70 no-parent max-set-size-$5$ setting.  The causal prior is essential; the UCB and fixed-kernel GC-CBO variants reach the optimum.}
\label{tab:ecoli70-ablation}
\begin{tabular}{lcccc}
\toprule
Method & Mean & Std. & Median & Cost to $\epsilon$-opt. \\
\midrule
CBO & $0.7775$ & $0.1363$ & $0.7495$ & -- \\
CBO-noprior & $1.2752$ & $0.1542$ & $1.3074$ & -- \\
GC-CBO no coupling & $0.4315$ & $0.1961$ & $0.3183$ & $28.3$ \\
GC-CBO EI & $0.7934$ & $0.1227$ & $0.7734$ & -- \\
GC-CBO UCB & $0.3183$ & $0.0000$ & $0.3183$ & $33.3$ \\
GC-CBO fixed kernel & $0.3183$ & $0.0000$ & $0.3183$ & $33.3$ \\
\bottomrule
\end{tabular}
\end{table}

\noindent\textbf{Benchmark sanity checks.}
Finally, we compare BO, CBO, and GC-CBO on the toy, synthetic, and healthcare benchmarks  from \cite{aglietti2020causalbo} used as sanity checks.  These examples are less directly aligned with the linear-Gaussian theory, especially the synthetic and healthcare settings, so we use them to test robustness rather than to validate the finite-rank theory.  Their convergence behavior is shown in Figure~\ref{fig:benchmark-convergence}, and the corresponding terminal statistics are reported in Table~\ref{tab:main-benchmarks}.

On the toy chain, the optimum is approximately $-2.17$ and is achieved by intervening on $Z$.  BO jointly intervenes on $\{X,Z\}$ and incurs higher intervention cost.  It reaches only $-1.8224$ on average, while CBO reaches $-2.1687$ and GC-CBO reaches $-2.1693$, as reported in the toy-chain rows of Table~\ref{tab:main-benchmarks}.  The corresponding panel in Figure~\ref{fig:benchmark-convergence} shows that both causal methods therefore recover the optimum, but with roughly half the cumulative cost of BO.  This confirms the classical CBO message and shows that GC-CBO preserves it.

The synthetic nonlinear DAG is more challenging.  The optimum is $-2.0$, with best set $\{D,E\}$.  BO reaches $-1.9216$, CBO reaches $-1.9019$, and GC-CBO reaches $-1.9105$, as shown in Table~\ref{tab:main-benchmarks}.  The synthetic-DAG panel of Figure~\ref{fig:benchmark-convergence} shows that all three methods remain away from the continuous optimum under the tested grid and budget.  Here GC-CBO is slightly better than CBO but does not dominate BO.  This is not a contradiction of the theory because the benchmark is nonlinear and the causal coupling is only a first-order approximation.  Rather, it illustrates a boundary of the current implementation: when the structural equations are nonlinear and the finite-rank approximation is imperfect, GC-CBO should be viewed as a useful inductive bias rather than a guaranteed improvement.

The healthcare PSA benchmark is also noisy.  All methods obtain values close to the deterministic reference value $5.155$, with BO at $5.1461$, CBO at $5.1560$, and GC-CBO at $5.1488$, as reported in the healthcare rows of Table~\ref{tab:main-benchmarks}.  The healthcare panel in Figure~\ref{fig:benchmark-convergence} shows that the methods remain close throughout the tested budget range.  Since the simulator includes stochastic outcomes, sample-based terminal values can fluctuate slightly below the deterministic reference.  We therefore interpret this experiment only qualitatively: GC-CBO remains stable and competitive, but this benchmark is not the strongest evidence for the proposed finite-rank mechanism.

The ECOLI70 no-parent max-set-size-$5$ benchmark is also included in Figure~\ref{fig:benchmark-convergence} and Table~\ref{tab:main-benchmarks}.  Unlike the parent-intervention ECOLI70 setting, this benchmark excludes direct parent interventions and forces the optimizer to search through upstream ancestor sets.  CBO reaches terminal mean $0.6680$, while GC-CBO reaches $0.3500$, and GC-CBO identifies the optimal five-variable intervention set \texttt{lacY;eutG;fixC;cspG;sucA}.  This result is one of the more practically relevant case studies because it captures the intended regime of GC-CBO: direct interventions are unavailable, the intervention family is combinatorial, and related intervention functions share downstream mechanisms.

\begin{figure*}[t]
    \centering
    \includegraphics[width=.95\textwidth]{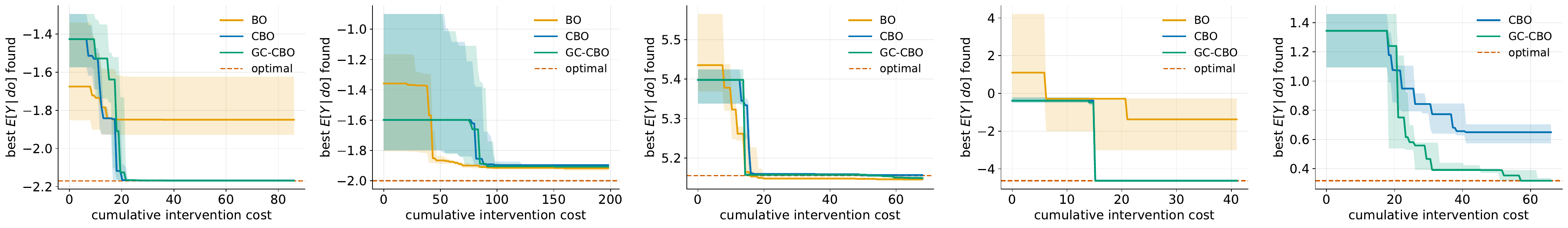}
    \caption{
    Convergence across benchmark scenarios: toy chain graph, synthetic DAG from \cite{aglietti2020causalbo}, healthcare PSA from \cite{aglietti2020causalbo}, ECOLI70 from \cite{scutari2010learning} with yaeM and b1583 as target variable, respectively.  GC-CBO preserves the gains of CBO on the toy chain, matches CBO on ECOLI70 parent interventions, remains competitive on nonlinear and noisy benchmarks, and improves over CBO in the no-parent ECOLI70 max-set-size-$5$ ancestor-intervention setting.
    }
    \label{fig:benchmark-convergence}
\end{figure*}

\begin{table*}[t]
\centering
\caption{Main benchmark results.  Lower values are better.  The strongest evidence for GC-CBO is the theory-aligned Gaussian setting and the transfer stress tests; the nonlinear and healthcare benchmarks are included as robustness checks.}
\label{tab:main-benchmarks}
\begin{tabular}{llccccc}
\toprule
Experiment & Method & Mean & Std. & Median & Total cost & Optimum \\
\midrule
Toy chain & BO & $-1.8224$ & $0.2357$ & $-1.8495$ & $86.0$ & $-2.1700$ \\
Toy chain & CBO & $-2.1687$ & $0.0009$ & $-2.1685$ & $46.0$ & $-2.1700$ \\
Toy chain & GC-CBO & $-2.1693$ & $0.0005$ & $-2.1692$ & $46.0$ & $-2.1700$ \\
\midrule
Synthetic DAG & BO & $-1.9216$ & $0.0139$ & $-1.9177$ & $198.0$ & $-2.0000$ \\
Synthetic DAG & CBO & $-1.9019$ & $0.0219$ & $-1.8971$ & $155.6$ & $-2.0000$ \\
Synthetic DAG & GC-CBO & $-1.9105$ & $0.0152$ & $-1.9072$ & $158.4$ & $-2.0000$ \\
\midrule
Healthcare PSA & BO & $5.1461$ & $0.0045$ & $5.1459$ & $66.0$ & $5.1550$ \\
Healthcare PSA & CBO & $5.1560$ & $0.0089$ & $5.1568$ & $65.6$ & $5.1550$ \\
Healthcare PSA & GC-CBO & $5.1488$ & $0.0053$ & $5.1488$ & $66.9$ & $5.1550$ \\
\midrule
ECOLI70 \texttt{yaeM} & BO & $-1.9102$ & $1.8054$ & $-1.3749$ & $33.0$ & $-4.6304$ \\
ECOLI70 \texttt{yaeM} & CBO & $-4.6304$ & $0.0000$ & $-4.6304$ & $38.6$ & $-4.6304$ \\
ECOLI70 \texttt{yaeM} & GC-CBO & $-4.6304$ & $0.0000$ & $-4.6304$ & $39.4$ & $-4.6304$ \\
\midrule
ECOLI70 \texttt{b1583} no-parent & CBO & $0.6680$ & $0.1550$ & $0.6500$ & $39.8$ & $0.3183$ \\
ECOLI70 \texttt{b1583} no-parent & GC-CBO & $0.3500$ & $0.0630$ & $0.3183$ & $63.4$ & $0.3183$ \\
\bottomrule
\end{tabular}
\end{table*}

Overall, the experiments support three conclusions.  First, in the linear-Gaussian setting where the assumptions are satisfied, GC-CBO produces the predicted finite-rank kernel, exact cross-intervention covariance, reduced information gain, and parameter concentration, as shown in Figure~\ref{fig:theory-diagnostics} and Table~\ref{tab:theory-summary}.  This is the strongest evidence that the proposed construction is mathematically and computationally faithful to the theory.  Second, in cross-set transfer problems, GC-CBO substantially improves over independent CBO because it shares information across intervention functions, as demonstrated in Figure~\ref{fig:stress-tests} and Table~\ref{tab:stress-results}.  This is the main practical advantage of the method.  Third, on small causal benchmarks, GC-CBO often matches rather than dramatically exceeds CBO, as reflected in Figure~\ref{fig:benchmark-convergence}, Table~\ref{tab:main-benchmarks}, Figure~\ref{fig:ecoli70-noparent}, and Tables~\ref{tab:ecoli70-noparent}--\ref{tab:ecoli70-ablation}.  This is expected: when the correct parent set is directly manipulable or the intervention family is small, independent CBO already has enough information.  The benefit of GC-CBO becomes most meaningful when direct interventions are unavailable, the exploration family is combinatorial, or sparse interventional data must be shared across related causal mechanisms.  The updated max-set-size-$5$ ECOLI70 result reinforces this interpretation: once the optimizer must search over upstream ancestor interventions of increasing order, GC-CBO improves both mean performance and stability relative to independent CBO, while the ablation study confirms that the causal prior and coupled UCB kernel are essential for making the no-parent search tractable.

\section{Conclusion}
\label{sec:conclusion}
We introduced graph-coupled causal Bayesian optimization, which ties the effects of different interventions together through the posterior uncertainty of a shared, identifiable causal parameterization. The resulting causal kernel is exact and finite-rank in identifiable linear Gaussian models, yielding a logarithmic information-gain bound and a regret decomposition into optimization, causal-estimation, and exploration-set terms. Nonlinear and adaptive extensions retain the graph-factorized principle under clearly stated, weaker guarantees. Empirically, the method preserves the strengths of causal BO and adds genuine transfer across related interventions, most visibly when direct parent interventions are off-limits and interventional data are sparse. Natural next steps include sharper analysis of the adaptive, time-varying kernel and learned causal features for strongly nonlinear mechanisms.

\bibliographystyle{plainnat}  
\bibliography{mybibfile}

@book{horn2013matrix,
  title={Matrix Analysis},
  author={Horn, Roger A. and Johnson, Charles R.},
  edition={2},
  year={2013},
  publisher={Cambridge University Press}
}

@inproceedings{aglietti2020causalbo,
  title={Causal {B}ayesian Optimization},
  author={Aglietti, Virginia and Lu, Xiaoyu and Paleyes, Andrei and Gonz{\'a}lez, Javier},
  booktitle={Proceedings of the 23rd International Conference on Artificial Intelligence and Statistics},
  pages={3155--3164},
  year={2020},
  organization={PMLR}
}

@book{pearl2009causality,
  title={Causality: Models, Reasoning, and Inference},
  author={Pearl, Judea},
  edition={2},
  year={2009},
  publisher={Cambridge University Press}
}

@inproceedings{shpitser2006identification,
  title={Identification of Joint Interventional Distributions in Recursive Semi-Markovian Causal Models},
  author={Shpitser, Ilya and Pearl, Judea},
  booktitle={Proceedings of the Twenty-First National Conference on Artificial Intelligence},
  year={2006}
}

@article{srinivas2012information,
  title={Information-Theoretic Regret Bounds for {G}aussian Process Optimization in the Bandit Setting},
  author={Srinivas, Niranjan and Krause, Andreas and Kakade, Sham M. and Seeger, Matthias},
  journal={IEEE Transactions on Information Theory},
  volume={58},
  number={5},
  pages={3250--3265},
  year={2012}
}

@book{rasmussen2006Gaussian,
  title={{G}aussian Processes for Machine Learning},
  author={Rasmussen, Carl Edward and Williams, Christopher K. I.},
  year={2006},
  publisher={MIT Press}
}

@inproceedings{vakili2021information,
  title={On Information Gain and Regret Bounds in {G}aussian Process Bandits},
  author={Vakili, Sattar and Khezeli, Kia and Picheny, Victor},
  booktitle={Proceedings of the 24th International Conference on Artificial Intelligence and Statistics},
  year={2021},
  organization={PMLR}
}

@article{shahriari2016taking,
  title={Taking the human out of the loop: A review of {B}ayesian optimization},
  author={Shahriari, Bobak and Swersky, Kevin and Wang, Ziyu and Adams, Ryan P and De Freitas, Nando},
  journal={Proceedings of the IEEE},
  volume={104},
  number={1},
  pages={148--175},
  year={2015},
  publisher={IEEE}
}

@article{frazier2018tutorial,
  title         = {A Tutorial on {B}ayesian optimization},
  author        = {Frazier, Peter I.},
  journal       = {arXiv preprint arXiv:1807.02811},
  year          = {2018},
  eprint        = {1807.02811},
  archivePrefix = {arXiv}
}

@book{garnett2023bayesian,
  title     = {{B}ayesian optimization},
  author    = {Garnett, Roman},
  year      = {2023},
  publisher = {Cambridge University Press}
}

@article{daulton2020differentiable,
  title={Differentiable expected hypervolume improvement for parallel multi-objective {B}ayesian optimization},
  author={Daulton, Samuel and Balandat, Maximilian and Bakshy, Eytan},
  journal={Advances in neural information processing systems},
  volume={33},
  pages={9851--9864},
  year={2020}
}

@article{daulton2021parallel,
  title={Parallel {B}ayesian optimization of multiple noisy objectives with expected hypervolume improvement},
  author={Daulton, Samuel and Balandat, Maximilian and Bakshy, Eytan},
  journal={Advances in neural information processing systems},
  volume={34},
  pages={2187--2200},
  year={2021}
}

@inproceedings{eriksson2021highdimensional,
  title={High-dimensional {B}ayesian optimization with sparse axis-aligned subspaces},
  author={Eriksson, David and Jankowiak, Martin},
  booktitle={Uncertainty in artificial intelligence},
  pages={493--503},
  year={2021},
  organization={PMLR}
}

@inproceedings{astudillo2022thinking,
  title={Thinking inside the box: A tutorial on grey-box {B}ayesian optimization},
  author={Astudillo, Raul and Frazier, Peter I},
  booktitle={2021 Winter Simulation Conference (WSC)},
  pages={1--15},
  year={2021},
  organization={IEEE}
}

@article{xie2024costaware,
  title={Cost-aware {B}ayesian optimization via the pandora's box gittins index},
  author={Xie, Qian and Astudillo, Raul and Frazier, Peter I and Scully, Ziv and Terenin, Alexander},
  journal={Advances in Neural Information Processing Systems},
  volume={37},
  pages={115523--115562},
  year={2024}
}

@article{xie2024globalacq,
  title         = {Global Optimization of {G}aussian Process Acquisition Functions Using a Piecewise-Linear Kernel Approximation},
  author        = {Xie, Yilin and Zhang, Shiqiang and Paulson, Joel A. and Tsay, Calvin},
  journal       = {arXiv preprint arXiv:2410.16893},
  year          = {2024},
  eprint        = {2410.16893},
  archivePrefix = {arXiv}
}

@inproceedings{branchini2022causal,
  title={Causal entropy optimization},
  author={Branchini, Nicola and Aglietti, Virginia and Dhir, Neil and Damoulas, Theodoros},
  booktitle={International Conference on Artificial Intelligence and Statistics},
  pages={8586--8605},
  year={2023},
  organization={PMLR}
}

@inproceedings{aglietti2023constrained,
  title={Constrained causal {B}ayesian optimization},
  author={Aglietti, Virginia and Malek, Alan and Ktena, Ira and Chiappa, Silvia},
  booktitle={International Conference on Machine Learning},
  pages={304--321},
  year={2023},
  organization={PMLR}
}

@inproceedings{gultchin2023functional,
  title={Functional causal {B}ayesian optimization},
  author={Gultchin, Limor and Aglietti, Virginia and Bellot, Alexis and Chiappa, Silvia},
  booktitle={Uncertainty in Artificial Intelligence},
  pages={756--765},
  year={2023},
  organization={PMLR}
}

@inproceedings{sussex2023adversarial,
  title={Adversarial causal {B}ayesian optimization},
  author={Sussex, Scott and Sessa, Pier Giuseppe and Makarova, Anastasia and Krause, Andreas},
  booktitle={International Conference on Learning Representations},
  volume={2024},
  pages={19332--19353},
  year={2024}
}

@article{ren2024exogenous,
  title         = {Causal {B}ayesian optimization via Exogenous Distribution Learning},
  author        = {Ren, Shaogang and Qian, Xiaoning},
  journal       = {arXiv preprint arXiv:2402.02277},
  year          = {2024},
  eprint        = {2402.02277},
  archivePrefix = {arXiv}
}

@article{bogunovic2021misspecified,
  title={Misspecified {G}aussian process bandit optimization},
  author={Bogunovic, Ilija and Krause, Andreas},
  journal={Advances in neural information processing systems},
  volume={34},
  pages={3004--3015},
  year={2021}
}

@inproceedings{kassraie2021neural,
  title={Neural contextual bandits without regret},
  author={Kassraie, Parnian and Krause, Andreas},
  booktitle={International Conference on Artificial Intelligence and Statistics},
  pages={240--278},
  year={2022},
  organization={PMLR}
}

@inproceedings{tran2022regret,
  title={Regret bounds for expected improvement algorithms in {G}aussian process bandit optimization},
  author={Gupta, Sunil and Rana, Santu and Venkatesh, Svetha and others},
  booktitle={International Conference on Artificial Intelligence and Statistics},
  pages={8715--8737},
  year={2022},
  organization={PMLR}
}

@article{iwazaki2026improved,
  title={Improved regret bounds for {G}aussian process upper confidence bound in {B}ayesian optimization},
  author={Iwazaki, Shogo},
  journal={Advances in Neural Information Processing Systems},
  volume={38},
  pages={96922--96964},
  year={2026}
}

@article{scutari2010learning,
  title={Learning Bayesian networks with the bnlearn R package},
  author={Scutari, Marco},
  journal={Journal of statistical software},
  volume={35},
  pages={1--22},
  year={2010}
}

@inproceedings{pfisterer2022yahpo,
  title={Yahpo gym-an efficient multi-objective multi-fidelity benchmark for hyperparameter optimization},
  author={Pfisterer, Florian and Schneider, Lennart and Moosbauer, Julia and Binder, Martin and Bischl, Bernd},
  booktitle={International Conference on Automated Machine Learning},
  pages={3--1},
  year={2022},
  organization={PMLR}
}

@article{hickman2025anubis,
  title={Anubis: Bayesian optimization with unknown feasibility constraints for scientific experimentation},
  author={Hickman, Riley J and Tom, Gary and Zou, Yunheng and Aldeghi, Matteo and Aspuru-Guzik, Al{\'a}n},
  journal={Digital Discovery},
  volume={4},
  number={8},
  pages={2104--2122},
  year={2025},
  publisher={Royal Society of Chemistry}
}

@article{liang2021benchmarking,
  title={Benchmarking the performance of {B}ayesian optimization across multiple experimental materials science domains},
  author={Liang, Qiaohao and Gongora, Aldair E and Ren, Zekun and Tiihonen, Armi and Liu, Zhe and Sun, Shijing and Deneault, James R and Bash, Daniil and Mekki-Berrada, Flore and Khan, Saif A and others},
  journal={npj Computational Materials},
  volume={7},
  number={1},
  pages={188},
  year={2021},
  publisher={Nature Publishing Group UK London}
}

@incollection{frazier2015bayesian,
  title={Bayesian optimization for materials design},
  author={Frazier, Peter I and Wang, Jialei},
  booktitle={Information science for materials discovery and design},
  pages={45--75},
  year={2015},
  publisher={Springer}
}

@article{do2023multi,
  title={Multi-fidelity {B}ayesian optimization in engineering design},
  author={Do, Bach and Zhang, Ruda},
  journal={arXiv preprint arXiv:2311.13050},
  year={2023}
}

@article{hossen2026multi,
  title={Multi-Objective Multi-Fidelity {B}ayesian Optimization with Causal Priors},
  author={Hossen, Md Abir and Javidian, Mohammad Ali and Narayanan, Vignesh and O'Kane, Jason M and Jamshidi, Pooyan},
  journal={arXiv preprint arXiv:2602.00788},
  year={2026}
}

@inproceedings{roberts2024causalbo,
  title={CausalBO: A Python Package for Causal {B}ayesian Optimization},
  author={Roberts, Jeremy and Javidian, Mohammad Ali},
  booktitle={SoutheastCon 2024},
  pages={1370--1375},
  year={2024},
  organization={IEEE}
}

@article{jacobs2026extending,
  title={Extending Multi-Source {B}ayesian Optimization With Causality Principles},
  author={Jacobs, Luuk and Javidian, Mohammad Ali},
  journal={arXiv preprint arXiv:2602.14791},
  year={2026}
}

\end{document}